\newcolumntype{x}[1]{>{\centering\let\newline\\\arraybackslash\hspace{0pt}}m{#1}}
\definecolor{DarkBlue}{rgb}{0.1,0.1,0.5}
\definecolor{DarkGreen}{rgb}{0.1,0.5,0.1}
\newtheoremstyle{thmstyle}
{0.5em} 
{0.15em} 
{} 
{} 
{\bfseries} 
{.} 
{.5em} 
{} 
\theoremstyle{thmstyle} 
\newtheorem{thm}{Theorem}
\newtheorem{lem}{Lemma}
\newtheorem{propn}{Proposition}[section]
\newtheorem{claim}{Claim}[section]
\newtheorem{cor}{Corollary}
\theoremstyle{definition}
\newtheorem{defn}{Definition}
\theoremstyle{remark}
\newtheorem{rem}{Remark}
\newcommand{\comment}[1]{}
\newlength\titlebox \setlength\titlebox{2.375in}
\def\addcontentsline#1#2#3{}
\def\footnoterule{\kern-3pt \hrule width 5pc \kern 2.6pt }
\def\toptitlebar{
\hrule height4pt
\vskip .25in}
\def\bottomtitlebar{
\vskip .25in
\hrule height1pt
\vskip .25in}
\DeclareMathOperator*{\argmax}{arg\,max}
\DeclareMathOperator*{\argmin}{arg\,min}
\newcommand{\1}{ \mathds{1}}
\newcommand{\E}{\mathbb{E}}
\newcommand{\I}{\mathcal{I}}
\newcommand{\J}{\mathcal{J}}
\newcommand{\N}{\mathbb{N}}
\renewcommand{\O}{\mathcal{O}}
\renewcommand{\P}{\mathcal{P}}
\newcommand{\prob}{\mathbb{P}}
\newcommand{\R}{\mathbb{R}}
\renewcommand{\S}{\mathcal{S}}
\newcommand{\Regret}{{\rm Regret}}
\newcommand{\CE}{\texttt{ALG-CE}}
\newcommand{\tr}{\top}
\newcommand{\event}[1]{\textrm{E}_{#1}}
\renewcommand{\baselinestretch}{1.1}
\title{\hsize\textwidth \linewidth\hsize \toptitlebar {\centering
{\Large\bfseries Intervention Efficient Algorithm for Two-Stage Causal MDPs}}
\bottomtitlebar \vskip 0.2in}
\author{Rahul Madhavan\thanks{Indian Institute of Science. {\tt mrahul@iisc.ac.in}} \qquad Aurghya Maiti\thanks{Adobe Research. {\tt aurghya.kgp@gmail.com}} \qquad Gaurav Sinha\thanks{Adobe Research. {\tt gasinha@adobe.com}} \qquad Siddharth Barman\thanks{Indian Institute of Science. {\tt barman@iisc.ac.in}}}
\date{\vspace{-2.5ex}}
\begin{document}
\maketitle
\begin{abstract}
We study Markov Decision Processes (MDP) wherein states correspond to causal graphs that stochastically generate rewards. In this setup, the learner's goal is to identify atomic interventions that lead to high rewards by intervening on variables at each state.
Generalizing the recent causal-bandit framework, the current work develops (simple) regret minimization guarantees for two-stage causal MDPs, with parallel causal graph at each state.
We propose an algorithm that achieves an instance dependent regret bound. A key feature of our algorithm is that it utilizes convex optimization to address the exploration problem. We identify classes of instances wherein our regret guarantee is essentially tight, and experimentally validate our theoretical results.
\end{abstract}

\section{Introduction}
	Recent years have seen an active interest in causal reinforcement learning. In this thread of work, a fundamental model is that of \emph{causal bandits} \cite{Bareinboim2015, Lattimore, Sen2017, LeeBarenBoim2018, Yabe2018, LeeBareinboim2019, Gaurav2020}. In the causal bandits setting, one assumes an environment comprising of causal variables that influence an outcome of interest; specifically, a reward. The goal of a learner then is to maximize her reward by \emph{intervening} on certain variables (i.e., by fixing the values of certain variables). Note that the reward is assumed to be dependent on the values that the causal variables take, and the causal variables themselves may influence each other. The relationship between these causal variables is typically expressed via a directed acyclic graph (DAG), which is referred to as the causal graph \cite{PearlBook}.
	
	Of particular interest are causal settings wherein the learner is allowed to perform \emph{atomic interventions}. Here, at most one causal variable can be set to a particular value, while other variables take values in accordance with their underlying distributions. Prominent results in the context of atomic interventions include \cite{Correa_Bareinboim_2020} and \cite{Bhattacharya2020}.

	It is relevant to note that when a learner performs an intervention in a causal graph, she gets to see the values that all the causal variables took. Hence,  the collective dependence of the reward on the variables is observed through each intervention. That is, from such an observation, the learner may be able to make inferences about the (expected) reward under other values for the causal variables \cite{PetersBook}. In essence, with a single intervention, the learner is allowed to intervene on a variable (in the causal graph), allowed to observe all other variables, and further, is privy to the effects of such an intervention. Indeed, such an observation in a causal graph is richer than a usual sample from a stochastic process. Hence, a standard goal in causal bandits (and causal reinforcement learning, in general) is to understand the power and limitations of interventions. This goal manifests in the form of developing algorithms that identify intervention(s) that lead to high rewards, while using as few observations/interventions as possible. We use the term \emph{intervention complexity} (rather than sample complexity) for our algorithm, to emphasize the point that in causal reinforcement learning one deals with a richer class of observations. 
	
	Addressing causal bandits, the notable work of Lattimore et al.~\cite{Lattimore} obtains an intervention-complexity bound with a focus on atomic interventions and parallel causal graphs. 
    While the causal bandit framework provides a meaningful (causal) extension of the classic multi-armed bandit setting, it is still not general enough to directly capture settings wherein one requires multiple states to model the environment. Specifically, causal bandits, as is, do not carry the save modelling prowess as Markov decision processes (MDP). Motivated, in part, by this consideration, recent works in causal reinforcement learning have generalized causal bandits to causal MDPs, see, e.g.,  \cite{Ambuj2020}.     
	
	The current work contributes to this thread of work and extends the causal bandit framework of Lattimore et al.~\cite{Lattimore}. In particular, we develop results for two-stage causal MDPs (see \hyperref[figure: two-stage MDP]{Figure \ref{figure: two-stage MDP}}). Such a setup is general enough to address the fact that underlying environment states can evolve, as in an MDP, while simultaneously utilizing (causal) aspects from the causal bandit setup.
	
	We now provide a stylized example that highlights the applicability of the two-stage model. Consider a patient who visits a doctor with certain medical issues. The patient may arrive with a combination of symptoms and lifestyle factors. Some of these may include immediate symptoms, such as fever, but may also include more complex lifestyle factors, e.g., a sedentary routine or smoking. On observing the patient and before prescribing an invasive procedure, the doctor may consider prescribing certain lifestyle changes or milder medicines. This initial intervention can then lead to the patient evolving to a new set of symptoms. At this point, with fresh symptoms and lifestyle factors (i.e., in the second stage of the MDP), the doctor can finalize a course of medication. Such an interaction can be modelled as a two-stage causal MDP, and is not directly captured by the causal bandit framework. Also, the outcome of whether the patient is cured, or not, corresponds to a 0-1 reward for the interventions chosen by the doctor.

    \subsection{Additional Related Work}
    An extension to the earlier literature on causal bandits---towards causal MDPs---was proposed by Lu et al.~\cite{Ambuj2020}. This work considers a causal graph at each state of the MDP. Furthermore, in this model, along with the rewards, the state transitions are also (stochastically) dependent on the causal variables. We address a similar model in our two-stage causal MDP, wherein the state transitions as well as the rewards are functions of the causal variables. 
    It is, however, relevant to note that in \cite{Ambuj2020} it is assumed that the MDP can be initialized to any state. The work of Azar et al.~\cite{azar2017minimax} also conforms to this assumption. Hence, while these two results address a more general MDP setup (than the two-stage one), their results are not directly applicable in the current context wherein the MDP always starts at a specific state and transitions based on the chosen interventions. Indeed, the assumption that the MDP can be initialized arbitrarily might not hold in real-world domains, such as the medical-intervention example mentioned above.   
    
    Sachidananda and Brunskill \cite{Sachidananda2017} propose a Thompson-Sampling based model in a causal bandit setting to minimize cumulative regret. Nair et al.~\cite{Gaurav2020} study the problem of online causal learning to minimize expected cumulative regret under the setting of no-backdoor graphs. They also supply an algorithm for expected simple regret minimization in the causal bandit setting with non-uniform costs associated with the interventions.
    
    Much of the literature in causal learning assumes the causal graph structure is known. In more general settings, learning the causal graph structure is an important sub-problem; for relevant contributions to the problem of causal graph learning see  \cite{Shanmugam2015,Shanmugam2017,Murat2017}, and references therein. Lu et al.~\cite{lu2021causal} and Maiti et al.~\cite{maiti2021causal} extend this to the causal bandit problem. Further, under many circumstances, the structure of the causal graph can be learnt externally, or via some form of hypothesis testing \cite{Arnab2018}.
    
	The current work contributes to the growing body of work on causal reinforcement learning by developing intervention-efficient algorithms for finding near-optimal policies. We focus on simple regret minimization (i.e., near optimal policy identification) in causal MDPs.
	
	\subsection{Our Contributions}
	
	Our main contributions are summarized next. 
	
	We formulate and study two-stage causal MDPs, which encompass many of the issues that arise when considering extensions from bandits to general MDPs. At the same time, the current setup is structured enough to be amenable to a thorough analysis. 
	A notable feature of our setting is that we do not assume that the learner has ready access to all the states, and has to rely on the transitions to reach certain states. 
	
    Here, we develop and analyze an algorithm for finding (near) optimal intervention policies. The algorithm's objective is to minimize simple regret in an intervention efficient manner. We focus on causal MDPs wherein the nonzero transition probabilities are sufficiently high and show that, interestingly, the intervention complexity of our algorithm depends on an instance dependent structural parameter---referred to as $\lambda$ (see equation (\ref{eqn:lambda}))--- rather than directly on the number of interventions or states (Theorem \ref{theorem:main}).
	
	Notably, our algorithm uses a convex program to identify optimal interventions. Using convex optimization to design efficient explorations is a distinguishing feature of the current work. The algorithm spends some time of the given budget learning the MDP parameters (e.g., the transition probabilities). After this, it solves an optimization problem to design efficient exploration of the causal graphs at various states. Such an optimization problem gives rise to the structural parameter, $\lambda$, of the causal MDP instance. We note that the parameter $\lambda$ can be significantly smaller than, say, the total number of interventions in the causal MDP, as demonstrated by our experiments (see Section \ref{section: experiments}).

	In fact, we provide a lower bound showing that our algorithm's regret guarantee is tight (up to a log factor) for certain classes of two-stage causal MDPs (see Section \ref{section: lower bounds}).
	
	\section{Notation and Preliminaries}
	\label{section: notation and preliminaries}
	We consider a Markov decision process (MDP) that starts at state $0$, transitions to one of $k$ states $[k] =\{1,\dots,k\}$, receives a reward, and then finally terminates at state $S_t$; see \hyperref[figure: two-stage MDP]{Figure \ref{figure: two-stage MDP}}. At each state $\{0,1,\dots,k\}$, there is a causal graph along the lines of the ones studied in \cite{Lattimore}; see \hyperref[figure: causal graphs]{Figure \ref{figure: causal graphs}}. In particular, at state $i \in \{0,1,\ldots, k\}$, the causal graph is composed of $n$ independent Bernoulli  variables $\{X^i_1,\dots,X^i_n\}$. For each $X^i_j \in \{0,1\}$, the associated probability $q^i_j := \prob\left\{X^i_j = 1\right\}$. 
	
	\begin{figure}[t]
     \centering
     \begin{subfigure}[t]{0.48\textwidth}
         \centering
         \includegraphics[width=\textwidth]{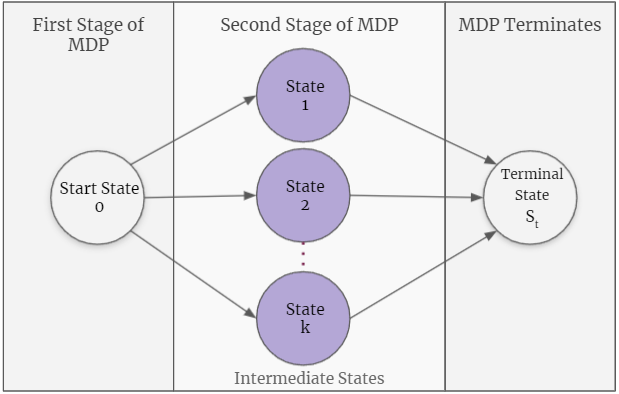}
         \caption{Illustrative figure for two-stage MDP}
         \label{figure: two-stage MDP}
     \end{subfigure}
     \hfill
     \begin{subfigure}[t]{0.48\textwidth}
         \centering
         \includegraphics[width=\textwidth]{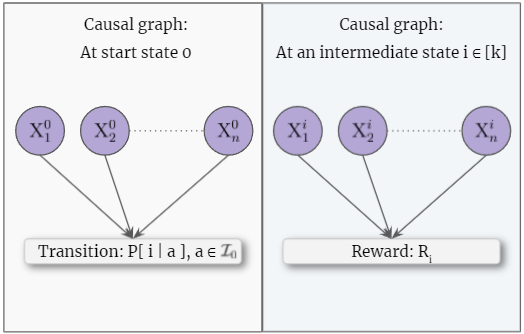}
         \caption{Illustrative Figure for Causal Graph at start state and at some intermediate stage $i\in[k]$}
         \label{figure: causal graphs}
     \end{subfigure}
        \caption{Illustrations for Causal MDP}
        \label{fig:Illustrations for setup}
    \end{figure}
\comment{
	\begin{figure}[t]
		\includegraphics[width=0.5\linewidth]{MDPFigure.png}
		\caption{Illustrative figure for two-stage MDP}
		\label{figure: two-stage MDP}
	\end{figure}
	\begin{figure}[t]
		\includegraphics[width=0.5\linewidth]{CausalGraphsIllustration.png}
		\caption{Illustrative Figure for Causal Graph at start state and at some intermediate stage $i\in[k]$}
		\label{figure: causal graphs}
	\end{figure}
}
	In the MDP, for each state $i \in \{0,\dots, k\}$, all the variables $X^i_1, \dots, X^i_n$, are observable. Furthermore, we are allowed atomic interventions, i.e., we can select \emph{at most} one variable and set it to either $0$ or $1$. We will use $\I_i$ to denote the set of atomic interventions available at state $i\in \{0, \ldots,k\}$; in particular, $\I_i = \left\{do() \right\} \cup \left\{do(X^i_j=0), do(X^i_j=1) \ : \ j \in [n] \right\}$. We note that $do()$ is an empty intervention that allows all the variables to take values from their underlying (Bernoulli) distributions. Also, $do(X^i_j=0)$ and $do(X^i_j=1)$ set the value of variable $X^i_j$ to $0$ and $1$, respectively, while leaving all the other variables to independently draw values from their respective distributions. Note that for all $ i \in [k]$, we have $\lvert \I_i \rvert = 2n+1$. Write $N:= 2n+1$.
	
	The model provides us with a $\{0,1\}$ reward as we transition to the terminal state $S_t$ from an intermediate state. Depending on the state $i\in[k]$, from where we transition to $S_t$, we label the reward as $R_i$. Note that the reward $R_i$ stochastically depends on the variables $X^i_1,\dots, X^i_n$; in particular, for all $j \in [n]$ and each realization $X^i_j = x_j \in \{0,1\}$, the reward $R_i$ is distributed as $\prob\left\{R_i=1 \mid X^i_1 = x_1, \ldots, X^i_n = x_n \right\}$. Extending this, we will write $\E[R_i \mid a]$ to denote the expected value of reward $R_i$ when intervention $a \in \I_i$ is performed in state $i \in [k]$. For instance, $\E[R_i \mid do(X^i_j=1)]$ is the expected reward when variable $X^i_j$ is set to $1$, and all the other variables independently draw values from their respective distributions.

	Note that, across the states, the probabilities $q^i_j$s and the reward distributions are fixed but unknown. Indeed, the high-level goal of the current work is to develop an algorithm that---in a sample efficient manner---identifies interventions that maximize the expected rewards.
	
	We denote by $m_i$, the causal parameter from \cite{Lattimore} at state $i$. This parameter is a crucial factor in the regret bound obtained by \cite{Lattimore}. Formally, at state $i$, we consider the Bernoulli probabilities of the variables in increasing order, $q^i_{(1)} \leq q^i_{(2)} \leq \ldots \leq q^i_{(n)}$, and write $m_i := \max \left\{j \enspace\mid\enspace q^i_{(j)} < {1}/{j} \right\}$. In addition, let $M\in\N^{k\times k}$ denote the diagonal matrix of $m_1, \ldots, m_k$.  
	\begin{rem}
		\label{rem: computing mi values}
		The probabilities $q^i_j$s are a priori unknown. It is, however, instructive to consider the computation of $m_i$ from $q^i_j$s: (1) Without loss of generality, assume that $\prob\left\{X_j^i = 1\right\} \leq \prob\left\{X_j^i = 0\right\}$ (otherwise consider the lesser of the two quantities as $q_j^i$) (2) Sort the $q_j^i$s in increasing order (3) Compute $m_i = \max\{ j \enspace\mid\enspace q_j^i < 1/j \}$ and write $\I_{m_i} := \{do(X_j^i=1) \enspace:\enspace  q_j^i < 1/j \}$
	\end{rem}

	\textbf{MDP Notations:} At state $0$, the transition to the intermediate states $[k]$ stochastically depends on the independent Bernoulli random variables $\{X^0_1,\dots,X^0_n\}$. Here, $\prob \{i \enspace\mid\enspace a\}$ denotes the probability of transitioning into state $i \in [k]$ with atomic intervention atomic intervention $a \in \I_0$; recall that $\I_0$ includes the do-nothing intervention. We will collectively denote these transition probabilities as matrix $P:= \left[ P_{(a,i)} = \prob \{ i \enspace\mid\enspace a \} \right]_{a \in \I_0 , i\in[k]}$. Furthermore, write $p_+$ to denote the minimum non-zero value in $P$. Note that matrix $P \in \R^{\lvert \I_0 \rvert \times k}$ is fixed, but unknown.

	A map $\pi:\{0,\dots,k\} \to \I$, between states and interventions (performed by the algorithm), will be referred to as a policy. Specifically, $\pi(i) \in \I_i$ is the intervention at state $i\in\{0,1, \ldots, k\}$. Note that, for any policy ${\pi}$, the expected reward is equal to $\sum_{k=1}^n \E \left[R_i \ \mid \ {\pi}(i) \right] \cdot \prob \{ i \ \mid \ \pi(0) \}$. 
	
	Maximizing expected reward, at each intermediate state $i \in [k]$, we obtain the overall optimal policy $\pi^*$ as follows: $\pi^*(i) = \argmax_{a \in \I_i} \ \E \left[ R_i \mid a\right]$, for $i \in [k]$, and $\pi^*(0) = \argmax_{b \in\I_0} \left( \sum_{i=1}^k \  \E \left[R_i\mid \pi^*(i) \right] \cdot \prob \{ i\mid b \} \right)$.
	
	Our goal is to find a policy $\pi$ with (expected) reward as close to that of $\pi^*$ as possible. We will use $\varepsilon(\pi)$ to denote the sub-optimality of a policy $\pi$; in particular, $\varepsilon(\pi)$ is defined as the difference between the expected rewards of $\pi^*$ and $\pi$.  
	
	Conforming to the standard \emph{simple-regret} framework, the algorithm is given a time budget $T$, i.e., the algorithm can go through the two-stages of the MDP $T$ times. In each of these $T$ rounds, the algorithm can perform the atomic interventions of its choice (both at state $0$ and then at the resulting intermediate state). The overall goal of the algorithm is to compute a policy with high expected reward and the algorithm's sub-optimality is defined as its regret, $\Regret_T := \E [ \varepsilon (\widehat{\pi})]$. Here, the expectation is with respect to the policy $\widehat{\pi}$ computed by the algorithm; indeed, given any two-stage causal MDP instance and time budget $T$ to an algorithm, different policies $\widehat{\pi}$s will have potentially different probabilities of being returned.

	\begin{table}[ht]
	\small
		\caption{MDP Notations} \label{table: MDP notations table}
		\begin{center}
			\begin{tabular}{|x{1.7cm}|x{5.75cm}|}
				\hline\hline
				Notation & Explanation\\
				\hline
				$P\in\R^{N\times k}$& Transition probabilities matrix: $P:= \left[ P_{(a,i)} = \prob\{ i \enspace \mid \enspace a \} \right]_{a \in \I_0, i\in[k] }$\\
				\hline
				$p_+$ & $p_+ = \min\{P_{(a,i)} \mid  P_{(a,i)} > 0\}$\\
				\hline
				$\pi:\S \to \I$ &Policy, a map from states to interventions.\newline i.e. $\pi(i)\in\I_i$ for $i\in \{0\}\cup [k]$\\
				\hline
				$\E\left[R_i  \mid  \pi(i) \right]$ & Expectation of the reward at state $i$ given intervention $\pi(i)$  \\
				\hline
				$\pi^*$&Optimal Policy \\ 
				$\widehat{\pi}$& Computed policy\\
				\hline
				$\varepsilon(\pi)$ & Sub-optimality of $\pi$ \\
				$\Regret_T$ & $\E[\varepsilon(\widehat{\pi})]$ \\
				\hline
				\hline
			\end{tabular}
		\end{center}
	\end{table}
	
	\section{Main Algorithm and its Analysis}
	Our algorithm (\hyperref[alg:best policy generator]{\CE}) uses subroutines to estimate the transition probabilities, the causal parameters,  and the rewards. From these, it outputs the best available interventions as its policy $\widehat{\pi}$. Given time budget $T$, the algorithm uses the first $T/3$ rounds to estimate the transition probabilities (i.e., the matrix $P$) in \hyperref[alg:estimateTransitionProbabilities]{Algorithm \ref{alg:estimateTransitionProbabilities}}. The subsequent $T/3$ rounds are utilized in \hyperref[alg:estimateCausalParameters]{Algorithm \ref{alg:estimateCausalParameters}} to estimate  causal parameters $m_i$s. Finally, the remaining budget is used in \hyperref[alg:estimateRewards]{Algorithm \ref{alg:estimateRewards}} to estimate the intervention-dependent reward $R_i$s, for all intermediate states $i\in[k]$.
	
	\begin{figure}[!t]
	\begin{minipage}[t]{0.99\textwidth}
	\begin{minipage}[t]{0.48\textwidth}
	\begin{algorithm}[H]
	\small
		\caption{$\CE$: Convex Exploration\\ Algorithm}
		\label{alg:best policy generator}
		\begin{algorithmic}[1] 
			\State \textbf{Input:} Total rounds $T$
			\State $\widehat{P} \gets \hyperref[alg:estimateTransitionProbabilities]{\texttt{Estimate Transition Probabilities}}(\frac{T}{3})$
			\State $ \tilde{f} \gets \argmax\limits_{\text{fq.~vector } f } \enspace \min\limits_{\text{states [k]}}  \widehat{P}^\tr  f $
			\State $\hat{M} \gets \hyperref[alg:estimateCausalParameters]{\texttt{Estimate Causal Parameters}}(\tilde{f},\frac{T}{3})$
			\State $\widehat{f}^* \gets \argmin\limits_{\text{fq.~vector } f } \enspace \max\limits_{\text{interventions } \mathcal{I}_0} \widehat{P}\hat{M}^{1/2}\left(\widehat{P}^\tr f \right)^{\circ-\frac{1}{2}}$ \label{step:fstar}
			\Statex $\triangleright$ This is a \hyperref[lem:optimization problem is convex]{Convex Program}
			\vspace{0.05in}
			\State $\widehat{\mathcal{R}} \gets \hyperref[alg:estimateRewards]{\texttt{Estimate Rewards}}(\widehat{f}^*,\tilde{f},\frac{T}{3})$
			\State Estimate optimal policy $\widehat{\pi}(i)\enspace \forall i \in [k]$ based on $\widehat{\mathcal{R}}$
			\State Estimate $\widehat{\pi}(0)$ from $\widehat{P}$ and $\widehat{\pi}(i)\enspace \forall i\in[k]$
			\State \textbf{return} $\widehat{\pi} = \{\widehat{\pi}(0),\widehat{\pi}(1),\dots,\widehat{\pi}(k)\}$
		\end{algorithmic}
	\end{algorithm}
	\end{minipage}
    \hfill
    \begin{minipage}[t]{0.48\textwidth}
	\begin{algorithm}[H]
	\small
		\caption{Estimate Transition Probabilities}
		\label{alg:estimateTransitionProbabilities}
		\begin{algorithmic}[1] 
			\State \textbf{Input:} Time budget $T'$
			\For{time $t\gets \{1,\dots,\frac{T'}{2}\}$}
			\State Perform $do()$ at state $0$. Transition to $i\in [k]$
			\State Count number of times state $i\in[k]$ is observed
			\State Update $\widehat{q}_j^0 = \prob\left\{X_j^0 = 1\right\}$ 
			\EndFor
			\State Using $\widehat{q}_j^0$s, estimate\footnotemark[1] $m_0$ and the set $\I_{m_o}$		
			\Statex $\triangleright$ Note that $\I_0 = \I_{m_0}\cup \I_{m_0}^c$
			\State Estimate $\widehat{P}_{(a,i)}=\prob[i \mid  a]$ for each $a \in \I_{m_0}^c,\enspace i\in[k]$
			\For{intervention $a \in \I_{m_o}$ at state $0$}
			\For{time $t \gets \{1,\dots \frac{T'}{2 \lvert\I_{m_0}\rvert}\}$}
			\State Perform $a\in\I_{m_o}$. Transition to some $i\in[k]$
			
			\label{step:alg2step10}
			\State Count number of times state $i$ is observed
			\EndFor
			\EndFor
			\State Estimate $\widehat{P}_{(a,i)}=\prob[i \mid  a]$ for each $a \in \I_{m_0},\enspace i\in[k]$
			\State \textbf{return} Estimated matrix $\widehat{P}=\left[ \widehat{P}_{(a,i)}\right]_{i \in [k],a \in \I_0}$
	\end{algorithmic}
	\end{algorithm}
	\end{minipage}
	\end{minipage}
	\end{figure}
	
	To judiciously explore the interventions at state $0$, \hyperref[alg:best policy generator]{\CE} computes frequency vectors $f\in\R^{\lvert  \I_0 \rvert}$. In such vectors, the $a$th component $f_a \geq 0$ denotes the fraction of time that each intervention $a \in \I_0$ is performed by the algorithm, i.e., given time budget $T'$, the intervention $a$ will be performed $f_a T'$ times. Note that, by definition, $\sum_a f_a = 1$ and the frequency vectors are computed by solving convex programs over the estimates. The algorithm and its subroutines throughout consider empirical estimates, i.e., find the estimates by direct counting. Here, let $\widehat{P}$ denote the computed estimate of the matrix $P$ and $\hat{M}$ be the estimate of the diagonal matrix $M$.  We obtain a regret upper bound via an optimal frequency vector $\widehat{f}^*$ (see Step \ref{step:fstar} in \hyperref[alg:best policy generator]{\CE}).

	Recall that for any vector $x$ (with non-negative components), the Hadamard exponentiation ${\circ-\frac{1}{2}}$ leads to the vector $y = x^{\circ-\frac{1}{2}}$ wherein $y_i = \frac{1}{\sqrt{x_i}}$ for each component $i$.  
	
	We next define a key parameter $\lambda$ that specifies the regret bound in Theorem \ref{theorem:main} (below).
	\begin{align}
		\lambda := \min_{\text{fq.~vector} f} \ \left\lVert PM^{1/2}  \left(P^\tr  f \right)^{\circ-\frac{1}{2}} \right\rVert_\infty^2 \label{eqn:lambda}
	\end{align}
	Furthermore, we will write $f^*$ to denote the optimal frequency vector in equation (\ref{eqn:lambda}). Hence, with vector $\nu := P{M}^{1/2}  \left(P^\tr  f^* \right)^{\circ-\frac{1}{2}}$, we have $\lambda = \max_a  \nu_a^2$. Note that  Step \ref{step:fstar} in \hyperref[alg:best policy generator]{\CE} addresses an analogous optimization problem, albeit with the estimates $\widehat{P}$ and $\hat{M}$. Also, we show in \hyperref[lem:optimization problem is convex]{Lemma \ref{lem:optimization problem is convex}} (see Section \ref{section: nature of optimization problems}) that this optimization problem is convex and, hence, Step \ref{step:fstar} admits an efficient implementation. 
	
	\footnotetext[1]{See \hyperref[rem: computing mi values]{Remark \ref{rem: computing mi values}
	}}
	\addtocounter{footnote}{1}

    \begin{figure}[!t]
	\begin{minipage}[t]{0.99\textwidth}
	\begin{minipage}[t]{0.48\textwidth}
	\begin{algorithm}[H]
	\small
		\caption{Estimate Causal Parameters}
		\label{alg:estimateCausalParameters}
		\begin{algorithmic}[1] 
			\State \textbf{Input:} Frequency vector $\tilde{f}$ and time budget $T'$
			\State Update $f(a) \gets \frac{1}{2}\left(\tilde{f}(a)+\frac{1}{\lvert  \I_0 \rvert}\right)\enspace \forall a\in\I_0$ \For{intervention $a \in \I_0$}
			\For{time $t \gets \{1,\dots T' \cdot f(a)\}$}
			\State Perform $a\in \I_0$. Transition to some $i\in[k]$
			\State At state $i$, perform $do()$ and observe $X^i_j$s
			\State Update $\widehat{q}_j^i = \prob\left\{X_j^i = 1\right\}$ 
			\EndFor
			\EndFor
			\State Using $\widehat{q}_j^i$s, estimate $\widehat{m}_i$ values\footnotemark[1] for each state $i \in [k]$
			\State \textbf{return} $\hat{M}$, the diagonal matrix of the $\widehat{m}_i$ values
		\end{algorithmic}
	\end{algorithm}
	\end{minipage}
    \hfill
    \begin{minipage}[t]{0.48\textwidth}
	\begin{algorithm}[H]
	\small
		\caption{Estimate Rewards}
		\label{alg:estimateRewards}
		\begin{algorithmic}[1] 
			\State \textbf{Input:} Optimal frequency $f^*$, min-max frequency $\tilde{f}$, and time budget $T'$
			\State Set $f(a) \gets \frac{1}{3}\left(f^*(a)+\tilde{f}(a)+\frac{1}{\lvert  \I_0 \rvert}\right)\enspace \forall a\in\I_0$
			\For{intervention $a \in \I_0$ at state $0$}
			\For{time $t \gets \{1,\dots f(a)\cdot T'/2\}$}
			\State Perform $a\in\I_0$. Transition to some $i\in[k]$
			\State Perform $do()$ at $i\in[k]$.Observe $X^i_j$'s and $R_i$
			\EndFor
			\EndFor
			
			\State Find the set $\I_{m_i} \enspace \forall i\in[k]$ using $q^i_j$ estimates\footnotemark[1]. 
			\Statex $\triangleright\enspace \I_{m_i} \cup \I_{m_i}^c = \I_{i}$
			\State For all $b \in \I_{m_i}^c$, estimate reward $\widehat{\mathcal{R}}_{(b,i)} = \E\left[R_i \mid b\right] $  
			\For{intervention $a \in \I_0$ at state $0$}
			\For{time $t \gets \{1,\dots f(a)\cdot T'/2\}$}
			\State Perform $a\in\I_0$. Transition to some $i\in[k]$
			\State Iteratively perform $b\in\I_{m_i}$ and observe $R_i$
			\Statex $\qquad\quad\triangleright$ round robin over $b\in\I_{m_i}$ across visitations
			\EndFor
			\EndFor
			
			\State Estimate mean reward $\widehat{\mathcal{R}}_{(b,i)} = \E\left[R_i  \mid  b\right] \ \ \forall b \in \I_{m_i}$ 
			\State \textbf{return} $\widehat{\mathcal{R}} = \left[ \widehat{\mathcal{R}}_{(b,i)}\right]_{i \in [k],b \in \I_i}$ 
		\end{algorithmic}
	\end{algorithm}
	\end{minipage}
	\end{minipage}
	
	\end{figure}

	The following theorem is the main result of the current work. It upper bounds the regret of \hyperref[alg:best policy generator]{\CE}. The result requires the algorithm's time budget to be at least 
	\begin{align}
	    \label{eqn: T0}
	    T_0 := \widetilde{O}\left(\frac{N\max(m_i)}{p_+^3}\right)
	\end{align}

	\begin{thm} \label{theorem:main}
		Let parameter $\lambda$ be as defined in equation (\ref{eqn:lambda}). Then, given number of {rounds} $T \geq T_0$, \hyperref[alg:best policy generator]{\CE} achieves regret 
		$$\Regret_T \in \O\left(\sqrt{\frac{1}{T}\max\left\{\lambda,\frac{m_0}{p_+}\right\}\log\left(NT\right)}\right)$$
		
	\end{thm}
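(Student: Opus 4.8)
The plan is to condition on a high-probability \emph{clean event} on which every empirical quantity produced by the three subroutines is accurate, and then bound the deterministic regret incurred on that event. Concretely, I would first argue that once $T \ge T_0$ the budget $T/3$ spent in Algorithm \ref{alg:estimateTransitionProbabilities} suffices so that (i) $|\widehat P_{(a,i)} - P_{(a,i)}|$ is small relative to $p_+$ on the support of $P$, so that zero and nonzero entries are correctly separated, and (ii) the $\widehat q^0_j$ are sharp enough that the recovered $\widehat m_0$ and set $\I_{m_0}$ equal their true values. Analogously, the $T/3$ rounds of Algorithm \ref{alg:estimateCausalParameters} give $\widehat m_i = m_i$ for all $i\in[k]$; this exact recovery is precisely where the threshold $T_0 = \widetilde O\big(N\max_i m_i / p_+^3\big)$ is needed, since distinguishing $q^i_{(j)} < 1/j$ from $q^i_{(j)} \ge 1/j$ demands resolution $\approx 1/n$ per variable, and each state must be visited enough times, which the min-max frequency $\tilde f$ and the floor $p_+$ guarantee. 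I would collect these into a single event of probability $1 - 1/\mathrm{poly}(T)$ via Hoeffding/Bernstein bounds and a union bound over the $O(Nk)$ estimated quantities, absorbing the union bound into the $\log(NT)$ factor.

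The second step is the regret decomposition. Writing $V(\pi) = \sum_i \E[R_i\mid\pi(i)]\,P_{(\pi(0),i)}$, I would split $\varepsilon(\widehat\pi) = V(\pi^*) - V(\widehat\pi)$ into a second-stage term $\sum_i\big(\E[R_i\mid\pi^*(i)] - \E[R_i\mid\widehat\pi(i)]\big)P_{(\pi^*(0),i)}$ and a first-stage term comparing $\widehat\pi(0)$ against the action optimal for the \emph{computed} second-stage policy. For the second-stage term the standard empirical-best argument bounds each per-state gap by $2\epsilon_i$, where $\epsilon_i$ is the worst reward-estimation error at state $i$. Here I would invoke the parallel-bandit analysis of Lattimore et al.~\cite{Lattimore}: with $N_i \approx (P^\tr f)_i\,(T/3)$ effective visits to state $i$, split between do()-observations that estimate the common interventions in $\I_{m_i}^c$ and round-robin direct interventions for the rare ones in $\I_{m_i}$, one obtains $\epsilon_i = O\!\big(\sqrt{m_i \log(NT)/N_i}\big)$.

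The crux is then the algebraic identity linking these per-state errors to $\lambda$. Summing, $\sum_i \epsilon_i P_{(a,i)} \approx \sqrt{\tfrac{\log(NT)}{T}}\,\big(PM^{1/2}(P^\tr f)^{\circ -1/2}\big)_a = \sqrt{\tfrac{\log(NT)}{T}}\,\nu_a$, so the worst first-stage action contributes $\max_a \nu_a$; since the convex program of Step \ref{step:fstar} computes exactly the minimizer of $\|PM^{1/2}(P^\tr f)^{\circ -1/2}\|_\infty$, whose squared value is $\lambda$, this yields the $\sqrt{\lambda \log(NT)/T}$ contribution. The first-stage term contributes $\sqrt{(m_0/p_+)\log(NT)/T}$: selecting the best $a\in\I_0$ requires estimating $\sum_i \E[R_i\mid\widehat\pi(i)]P_{(a,i)}$, which is again a parallel-bandit estimation problem at state $0$ with causal parameter $m_0$, and the $1/p_+$ reflects the resolution to which the transition probabilities (hence each action's induced downstream value) must be known. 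Combining the two via $\sqrt a + \sqrt b \le 2\sqrt{\max\{a,b\}}$ produces the stated $\O\big(\sqrt{\tfrac1T\max\{\lambda, m_0/p_+\}\log(NT)}\big)$ bound.

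The step I expect to be the main obstacle is showing that running the algorithm with the empirical optimizer $\widehat f^*$ (from $\widehat P, \hat M$) rather than the ideal $f^*$ (from $P,M$) degrades the achieved objective by only a lower-order, constant-factor amount. Because the objective involves the Hadamard inverse square root $(P^\tr f)^{\circ -1/2}$, a small $(P^\tr f)_i$ can blow the value up; this is exactly why the subroutines mix in the exploratory frequencies $\tilde f$ and $\tfrac{1}{|\I_0|}\mathbf 1$, which force $(P^\tr f)_i$ above a fixed floor (so the map is Lipschitz in the perturbation) while increasing each $\epsilon_i$ by at most a constant. Establishing this stability---combining the convexity from Lemma \ref{lem:optimization problem is convex}, the lower bound on $(\widehat P^\tr f)_i$ induced by $\tilde f$, and the exact recovery $\hat M = M$ on the clean event---and verifying that the resulting multiplicative slack is absorbed into constants once $T\ge T_0$, is the technically delicate part.
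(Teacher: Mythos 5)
Your overall architecture matches the paper's: a high-probability clean event (the paper's \texttt{good event}, Definition \ref{defn: good events}), a deterministic regret bound conditional on it via the empirical-best-policy argument (Lemmas \ref{lem: difference under pi-star is small} and \ref{lem: difference under widehat-pi is small}), the identification of $\sum_i \widehat{P}_{(a,i)}\widehat{\eta}_i$ with the vector norm defining $\lambda$, and a union bound contributing the additive $5k/T$ term. However, there is one step that is genuinely wrong as stated: you claim the clean event includes \emph{exact} recovery $\widehat m_i = m_i$ and $\I_{\widehat m_i} = \I_{m_i}$, and that $T_0$ suffices for this because one needs resolution $\approx 1/n$ per variable. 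Exact recovery is impossible in general -- the $q^i_{(j)}$ can sit arbitrarily close to the thresholds $1/j$, so no finite budget separates them -- and even under a margin assumption it would cost $\Omega(n^2)$ samples, whereas $T_0$ is only linear in $N$. The paper instead requires only the multiplicative guarantee $\widehat m_i \in [\tfrac{2}{3}m_i, 2m_i]$ (events $\event 2$, $\event 3$), imported from Lemma 8 of \cite{Lattimore}, and the entire downstream analysis is written to tolerate constant-factor errors in $\hat M$ (and in $\widehat P$, via Corollary \ref{cor: corollary to E1 as a multiplicative bound}). Your proof would need to be rewritten around approximate recovery; the good news is that nothing else in your outline actually uses exactness.

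The second issue is that you flag the comparison between the empirical optimizer $\widehat f^*$ and the ideal $f^*$ as the ``technically delicate part'' and sketch a Lipschitz-stability argument for the Hadamard inverse square root, but you do not carry it out. The paper resolves this in three lines (Lemma \ref{lem: Bounding lambda hat by lambda}): since $\widehat f^*$ minimizes the \emph{empirical} objective, $\widehat\lambda \le \lVert \widehat P \hat M^{1/2}(\widehat P^\tr f^*)^{\circ-1/2}\rVert_\infty^2$, and then the component-wise sandwich bounds $\tfrac{2}{3}P \le \widehat P \le \tfrac{4}{3}P$ and $\hat M \le 2M$ give $\widehat\lambda \le 8\lambda$ directly. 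No continuity or floor on $(P^\tr f)_i$ is needed for this step; the mixing with $\tilde f$ and the uniform vector serves instead to guarantee minimum visitation counts so that the concentration bounds behind $\event 3$ and $\event 5$ apply. You should either adopt this shorter argument or complete the stability argument you propose; as written, the key quantitative link between the achieved bound and $\lambda$ is left unproven.
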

	\subsection{Proof of Theorem 1}
	\label{section: proof of theorem 1}
	We prove the theorem, we analyze the algorithm's execution as falling under either \texttt{good event} or \texttt{bad event}, and tackling the regret under each. 
	\begin{defn}
		\label{defn: good events}
		We define five events, $ \event 1 $ to $ \event 5$, the intersection of which we call as \texttt{good event}, $\textrm{E}$, i.e., $\texttt{good event } E \equiv \bigcap_{i\in[5]}  \event i $.
		\begin{enumerate}[label=E\arabic*.]
			\item[$\text{E}_1$:] $\sum\limits_{i=1}^k \lvert \widehat{P}_{(a,i)} - P_{(a,i)}\rvert \leq \frac{ p_+ }{3}\quad \forall a \in \I_0$. That is, for every intervention $a\in \I_0$, the empirical estimate of transition probability in each of \hyperref[alg:estimateTransitionProbabilities]{Algorithms \ref{alg:estimateTransitionProbabilities}},  \hyperref[alg:estimateCausalParameters]{ \ref{alg:estimateCausalParameters}} and \hyperref[alg:estimateRewards]{\ref{alg:estimateRewards}} is good, up to an absolute factor of $p_+/3$. 
			\item[$\text{E}_2$:] Estimate $\widehat{m}_0 \in [\frac{2}{3}m_0,2m_0]$ in \hyperref[alg:estimateTransitionProbabilities]{Algorithm \ref{alg:estimateTransitionProbabilities}}. In other words, our estimate for causal parameter $m_0$ for state 0 in Algorithm \ref{alg:estimateTransitionProbabilities} is relatively good. 
			\item[$\text{E}_3$:] $\widehat{m}_i \in [\frac{2}{3}m_i,2m_i]$, for all states $i\in[k]$. That is,  our estimate of $m_i$ parameter is relatively good for every state $i\in[k]$, in \hyperref[alg:estimateCausalParameters]{Algorithms \ref{alg:estimateCausalParameters}} and  \hyperref[alg:estimateRewards]{\ref{alg:estimateRewards}}.
			\item[$\text{E}_4$:] $\sum_{i\in[k]} \lvert \widehat{P}_{(a,i)} - P_{(a,i)} \rvert  \leq \eta'$, for all interventions $a \in \I_0$. Here, random variable $\eta' := \sqrt{\frac{150m_0}{Tp_+}\log\left(\frac{3T}{k}\right)}$ and $\widehat{P}_{(a,i)}$ is the estimated transition probability computed in \hyperref[alg:estimateTransitionProbabilities]{Algorithm \ref{alg:estimateTransitionProbabilities}}.
			\item[$\text{E}_5$:] $\left\lvert \E\left[R_i  \mid  a \right] - \widehat{\mathcal{R}}_{(a,i)}\right\rvert  \leq \widehat{\eta}_i$  for all $i\in[k]$ and all $a\in\I_i$; here $\widehat{\eta}_i = \sqrt{\frac{27 \widehat{m}_i}{T(\widehat{P}^\tr \widehat{f}^*)_i}\log\left(2TN \right)}$.\footnote{Recall that $\widehat{f}^*$ denotes the optimal frequency vector computed in Step \ref{step:fstar} of \hyperref[alg:best policy generator]{\CE}. Also, $(\widehat{P}^\tr \widehat{f}^*)_i$ denotes the $i$th component of the vector $P^\tr f^*$.} 
		\end{enumerate}
	\end{defn}
	\begin{defn}
		\label{defn: bad event}
		We define \texttt{bad event} $\textrm{F}$, as the complement of the intersection of events $ \event 1 $ - $ \event 5$, as defined \hyperref[defn: good events]{above}, i.e., $\texttt{bad event } F\equiv E^c$.
	\end{defn}
	
	Before we proceed with the proof, we state below a corollary which provides a multiplicative bound on $\widehat{P}$ with respect to $P$, complementing the additive form of $ \event 1 $.
	\begin{cor}
		\label{cor: corollary to E1 as a multiplicative bound}
		Under event $ \event 1 $, for all interventions $a\in\I_0$ and states $i\in[k]$, we have:
		\begin{align*}
		    \frac{2}{3}P_{(a,i)}\leq \widehat{P}_{(a,i)}\leq \frac{4}{3}P_{(a,i)}
		\end{align*}
	\end{cor}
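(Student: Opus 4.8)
The plan is to reduce the $\ell_1$-type guarantee provided by $\event 1$ to a pointwise additive bound, and then to convert that additive bound into the claimed multiplicative one by invoking the definition of $p_+$ as the smallest nonzero entry of $P$. First I would fix an arbitrary intervention $a \in \I_0$ and an arbitrary state $i \in [k]$, and observe that a single summand is dominated by the full sum. Thus $\event 1$ immediately yields the pointwise estimate $\lvert \widehat{P}_{(a,i)} - P_{(a,i)} \rvert \leq \sum_{i'=1}^k \lvert \widehat{P}_{(a,i')} - P_{(a,i')} \rvert \leq p_+/3$.

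Next I would split into two cases according to whether $P_{(a,i)}$ vanishes. In the case $P_{(a,i)} > 0$, the definition of $p_+$ gives $P_{(a,i)} \geq p_+$, so the additive error is controlled multiplicatively: $\lvert \widehat{P}_{(a,i)} - P_{(a,i)} \rvert \leq p_+/3 \leq P_{(a,i)}/3$. Rearranging this single inequality yields $\tfrac{2}{3}P_{(a,i)} \leq \widehat{P}_{(a,i)} \leq \tfrac{4}{3}P_{(a,i)}$, which is exactly the asserted bound. This is the entire content of the corollary on the support of $P$, and it is a one-line manipulation once the pointwise reduction is in place.

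The remaining case $P_{(a,i)} = 0$ is the only step that warrants care, and I would handle it separately from $\event 1$. Here the desired inequality collapses to $\widehat{P}_{(a,i)} = 0$, which the pointwise bound alone (giving only $\widehat{P}_{(a,i)} \leq p_+/3$) does not deliver. Instead I would appeal to the fact that $\widehat{P}_{(a,i)}$ is formed purely by counting visits to state $i$ after performing intervention $a$ in \hyperref[alg:estimateTransitionProbabilities]{Algorithm \ref{alg:estimateTransitionProbabilities}}: a transition of probability zero is never realized, so its empirical count, and hence $\widehat{P}_{(a,i)}$, is identically zero. Both sides of the claimed inequality then vanish and it holds trivially. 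Thus the main (and essentially the only) obstacle is recognizing that the zero-probability entries require this structural observation about the estimator rather than the concentration-style bound $\event 1$; the positive entries follow from a direct rearrangement.
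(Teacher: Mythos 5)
Your proposal is correct and follows essentially the same route as the paper's proof: reduce the $\ell_1$ bound of $\event 1$ to a pointwise bound, use $P_{(a,i)} \geq p_+$ to convert it to a relative error of $1/3$ on nonzero entries, and handle the $P_{(a,i)}=0$ case via the structural observation that a zero-probability transition is never realized, so its empirical estimate is exactly zero. No gaps.
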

	\begin{proof}
		Event $ \event 1 $ ensures that $\sum\limits_{i=1}^k \lvert \widehat{P}_{(a,i)} - P_{(a,i)} \rvert \leq \frac{ p_+ }{3}$, for each interventions $a\in\I_0$ and states $i\in[k]$. This, in particular, implies that for each intervention $a\in\I_0$ and state $i\in[k]$ the following inequality holds: $ \lvert \widehat{P}_{(a,i)} - P_{(a,i)} \rvert \leq \frac{ p_+ }{3}$. 
		Note that if $P_{(a,i)}=0$, then the algorithm will never observe state $i$ with intervention $a$, i.e., in such a case $\widehat{P}_{(a,i)} = P_{(a,i)} = 0$. 
		For the nonzero $P_{(a,i)}$s, recall that (by definition), $p_+ = \min\{P_{(a,i)}  \mid  P_{(a,i)} > 0\}$. Therefore, for any nonzero $P_{(a,i)}$, the above-mentioned inequality gives us $\lvert \widehat{P}_{(a,i)} - P_{(a,i)} \rvert \leq \frac{1}{3}P_{(a,i)}$. Equivalently, $\widehat{P}_{(a,i)}  \leq \frac{4}{3}P_{(a,i)}$ and $\widehat{P}_{(a,i)}  \geq \frac{2}{3}P_{(a,i)}$. Therefore, for all $P_{(a,i)}$s the corollary holds. 
	\end{proof}

	Considering the estimates $\widehat{P}$ and $\hat{M}$, along with frequency vector $\widehat{f}^*$ (computed in Step \ref{step:fstar}), we define random variable $\widehat{\lambda} := \left\lVert \widehat{P} \hat{M}^{1/2}  \left(\widehat{P}^\tr  \widehat{f}^* \right)^{\circ-\frac{1}{2}}\right\rVert_\infty^2$. Note that $\widehat{\lambda}$ is a surrogate for $\lambda$. We will, in fact, show that, under the good event, $\widehat{\lambda}$ is close to $\lambda$ (Lemma \ref{lem: Bounding lambda hat by lambda}).

	Recall that $\Regret_T := \E [ \varepsilon (\pi)]$ and here the expectation is with respect to the policy ${\pi}$ computed by the algorithm. We can further consider the expected sub-optimality of the algorithm and the quality of the estimates (in particular, $\widehat{P}$, $\hat{M}$ and $\widehat{\lambda}$) under \texttt{good event} (E). 
	
	Based on the estimates returned at Step \ref{step:fstar} of \hyperref[alg:best policy generator]{\CE}, either the \texttt{good event} holds, or we have the \texttt{bad event} (though this is unknown to our algorithm). We obtain the regret guarantee by first bounding sub-optimality of policies computed under the \texttt{good event}, and then bound the probability of the \texttt{bad event}.
	
	\begin{lem}
		\label{lem: difference under pi-star is small}
		For the optimal policy $\pi^*$, under the \texttt{good event} ($\textrm{E}$), we have
		\begin{align*}
		    \sum_{i\in[k]} P_{(\pi^*(0),i)}\E\left[R_i\  \mid \ \pi^*(i) \right]  \leq  \sum \widehat{P}_{(\pi^*(0),i)} \widehat{\mathcal{R}}_{(\pi^*(i),i)} + \O\left(\sqrt{\max\{\widehat{\lambda},\frac{m_0}{p_+}\}\frac{\log\left(NT\right)}{T}}\right)
		\end{align*}
	\end{lem}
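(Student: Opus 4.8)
The plan is to bound the gap between the two sides via an add-and-subtract (telescoping) decomposition that cleanly separates the error coming from the transition estimates from the error coming from the reward estimates. Writing $a^{*}:=\pi^{*}(0)$, for each intermediate state $i\in[k]$ I would split the per-state contribution as
\[
P_{(a^{*},i)}\E[R_i\mid\pi^{*}(i)]-\widehat{P}_{(a^{*},i)}\widehat{\mathcal{R}}_{(\pi^{*}(i),i)}=\bigl(P_{(a^{*},i)}-\widehat{P}_{(a^{*},i)}\bigr)\E[R_i\mid\pi^{*}(i)]+\widehat{P}_{(a^{*},i)}\bigl(\E[R_i\mid\pi^{*}(i)]-\widehat{\mathcal{R}}_{(\pi^{*}(i),i)}\bigr).
\]
Summing over $i$ yields a \emph{transition-error} term and a \emph{reward-error} term, and it suffices to bound each of them by the advertised order (only an upper bound on the signed sum is needed, so I would freely pass to absolute values).

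For the transition-error term, since rewards are $\{0,1\}$-valued we have $\E[R_i\mid\pi^{*}(i)]\le 1$, so this term is at most $\sum_{i\in[k]}\lvert P_{(a^{*},i)}-\widehat{P}_{(a^{*},i)}\rvert$. Event $\event 4$ applied with $a=a^{*}$ bounds this quantity by $\eta'=\sqrt{\tfrac{150 m_0}{Tp_+}\log(3T/k)}$, which is $\O\bigl(\sqrt{\tfrac{m_0}{p_+}\tfrac{\log(NT)}{T}}\bigr)$; this supplies the $m_0/p_+$ branch of the maximum. For the reward-error term I would invoke event $\event 5$, which gives $\lvert\E[R_i\mid a]-\widehat{\mathcal{R}}_{(a,i)}\rvert\le\widehat{\eta}_i=\sqrt{\tfrac{27\widehat{m}_i}{T(\widehat{P}^{\tr}\widehat{f}^{*})_i}\log(2TN)}$ uniformly in $i$ and in the intervention. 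Hence this term is at most
\[
\sum_{i\in[k]}\widehat{P}_{(a^{*},i)}\,\widehat{\eta}_i=\sqrt{\frac{27\log(2TN)}{T}}\;\sum_{i\in[k]}\widehat{P}_{(a^{*},i)}\sqrt{\frac{\widehat{m}_i}{(\widehat{P}^{\tr}\widehat{f}^{*})_i}}.
\]

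The key step---the one I expect to be the crux---is to recognize that the residual sum $\sum_{i}\widehat{P}_{(a^{*},i)}\sqrt{\widehat{m}_i/(\widehat{P}^{\tr}\widehat{f}^{*})_i}$ is precisely the $a^{*}$-th coordinate of the vector $\widehat{P}\hat{M}^{1/2}(\widehat{P}^{\tr}\widehat{f}^{*})^{\circ-\frac{1}{2}}$ whose squared $\ell_\infty$-norm is, by definition, $\widehat{\lambda}$. Since any single coordinate is dominated by the $\ell_\infty$-norm, this sum is at most $\sqrt{\widehat{\lambda}}$, so the reward-error term is $\O\bigl(\sqrt{\widehat{\lambda}\,\log(NT)/T}\bigr)$, supplying the $\widehat{\lambda}$ branch. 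Finally I would merge the two estimates using $\sqrt{x}+\sqrt{y}\le 2\sqrt{\max\{x,y\}}$, collapsing the $m_0/p_+$ and $\widehat{\lambda}$ contributions into the single term $\O\bigl(\sqrt{\max\{\widehat{\lambda},m_0/p_+\}\,\log(NT)/T}\bigr)$ and absorbing the constant and logarithmic discrepancies (between $\log(3T/k)$ and $\log(2TN)$) into the $\O(\cdot)$. Apart from this last bookkeeping, the only genuinely non-routine move is unwinding the definition of $\widehat{\lambda}$ as $\max_{a}\bigl(\sum_i\widehat{P}_{(a,i)}\sqrt{\widehat{m}_i/(\widehat{P}^{\tr}\widehat{f}^{*})_i}\bigr)^2$, after which the reward-error term is seen to be controlled by exactly that max.
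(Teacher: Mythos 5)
Your proposal is correct and follows essentially the same route as the paper: an add-and-subtract decomposition into a transition-error term (bounded via $\event 4$ and the fact that rewards are at most $1$) and a reward-error term (bounded via $\event 5$ and the observation that $\sum_i\widehat{P}_{(\pi^*(0),i)}\widehat{\eta}_i$ is, up to the factor $\sqrt{27\log(2TN)/T}$, a coordinate of the vector defining $\widehat{\lambda}$ and hence at most $\sqrt{\widehat{\lambda}}$). The only difference is that you add and subtract $\widehat{P}_{(\pi^*(0),i)}\E[R_i\mid\pi^*(i)]$ where the paper adds and subtracts $P_{(\pi^*(0),i)}\widehat{\mathcal{R}}_{(\pi^*(i),i)}$, which lets you skip the paper's appeal to Corollary~\ref{cor: corollary to E1 as a multiplicative bound} (the $P\leq\frac{3}{2}\widehat{P}$ conversion) since your reward-error term already carries the $\widehat{P}$ weight needed to match the definition of $\widehat{\lambda}$.
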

	\begin{proof}Consider the expression
	    \begin{align*}\sum_{i\in[k]} P_{(\pi^*(0),i)}\E\left[R_i\  \mid \ \pi^*(i) \right] - \sum_{i\in[k]} \widehat{P}_{(\pi^*(0),i)} \widehat{\mathcal{R}}_{(\pi^*(i),i)}.
	    \end{align*}
	    We can add and subtract $\sum_{i\in[k]} P_{(\pi^*(0),i)}\mathcal{\widehat{R}}_{(\pi^*(i),i)}$ and take common terms out to reduce the expression: 
	    \begin{align*}
	        \sum_{i\in[k]} P_{(\pi^*(0),i)}\left(\E\left[R_i  \mid  \pi^*(i) \right]-\widehat{\mathcal{R}}_{(\pi^*(i),i)}\right) + \sum_{i\in[k]}  \widehat{\mathcal{R}}_{(\pi^*(i),i)} \left(P_{(\pi^*(0),i)}-\widehat{P}_{(\pi^*(0),i)}\right)
	    \end{align*}

		Note that:\vspace{-0.1in}
		\begin{enumerate}[label=(\alph*)]
		    \item $\widehat{\mathcal{R}}_{(\pi^*(i),i)} \leq 1$
		    \item $\sum_{i\in[k]} \lvert \widehat{P}_{(\pi^*(0),i)} - P_{(\pi^*(0),i)}\rvert \leq \eta'$ (from $ \event 4$)
		    \item $\left\lvert \E\left[R_i  \mid  \pi^*(i) \right]-\widehat{\mathcal{R}}_{(\pi^*(i),i)}\right\rvert \leq \widehat{\eta}_i$ (from $ \event 5$)
		\end{enumerate}
		Furthermore, it follows from \hyperref[cor: corollary to E1 as a multiplicative bound]{Corollary \ref{cor: corollary to E1 as a multiplicative bound}} that (component-wise) $P\leq \frac{3}{2}\widehat{P}$.
		Hence, the above-mentioned expression is bounded above by 
		\begin{align*}
		    \sum_{i\in[k]}\frac{3}{2}\widehat{P}_{(\pi^*(0),i)} \widehat{\eta}_i + \eta'
		\end{align*}
		Note that the definition of $\widehat{\lambda}$ ensures $\sum_{i\in[k]}\widehat{P}_{(\pi^*(0),i)} \widehat{\eta}_i = \O\left(\sqrt{\frac{\widehat{\lambda}}{T}\log(NT)}\right)$. Further, $\eta' = \O\left(\sqrt{\frac{m_0}{Tp_+}\log(\frac{T}{k})}\right)$. Therefore,
		\begin{align*}
		    \sum_{i\in[k]}P_{(\pi^*(0),i)} \eta_i  + \eta'= \O\left(\sqrt{\max\{\widehat{\lambda},\frac{m_0}{p_+}\}\frac{\log\left(NT\right)}{T}}\right)
		\end{align*}
		This establishes the lemma.
	\end{proof}
	We now state another similar lemma for any policy $\widehat{\pi}$ computed under \texttt{good event}.  
	\begin{lem}
		\label{lem: difference under widehat-pi is small}
		Let $\widehat{\pi}$ be a policy computed by \hyperref[alg:best policy generator]{\CE} under the \texttt{good event} ($\textrm{E}$). Then, 
		\begin{align*}
		    \sum_{i\in[k]} P_{(\widehat{\pi}(0),i)} \E\left[R_i  \mid  \widehat{\pi}(i)\right] \geq \sum_{i\in[k]}\widehat{P}_{(\widehat{\pi}(0),i)} \widehat{\mathcal{R}}_{(\widehat{\pi}(i),i)}-\O\left(\sqrt{\max\{\widehat{\lambda},\frac{m_0}{p_+}\}\frac{\log\left(NT\right)}{T}}\right)
		\end{align*}
	\end{lem}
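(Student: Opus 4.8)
The plan is to mirror the proof of Lemma~\ref{lem: difference under pi-star is small}, interchanging the roles of the true and estimated product-expressions and reversing every inequality. Concretely, I would study the difference
\[
\sum_{i\in[k]}\widehat{P}_{(\widehat{\pi}(0),i)}\widehat{\mathcal{R}}_{(\widehat{\pi}(i),i)} \;-\; \sum_{i\in[k]} P_{(\widehat{\pi}(0),i)}\,\E\!\left[R_i\mid\widehat{\pi}(i)\right],
\]
and show that it is at most $\O\!\left(\sqrt{\max\{\widehat{\lambda},m_0/p_+\}\,\log(NT)/T}\right)$; rearranging this bound immediately yields the stated lower bound on $\sum_i P_{(\widehat{\pi}(0),i)}\E[R_i\mid\widehat{\pi}(i)]$.

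To decompose the difference I would add and subtract $\sum_{i\in[k]} P_{(\widehat{\pi}(0),i)}\widehat{\mathcal{R}}_{(\widehat{\pi}(i),i)}$ and regroup, obtaining
\[
\sum_{i\in[k]}\widehat{\mathcal{R}}_{(\widehat{\pi}(i),i)}\Bigl(\widehat{P}_{(\widehat{\pi}(0),i)}-P_{(\widehat{\pi}(0),i)}\Bigr) + \sum_{i\in[k]} P_{(\widehat{\pi}(0),i)}\Bigl(\widehat{\mathcal{R}}_{(\widehat{\pi}(i),i)}-\E\!\left[R_i\mid\widehat{\pi}(i)\right]\Bigr).
\]
For the first sum I would use $\widehat{\mathcal{R}}_{(\widehat{\pi}(i),i)}\le 1$ together with event $\event{4}$, i.e.\ $\sum_i\lvert\widehat{P}_{(\widehat{\pi}(0),i)}-P_{(\widehat{\pi}(0),i)}\rvert\le\eta'$, to bound it by $\eta'$. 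For the second sum I would invoke event $\event{5}$ to replace each $\lvert\widehat{\mathcal{R}}_{(\widehat{\pi}(i),i)}-\E[R_i\mid\widehat{\pi}(i)]\rvert$ by $\widehat{\eta}_i$, and then apply Corollary~\ref{cor: corollary to E1 as a multiplicative bound} (componentwise $P\le\tfrac{3}{2}\widehat{P}$) to pass from $P$ to $\widehat{P}$, arriving at the upper bound $\sum_i\tfrac{3}{2}\,\widehat{P}_{(\widehat{\pi}(0),i)}\widehat{\eta}_i+\eta'$.

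The one step deserving care is that the reward-estimation term must be controlled by $\widehat{\lambda}$ for the \emph{algorithm's} choice $\widehat{\pi}(0)$, not for $\pi^*$. Substituting $\widehat{\eta}_i=\sqrt{27\widehat{m}_i\log(2TN)/(T(\widehat{P}^\tr\widehat{f}^*)_i)}$ shows that $\sum_i\widehat{P}_{(a,i)}\widehat{\eta}_i$ equals $\sqrt{27\log(2TN)/T}$ times the $a$th component of the vector $\widehat{P}\hat{M}^{1/2}(\widehat{P}^\tr\widehat{f}^*)^{\circ-\frac{1}{2}}$. Since $\widehat{\lambda}$ is the squared $\ell_\infty$-norm of precisely this vector, that component is at most $\sqrt{\widehat{\lambda}}$ for \emph{every} intervention $a\in\I_0$, in particular for $a=\widehat{\pi}(0)$. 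Hence $\sum_i\widehat{P}_{(\widehat{\pi}(0),i)}\widehat{\eta}_i=\O(\sqrt{\widehat{\lambda}\log(NT)/T})$, and combining this with $\eta'=\O(\sqrt{m_0\log(T/k)/(Tp_+)})$ gives the overall $\O(\sqrt{\max\{\widehat{\lambda},m_0/p_+\}\log(NT)/T})$ estimate.

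Since the argument runs in lockstep with Lemma~\ref{lem: difference under pi-star is small}, I do not expect a genuine obstacle. The main things to get right are the direction of each inequality after swapping the estimated and true products, and the observation above that the uniform (max-over-interventions) form of $\widehat{\lambda}$ is exactly what allows the same bound to apply to the random, estimate-dependent choice $\widehat{\pi}(0)$, so that $\widehat{\pi}$ being correlated with the data causes no difficulty.
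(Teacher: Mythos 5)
Your proposal is correct and follows essentially the same route as the paper, which simply says the bound is ``analogous to Lemma \ref{lem: difference under pi-star is small}'' and arrives at the same $\eta' + \sum_{i}\frac{3}{2}\widehat{P}_{(\widehat{\pi}(0),i)}\widehat{\eta}_i$ bound. Your explicit remark that the $\ell_\infty$ (max-over-interventions) form of $\widehat{\lambda}$ is what lets the bound apply to the data-dependent choice $\widehat{\pi}(0)$ is a detail the paper leaves implicit, and it is exactly the right justification.
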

	\begin{proof}
		Consider the expression: 
		\begin{align*}
		    \sum_{i\in[k]} \widehat{P}_{(\widehat{\pi}(0),i)} \widehat{\mathcal{R}}_{(\widehat{\pi}(i),i)} - \sum_{i\in[k]} P_{(\widehat{\pi}(0),i)}\E\left[R_i  \mid  \widehat{\pi}(i) \right]
		\end{align*}
		We can add and subtract $\sum_{i\in[k]} P_{(\widehat{\pi}(0),i)}\mathcal{\widehat{R}}_{(\widehat{\pi}(i),i)}$ to get: 
		\begin{align*}
		    \sum_{i\in[k]}  \widehat{\mathcal{R}}_{(\widehat{\pi}(i),i)} \left(\widehat{P}_{(\widehat{\pi}(0),i)} - P_{(\widehat{\pi}(0),i)}\right) + \sum_{i\in[k]} P_{(\widehat{\pi}(0),i)}\left(\widehat{\mathcal{R}}_{(\widehat{\pi}(i),i)}-\E\left[R_i  \mid  \widehat{\pi}(i) \right]\right)
		\end{align*}
		Analogous to \hyperref[lem: difference under pi-star is small]{Lemma \ref{lem: difference under pi-star is small}}, one can show that this expression is bounded above by 
		\begin{align*}
		    \eta' + \sum_{i\in[k]}\frac{3}{2}\widehat{P}_{(\widehat{\pi}(0),i)} \widehat{\eta}_i  = \O\left(\sqrt{\max\{\widehat{\lambda},\frac{m_0}{p_+}\}\frac{\log\left(NT\right)}{T}}\right)
		\end{align*}
	\end{proof}
	
	We can also bound $\widehat{\lambda}$ to within a constant factor of $\lambda$.
	\begin{lem}
		\label{lem: Bounding lambda hat by lambda}
		Under the \texttt{good event}, we have $\widehat{\lambda}\leq 8 \lambda$. 
	\end{lem}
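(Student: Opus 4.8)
The plan is to exploit two facts: first, that $\widehat f^*$ is, by construction in Step~\ref{step:fstar}, the minimizer of the \emph{estimated} objective, so that $\widehat\lambda$ is bounded above by that objective evaluated at \emph{any} frequency vector — in particular at the true optimizer $f^*$; and second, that under the \texttt{good event} every quantity feeding into the objective ($\widehat P$, $\widehat M$, and the inner product $\widehat P^\top f$) sits within a fixed multiplicative factor of its true counterpart, so that the estimated objective at $f^*$ is within a constant of $\lambda$.

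To set this up I would write $g(f) := \left\lVert PM^{1/2}(P^\top f)^{\circ-\frac12}\right\rVert_\infty^2$ and $\widehat g(f) := \left\lVert \widehat P\widehat M^{1/2}(\widehat P^\top f)^{\circ-\frac12}\right\rVert_\infty^2$, so that $\lambda = g(f^*) = \min_f g(f)$ by equation~(\ref{eqn:lambda}). Since the components of the vector $\widehat P\widehat M^{1/2}(\widehat P^\top f)^{\circ-\frac12}$ are nonnegative, the coordinatewise maximum in Step~\ref{step:fstar} is exactly its $\ell_\infty$ norm, so $\widehat f^*$ minimizes $\widehat g$ and $\widehat\lambda = \widehat g(\widehat f^*) = \min_f \widehat g(f)$. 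By optimality of $\widehat f^*$ we then get $\widehat\lambda \le \widehat g(f^*)$, and it remains only to bound $\widehat g(f^*)$ by a constant times $g(f^*)=\lambda$.

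The key step is a componentwise comparison of the two vectors at $f = f^*$. Fixing an intervention $a\in\I_0$, the $a$th coordinate of the estimated vector is $\sum_{i\in[k]} \widehat P_{(a,i)}\,\widehat m_i^{1/2}\,(\widehat P^\top f^*)_i^{-1/2}$. Under the \texttt{good event}, Corollary~\ref{cor: corollary to E1 as a multiplicative bound} gives $\widehat P_{(a,i)}\le \tfrac43 P_{(a,i)}$ and, applied inside the inner product, $(\widehat P^\top f^*)_i = \sum_b \widehat P_{(b,i)} f^*_b \ge \tfrac23 (P^\top f^*)_i$, hence $(\widehat P^\top f^*)_i^{-1/2}\le \sqrt{3/2}\,(P^\top f^*)_i^{-1/2}$; and $\text{E}_3$ gives $\widehat m_i^{1/2}\le \sqrt 2\, m_i^{1/2}$. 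Multiplying the three factors, each coordinate of the estimated vector is at most $\tfrac43\cdot\sqrt2\cdot\sqrt{3/2} = \tfrac{4\sqrt3}{3}$ times the corresponding coordinate of $PM^{1/2}(P^\top f^*)^{\circ-\frac12}$. As this holds term by term with all terms nonnegative, it holds for the sum, and taking the maximum over $a$ and squaring yields $\widehat g(f^*)\le \tfrac{16}{3}\,g(f^*) = \tfrac{16}{3}\lambda$. Combining with $\widehat\lambda\le\widehat g(f^*)$ and $\tfrac{16}{3}\le 8$ gives $\widehat\lambda\le 8\lambda$, as claimed.

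The main obstacle is bookkeeping the \emph{direction} of each multiplicative distortion: the denominator $(\widehat P^\top f^*)_i$ must be \emph{lower} bounded so that its inverse square root is \emph{upper} bounded, which is precisely why the two-sided estimate of Corollary~\ref{cor: corollary to E1 as a multiplicative bound} is needed rather than the one-sided additive content of $\text{E}_1$ alone. The only remaining subtlety is ensuring the comparison is genuinely coordinatewise, so that passing to the $\ell_\infty$ norm is valid; this is immediate since each coordinate is a sum of nonnegative terms and the inequality holds summand by summand.
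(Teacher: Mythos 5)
Your proposal is correct and follows essentially the same route as the paper's proof: bound $\widehat{\lambda}$ by the estimated objective at the true optimizer $f^*$ (using that $\widehat{f}^*$ minimizes the estimated objective), then convert to the true objective via the multiplicative bounds $\widehat{P}\leq\frac{4}{3}P$, $(\widehat{P}^\tr f^*)_i\geq\frac{2}{3}(P^\tr f^*)_i$ from Corollary~\ref{cor: corollary to E1 as a multiplicative bound} and $\hat{M}\leq 2M$ from $\event 3$. Your version is in fact slightly more careful about the coordinatewise bookkeeping and yields the sharper constant $16/3\leq 8$.
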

	\begin{proof}
	    \hyperref[cor: corollary to E1 as a multiplicative bound]{Corollary \ref{cor: corollary to E1 as a multiplicative bound}} ensures that given event $ \event 1 $ (and, hence, the \texttt{good event}), $\frac{2}{3}P\leq \widehat{P}\leq \frac{4}{3}P$. In addition, note that event $ \event 3 $ gives us $\hat{M}\leq 2M$. From these observations we obtain the desired bound: 
		\begin{align*}
		    \widehat{\lambda} = \widehat{P}\hat{M}^{\frac{1}{2}}\left(\widehat{P}^\tr \widehat{f}^*\right)^{\circ-\frac{1}{2}} \leq \widehat{P}\hat{M}^{\frac{1}{2}}\left(\widehat{P}^\tr f^*\right)^{\circ-\frac{1}{2}} \leq 8PM^{\frac{1}{2}}\left(P^\tr f^*\right)^{\circ-\frac{1}{2}} = 8\lambda
		\end{align*}
		Here, the first inequality follows from the fact that $\widehat{f}^*$ is the minimizer of the $\widehat{\lambda}$ expression, and for the second inequality, we substitute the appropriate bounds of $\widehat{P}$ and $\hat{M}$.
	\end{proof}

	\begin{cor}
		\label{cor: difference between pi-star and pi-hat gives varepsilon widehat pi}
		Let $\widehat{\pi}$ be a policy computed by \hyperref[alg:best policy generator]{\CE} under \texttt{good event} (E), then
		\begin{align*}
		    \varepsilon(\widehat{\pi}) =  \O\left(\sqrt{\max\left\{{\lambda},\frac{m_0}{p_+}\right\}\frac{\log\left(NT\right)}{T}}\right)
		\end{align*}
	\end{cor}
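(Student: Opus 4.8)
The plan is to bound the sub-optimality $\varepsilon(\widehat{\pi})$ by chaining the two preceding lemmas and then invoking the empirical optimality of $\widehat{\pi}$. By definition, $\varepsilon(\widehat{\pi})$ is the difference between the true expected rewards of the optimal policy $\pi^*$ and the computed policy $\widehat{\pi}$:
\[
\varepsilon(\widehat{\pi}) = \sum_{i\in[k]} P_{(\pi^*(0),i)}\E\left[R_i \mid \pi^*(i)\right] - \sum_{i\in[k]} P_{(\widehat{\pi}(0),i)}\E\left[R_i \mid \widehat{\pi}(i)\right].
\]
First I would upper bound the $\pi^*$-term via Lemma \ref{lem: difference under pi-star is small}, which replaces the true quantities by the empirical $\widehat{P}$ and $\widehat{\mathcal{R}}$ at the cost of an additive error of order $\sqrt{\max\{\widehat{\lambda}, m_0/p_+\}\log(NT)/T}$. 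Symmetrically, I would lower bound the $\widehat{\pi}$-term via Lemma \ref{lem: difference under widehat-pi is small}, incurring an error of the same order. Substituting both bounds gives
\[
\varepsilon(\widehat{\pi}) \le \left(\sum_{i\in[k]} \widehat{P}_{(\pi^*(0),i)}\widehat{\mathcal{R}}_{(\pi^*(i),i)} - \sum_{i\in[k]} \widehat{P}_{(\widehat{\pi}(0),i)}\widehat{\mathcal{R}}_{(\widehat{\pi}(i),i)}\right) + \O\left(\sqrt{\max\left\{\widehat{\lambda},\tfrac{m_0}{p_+}\right\}\tfrac{\log(NT)}{T}}\right).
\]

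The crucial step is then to argue that the bracketed empirical difference is nonpositive. This is where the algorithm's design enters the analysis: \CE\ computes $\widehat{\pi}$ as the maximizer of the empirical objective $\sum_{i\in[k]} \widehat{P}_{(\pi(0),i)}\widehat{\mathcal{R}}_{(\pi(i),i)}$. Indeed, at each intermediate state the choice $\widehat{\pi}(i) = \argmax_{a\in\I_i}\widehat{\mathcal{R}}_{(a,i)}$ maximizes the empirical reward independently of the first-stage intervention (since $\widehat{P}_{(\cdot,i)}\ge 0$), and $\widehat{\pi}(0)$ is subsequently chosen to maximize the resulting sum over $\I_0$. Hence $\widehat{\pi}$ is empirically optimal, and since $\pi^*$ is a feasible competitor, $\sum_i \widehat{P}_{(\pi^*(0),i)}\widehat{\mathcal{R}}_{(\pi^*(i),i)} \le \sum_i \widehat{P}_{(\widehat{\pi}(0),i)}\widehat{\mathcal{R}}_{(\widehat{\pi}(i),i)}$, so the bracketed term vanishes from the upper bound. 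I expect this to be the main (conceptual) obstacle, since the rest is mechanical chaining; the point is that the empirical-optimality inequality is precisely what lets the empirical rewards cancel, leaving only the two estimation-error terms.

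Finally I would pass from $\widehat{\lambda}$ to $\lambda$. By Lemma \ref{lem: Bounding lambda hat by lambda}, under the good event $\widehat{\lambda} \le 8\lambda$, whence $\max\{\widehat{\lambda}, m_0/p_+\} \le 8\max\{\lambda, m_0/p_+\}$ and the constant factor $8$ is absorbed into the $\O(\cdot)$. This yields
\[
\varepsilon(\widehat{\pi}) = \O\left(\sqrt{\max\left\{\lambda,\tfrac{m_0}{p_+}\right\}\tfrac{\log(NT)}{T}}\right),
\]
as claimed.
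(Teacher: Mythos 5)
Your proposal is correct and follows essentially the same route as the paper's proof: chain Lemma \ref{lem: difference under pi-star is small} and Lemma \ref{lem: difference under widehat-pi is small}, cancel the empirical terms via the empirical optimality of $\widehat{\pi}$, and then replace $\widehat{\lambda}$ by $\lambda$ using Lemma \ref{lem: Bounding lambda hat by lambda}. Your added justification of why the greedy per-state choices make $\widehat{\pi}$ the empirical maximizer is a detail the paper asserts without elaboration, but the argument is the same.
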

	\begin{proof}
		Since \hyperref[alg:best policy generator]{\CE} selects the optimal policy (with respect to the estimates), 
		\begin{align*}
		    \sum \widehat{P}_{(\pi^*(0),i)} \widehat{\mathcal{R}}_{(\pi^*(i),i)} \leq \sum \widehat{P}_{(\widehat{\pi}(0),i)} \widehat{\mathcal{R}}_{(\widehat{\pi}(i),i)}
		\end{align*}
		Combining this with \hyperref[lem: difference under pi-star is small]{Lemmas \ref{lem: difference under pi-star is small}} and \hyperref[lem: difference under widehat-pi is small]{\ref{lem: difference under widehat-pi is small}}, we get under \texttt{good event}:
		\begin{align*}
		    \sum_{i\in[k]} P_{(\pi^*(0),i)}\E\left[R_i  \mid  \pi^*(i) \right] - \sum_{i\in[k]} P_{(\widehat{\pi}(0),i)}\E\left[R_i  \mid  \widehat{\pi}(i) \right] = \O\left(\sqrt{\max\left\{\widehat{\lambda},\frac{m_0}{p_+}\right\}\frac{\log\left(NT\right)}{T}}\right)
		\end{align*}
		Note the left-hand-side of this expression is equal to $\varepsilon(\widehat{\pi})$. Finally, using Lemma \ref{lem: Bounding lambda hat by lambda}, we get that 
		\begin{align*}
		    \varepsilon(\widehat{\pi}) = \O\left(\sqrt{\max\left\{\lambda,\frac{m_0}{p_+}\right\}\frac{\log\left(NT\right)}{T}}\right)
		\end{align*}
		The corollary stands proved.  
	\end{proof}

	\hyperref[cor: difference between pi-star and pi-hat gives varepsilon widehat pi]{Corollary \ref{cor: difference between pi-star and pi-hat gives varepsilon widehat pi}} shows that under the \texttt{good event}, the (true) expected reward of $\pi^*$ and $\widehat{\pi}$ are within $\O\left(\sqrt{\max\left\{\lambda,\frac{m_0}{p_+}\right\}\frac{\log\left(NT\right)}{T}}\right)$ of each other.
	
	In Lemma \ref{lem: Bound on F} (stated below and proved in Appendix \ref{section:pr-bad-event}) we will show that\footnote{Recall that, by definition, $\textrm{F} = \textrm{E}^c$.} $\prob\left\{\bigcup_{i\in[5]}\neg  \event i \right\} = \prob\left\{\textrm{F}\right\}\leq \frac{5k}{T}$, for appropriately large $T$.
	
		\begin{restatable}[Bound on \texttt{Bad Event}]{lem}{boundonBadEvent}
    \label{lem: Bound on F}
    Write $T_0 := \O\left(\frac{N\max(m_i)}{p_+^3}\log\left(2NT\right)\right) = \widetilde{O}\left(\frac{N\max(m_i)}{p_+^3}\right)$. Then for any $T>T_0$:
	\begin{align*}
	    \prob\{\textrm{F}\}  \leq \frac{5k}{T}.
	\end{align*}
    \end{restatable}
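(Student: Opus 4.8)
The plan is to control $\prob\{F\}$ by a union bound over the five ways in which the good event can fail: since $F = \bigcup_{i\in[5]} \neg\event{i}$, it suffices to prove $\prob\{\neg\event{i}\} \le \frac{k}{T}$ for each $i\in[5]$, and summing gives the claimed $\frac{5k}{T}$. Every one of these five bounds is a concentration statement about an empirical estimate, and in each case the hypothesis $T \ge T_0$ is exactly what supplies enough samples to push the relevant tail below $\frac{k}{T}$. I would group the argument by the object being estimated: transition probabilities ($\event{1}$ and $\event{4}$), the causal parameters ($\event{2}$ and $\event{3}$), and the rewards ($\event{5}$).

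For the transition-probability events the estimates are produced in Algorithm~\ref{alg:estimateTransitionProbabilities}, and the deviations are $L_1$ distances between an empirical categorical distribution over the $k$ states and its truth. I would split $\I_0$ into $\I_{m_0}$ and its complement, following the parallel-bandit observation of Lattimore et al.: for $a\in\I_{m_0}^c$ an empty-intervention round in which the relevant variable is realised to the intervened value is distributionally a draw from $\prob\{\cdot\mid a\}$, so the $T'/2$ $do()$ rounds furnish $\approx q_j^0\cdot T'/2$ effective samples for such $a$, a count that the definition of $m_0$ guarantees is large; for $a\in\I_{m_0}$ the explicit $\frac{T'}{2|\I_{m_0}|}$ rounds supply the samples directly, with $|\I_{m_0}| = \O(m_0)$. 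A key structural point is that the number of states reachable under a fixed intervention is at most $1/p_+$, since the nonzero entries of each row of $P$ are at least $p_+$ and sum to $1$; hence the effective support size of the categorical distribution is $\min(k,1/p_+)$, and combined with $\approx T/m_0$ effective samples per intervention this is what produces the $\sqrt{m_0/(Tp_+)}$ rate. Applying Bernstein per state-coordinate and a union bound over the $k$ states and $N$ interventions then yields $\event{4}$ with its calibrated width $\eta'$, and the coarser constant-$p_+/3$ bound $\event{1}$ follows from the same computation once $T\ge T_0$.

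For $\event{2}$ and $\event{3}$ I would appeal to the stability of the causal parameter $m_i$ under perturbation of the $q^i_j$. The subtlety is that $m_i = \max\{j : q^i_{(j)} < 1/j\}$ is defined by a threshold crossing, so it is not a continuous function of the $q$'s; I would instead use the multiplicative robustness argument of Lattimore et al.\ to show that if every $\widehat{q}^i_j$ is within a sufficiently small additive error of $q^i_j$, then the computed estimate satisfies $\widehat{m}_i \in [\frac{2}{3}m_i, 2m_i]$. The required accuracy on the $\widehat{q}^i_j$ translates, through Bernoulli concentration and a union bound over the $n$ variables and $k$ states, into a per-state sample requirement. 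For $\event{3}$ the samples at state $i$ arise from the explore-balanced frequency $\frac{1}{2}(\tilde{f}(a)+\frac{1}{|\I_0|})$ of Algorithm~\ref{alg:estimateCausalParameters} composed with the transitions, where the additive $\frac{1}{|\I_0|}$ term forces a minimum visitation of every reachable state (again invoking $p_+$), so that $T\ge T_0$ renders the failure probability below $\frac{k}{T}$.

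The main obstacle, I expect, is $\event{5}$, because its confidence width $\widehat{\eta}_i = \sqrt{\frac{27\widehat{m}_i}{T(\widehat{P}^\tr\widehat{f}^*)_i}\log(2TN)}$ is \emph{data-dependent}: it contains the random estimates $\widehat{m}_i$ and $\widehat{f}^*$, and the number of reward samples gathered at state $i$ in Algorithm~\ref{alg:estimateRewards} is itself random, being determined by the realised transitions out of state $0$ and split, via the $\I_{m_i}/\I_{m_i}^c$ dichotomy, between free $do()$ observations and round-robin pulls. I would handle this by conditioning on the good transition and causal estimates ($\event{1}$--$\event{3}$), first using a concentration step to show the realised number of visits to state $i$ stays within a constant factor of $T(\widehat{P}^\tr\widehat{f}^*)_i$ — this is exactly what the $\widehat{f}^*$-component of the sampling frequency $f(a) = \frac{1}{3}(f^*(a)+\tilde{f}(a)+\frac{1}{|\I_0|})$ guarantees, matching the denominator of $\widehat{\eta}_i$ — and only then applying Hoeffding to the reward means with this now essentially deterministic sample size, closing with a union bound over the $k$ states and the $N$ interventions per state. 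Reconciling the random threshold $\widehat{\eta}_i$ with the random effective sample size, so that the two sources of randomness align into a clean $\frac{k}{T}$ tail, is the delicate step, and is where the $\log(2NT)$ factor and the powers of $p_+$ in $T_0$ get consumed.
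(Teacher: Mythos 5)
Your proposal is correct and follows essentially the same route as the paper: a union bound over the five events, with each failure probability driven below $k/T$ by the sample-count guarantees that $T \ge T_0$ provides (multinomial $L_1$ concentration for the transition estimates, Lattimore et al.'s Lemma~8 for the causal parameters, and Hoeffding with a visit-count lower bound for the rewards). The only cosmetic differences are that the paper invokes Devroye's multinomial inequality directly for the $L_1$ deviations where you propose per-coordinate Bernstein plus a union bound over states, and for $\event{5}$ it lower-bounds the visit counts deterministically from the empirical $\widetilde{P}$ rather than via a separate concentration step.
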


	The above-mentioned bounds together establish Theorem \ref{theorem:main} (i.e., bound the regret of \hyperref[alg:best policy generator]{\CE}): 
	\begin{align*}
	    \Regret_T = \E[\varepsilon(\pi)] = \E[\varepsilon(\widehat{\pi})  \mid   \textrm{E} ]\enspace \prob\left\{\textrm{E}\right\} \ + \ \E[\varepsilon(\pi')  \mid  \textrm{F}]\enspace\prob\left\{\textrm{F}\right\}
	\end{align*}
	Since the rewards are bounded between $0$ and $1$, we have $\varepsilon(\pi') \leq 1$, for all policies $\pi'$. In addition, the fact that $\prob\{\textrm{E}\}\leq 1$ gives us $\Regret_T \leq \E[\varepsilon(\pi)  \mid   \textrm{E} ] + \prob\{\textrm{F}\}$. Therefore, Corollary \ref{cor: difference between pi-star and pi-hat gives varepsilon widehat pi} along with Lemma \ref{lem: Bound on F}, lead to the stated regret guarantee 
	\begin{align*}
	    \Regret_T = \O\left(\sqrt{\max\left\{\lambda,\frac{m_0}{p_+}\right\}\frac{\log\left(NT\right)}{T}}\right) + \frac{5k}{T} = \O\left(\sqrt{\max\left\{\lambda,\frac{m_0}{p_+}\right\}\frac{\log\left(NT\right)}{T}}\right)
	\end{align*}

	\section{Lower Bound}
    \label{section: lower bounds}
    This section provides a lower bound on regret for a family of instances. For any number of states $k$, we show that there exist transition matrices $P$ and reward distributions ($\E[R_i \mid a]$) such that regret achieved by \CE $\ $(Theorem \ref{theorem:main}) is tight, up to log factors. 
    
    \begin{thm}
    \label{theorem: Lower Bound for our algorithm}
    There exists a transition matrix $P$, reward distributions, and probabilities $q^0_j$ corresponding to causal variables $\{X^0_j\}_{j\in[n]}$, such that for any $q^i_j$, corresponding to causal variables at states $i \in [k]$, the simple regret achieved by any algorithm is 
    \begin{align*}
        \Regret_T\in \Omega\left(\sqrt{\frac{\lambda}{T}}\right)
    \end{align*}
    \end{thm}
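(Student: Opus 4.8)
The plan is to exhibit an explicit family of two-stage instances on which every learner is forced to misidentify near-optimal interventions, and to show that the statistical price of this misidentification is, up to constants, exactly $\sqrt{\lambda}$. I would first fix the first-stage objects $P$ and $q^0_j$, choosing the transition law $P$ so that the second-stage visitation distribution $w = P^\tr f$ over the intermediate states $[k]$ is \emph{pinned}, i.e.\ independent of the learner's first-stage play $f$ (for instance $P_{(a,i)} = \bar{w}_i$ for every intervention $a\in\I_0$, so that $w=\bar{w}$ for all $f$). Under such a $P$ the definition $\lambda=\min_{w}\max_{a}\big(\sum_i P_{(a,i)}\sqrt{m_i/w_i}\big)^2$ collapses: the inner maximum is attained at a value common to all $a$, and $\lambda=\big(\sum_i \sqrt{m_i\,\bar{w}_i}\big)^2$. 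I would also take $q^0_j$ with $m_0$ small enough that $m_0/p_+$ is dominated by $\lambda$, so the $\lambda$ term in Theorem~\ref{theorem:main} is the operative one.

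At each intermediate state $i$ I would plant the parallel causal-bandit hard instance of Lattimore et al.: $m_i$ ``rare'' interventions whose variables are each $1$ with probability $\Theta(1/m_i)$ (this is where the given intermediate probabilities $q^i_j$, and hence each $m_i$, enter, so the construction adapts to an arbitrary intermediate causal structure), rewards calibrated so that in the unperturbed instance all interventions at state $i$ are tied, and a uniform prior over which single rare intervention is secretly boosted by a gap $\epsilon_i$. The decisive structural fact is that whether the learner probes state $i$ through $do()$ --- which reveals all $m_i$ rare variables at once but sees each ``active'' only a $\Theta(1/m_i)$ fraction of the visits --- or through direct atomic interventions, every candidate receives only $\Theta(T\bar{w}_i/m_i)$ informative samples; consequently the per-state hypothesis test carries a Kullback--Leibler budget of order $(T\bar{w}_i/m_i)\,\epsilon_i^2$.

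The heart of the argument is the information-theoretic step. Calibrating $\epsilon_i = c\,\sqrt{m_i/(T\bar{w}_i)}$ keeps each KL budget at a constant, so by a change-of-measure/Fano argument (taken as a product over the $k$ independent states, in the style of Assouad's lemma) the learner fails, with constant probability, to return the boosted intervention at state $i$. Since visitation is pinned, each such failure costs the weighted gap $\bar{w}_i\,\epsilon_i$, and summing over states gives
\[
\Regret_T \ \gtrsim\ \sum_{i\in[k]} \bar{w}_i\,\epsilon_i \ =\ \frac{c}{\sqrt T}\sum_{i\in[k]}\sqrt{m_i\,\bar{w}_i} \ =\ c\,\sqrt{\frac{\lambda}{T}},
\]
which is the claimed bound. (More generally, for a $P$ whose rows differ, the same computation yields $\frac{1}{\sqrt T}\sum_i P_{(a^*,i)}\sqrt{m_i/w_i}$ with $a^*$ the binding first-stage intervention, recovering the full $\min_w\max_a$ value of $\lambda$.)

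The step I expect to be the main obstacle is the Kullback--Leibler accounting for the causal observations: making rigorous that the ``free'' information released by $do()$, which exposes every variable on each visit, nonetheless amounts to only $\Theta(T\bar{w}_i/m_i)$ usable samples per candidate, exactly as in the single-stage causal-bandit lower bound --- this is what forces the $\sqrt{m_i}$ scaling rather than a trivial $m_i$-independent rate. A secondary difficulty is extending the bound beyond the pinned-visitation construction to a general $P$: there the learner's effective $w=P^\tr f$ is adaptive and data-dependent, so one must argue that gaps calibrated to the minimax $w^*$ still defeat a learner that reallocates its first-stage budget. Choosing $P$ to pin $w$ is precisely what neutralizes this adaptivity and lets the clean $\Omega(\sqrt{\lambda/T})$ bound go through for the exhibited instance.
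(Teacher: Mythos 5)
Your proposal is sound in outline but takes a genuinely different route from the paper. The paper fixes a \emph{deterministic} transition matrix (each $do(X^0_i=1)$ steers to state $i$ with probability $1$), for which $\lambda=\sum_{\ell}m_\ell$ (Proposition \ref{propn: Value of Lambda for chosen transition probability matrix}), perturbs the reward of a \emph{single} intervention at a \emph{single} adversarially chosen state, and gets the bound from one application of Bretagnolle--Huber combined with a pigeonhole over the visitation simplex (Claim \ref{claim: inequality for sum mi-s}): some state $s$ must be visited only $r_s \leq m_s/\sum_\ell m_\ell$ of the time, and that one under-visited state already forces regret $\Omega(\beta)=\Omega(\sqrt{\sum_\ell m_\ell/T})$. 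You instead pin the visitation law ($P_{(a,i)}=\bar w_i$ for all $a$, so $\lambda=(\sum_i\sqrt{m_i\bar w_i})^2$), perturb all $k$ states simultaneously, and sum the per-state losses Assouad-style. Both constructions satisfy the theorem as stated (it only asserts existence of a hard family), and your version has the merit of exhibiting a second extremal family in which $\lambda$ is realized as a weighted sum rather than $\sum_\ell m_\ell$, and of showing the regret genuinely accumulates across states. What each buys: the paper's single-perturbation argument avoids all product-measure bookkeeping; yours more transparently reflects the $\min_f\max_a$ structure of $\lambda$. Two points in your sketch need shoring up. First, it is not true that \emph{every} candidate receives only $\Theta(T\bar w_i/m_i)$ informative samples---the learner can hammer one arm; the correct statement (Propositions \ref{propn: Size of Ai} and \ref{propn: Number of interventions of elements in Ai}) is that at least $m_i/2$ of the arms in $\I_{m_i}$ satisfy $\E[T_{(a,i)}]\leq 3Tr_i/m_i$, the passive-observation term being controlled because the sorted probabilities obey $q^i_{(j)}\leq q^i_{(m_i)}<1/m_i$ for all $j\leq m_i$; the adversary must then plant the gap inside this algorithm-dependent set $\J_i$. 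Second, your product step over $k$ coordinates with adaptively collected data requires the usual one-coordinate-at-a-time change of measure (or an averaged/randomized-instance version), since the under-observed sets $\J_i$ are defined relative to a reference measure and the algorithm's allocation at state $i$ can shift once other coordinates are perturbed; this is standard but not free, and it is precisely the complication the paper's single-state construction is designed to avoid.
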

    
    \vspace{-0.1in}
    \subsection{Theorem 2: Proof Setup}
    \vspace{-0.1in}
    
    This section establishes Theorem \ref{theorem: Lower Bound for our algorithm}. We will identify a collection of two-stage causal MDP instances and show that, for any given algorithm $\mathscr{A}$, there exists an instance in this collection for which $\mathscr{A}$'s regret is $\Omega\left(\sqrt{\frac{\lambda}{T}}\right)$.

    First we describe the collection of instances and then provide the proof.

	For any integer $k > 1$, consider $n=(k-1)$ causal variables at each state $i \in \{0,1,\dots, k\}$. The transition matrix $P$ is set to be deterministic. Specifically, for each $i \in [n]$, we have $\prob\{i \mid do(X_i^0 = 1)\} = 1$. For all other interventions at state 0, we transition to state k with probability 1. Such a transition matrix can be achieved by setting $q^0_i=0$ for all $i \in [k-1]$. As before, the total number of interventions $N:=2n+1 = 2k-1$.
	
	Now consider a family of $Nk+1$ instances $\left\{\mathcal{F}_0\right\} \cup  \left\{ \mathcal{F}_{(a,i)} \right\}_{i \in [k], a \in \I_i}$. Here, $\mathcal{F}_0$ and each $\mathcal{F}_{(a,i)}$ is a two-stage causal MDP with the above-mentioned transition probabilities. The instances differ in the rewards at the intermediate states. In particular, in instance $\mathcal{F}_0$, we set the reward distributions such that  $\E[R_i \mid a]=\frac{1}{2}$ for all states $i \in [k]$ and interventions $a \in \I_i$. For each $i \in [k]$ and $a \in \I_i$, instance $\mathcal{F}_{(a,i)}$ differs from $\mathcal{F}_0$ only at state $i$ and for intervention $a$. Specifically, by construction, we will have $\E[R_i \mid a] = \frac{1}{2} + \beta$, for a parameter $\beta >0$. The expected rewards under all other interventions will be $1/2$, the same as in $\mathcal{F}_0$.
	
	Given any algorithm $\mathscr{A}$, we will consider the execution of $\mathscr{A}$ over all the instances in the family. The execution of algorithm $\mathscr{A}$ over each instance induces a trace, which may include the realized transition probabilities $\widehat{P}$, the realized variable probabilities $\widehat{q}^i_j$ for $i\in [k]$ and $j \in [n]$ and the corresponding $\widehat{m}_i$s, and the realized rewards $\widehat{\mathcal{R}}$. Each of such realizations (random variables) has a corresponding distribution (over many possible runs of the algorithm). We call the measures corresponding to these random variables under the instances $\mathcal{F}_0$ and $\mathcal{F}_{(a,i)}$ as $\mathcal{P}_0$ and $\mathcal{P}_{(a,i)}$, respectively.

	\vspace{-0.1in}
	\subsection{Proof of Theorem 2}
	\vspace{-0.1in}
	For any algorithm $\mathscr{A}$ and given time budget $T$, we first consider the $\mathscr{A}$'s execution over instance $\mathcal{F}_0$. As mentioned previously, $\mathcal{P}_0$ denotes the trace distribution induced by the algorithm for $\mathcal{F}_0$. In particular, write $r_i$ to denote the expected number of times state $i$ is visited,  $r_i := \E_{\mathcal{P}_0}\left[\text{state $i$ is visited}\right]/T$.

    Recall the construction of the set $\I_{m_i}$ (for each intermediate state $i \in [k]$) from Remark \ref{rem: computing mi values} in Section \ref{section: notation and preliminaries}. In particular, $m_i := \max\{j  \mid  q^i_{(j)}<\frac{1}{j}\}$ and $\I_{m_i} := \{do(X^i_{(j)}=1)  \mid  q^i_{(j)}<\frac{1}{j}\}$, where the Bernoulli probabilities of the variables at state $i$ are sorted to satisfy $q^i_{(1)}\leq q^i_{(2)}\leq \dots \leq q^i_{(n)}$. Note that these definitions do not depend on the algorithm at hand. The algorithm, however, may choose to perform different interventions different number of times. Write $N_{(a,i)}$ to denote the expected (under $\mathcal{P}_0$) number of times intervention $a$ is performed by the algorithm at state $i$. Furthermore, let random variable $T_{(a,i)}$ denote the number of times intervention $a$ is observed at stated $i$. Hence, $\E_{\mathcal{P}_0}[T_{(a,i)}]$ is the expected number of times intervention $a$ is observed.\footnote{Note that $a$ can be observed while performing the do-nothing intervention. Also, the expected value $N_{(a,i)}$ accounts for the number of times $a$ is explicitly performed and not just observed.}

    Using the expected values for algorithm $\mathscr{A}$ and instance $\mathcal{F}_0$, we define a subset of $\I_{m_i}$ as follows: $\J_i := \left\{a\in \I_{m_i}  \ : \  N_{(a,i)} \leq 2\frac{Tr_i}{m_i} \right\}$. The following proposition shows that the size of $\J_i$ is sufficiently large. 
    \begin{propn}
    \label{propn: Size of Ai}
        The set $\J_i$ is non-empty. In particular, 
        \begin{align*}
            m_i/2\leq \lvert \J_i \rvert \leq m_i.
        \end{align*}
    \end{propn}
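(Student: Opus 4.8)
The plan is to prove both bounds by a short counting argument resting on two facts: that $\I_{m_i}$ contains exactly $m_i$ interventions, and that the expected performance counts $N_{(a,i)}$ at state $i$ add up to the expected number of visits, $T r_i$.

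First I would record that $\lvert \I_{m_i}\rvert = m_i$. Since the probabilities are sorted as $q^i_{(1)}\leq \dots \leq q^i_{(n)}$ and $m_i = \max\{j : q^i_{(j)} < 1/j\}$, every index $j\leq m_i$ satisfies $q^i_{(j)} \leq q^i_{(m_i)} < 1/m_i \leq 1/j$; hence $\I_{m_i} = \{do(X^i_{(j)}=1) : j\leq m_i\}$ has exactly $m_i$ elements. Because $\J_i\subseteq \I_{m_i}$ by definition, this immediately yields the upper bound $\lvert \J_i\rvert \leq m_i$.

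Next I would establish the conservation identity $\sum_{a\in\I_i} N_{(a,i)} = T r_i$: each of the (expected) $T r_i$ visits to state $i$ performs exactly one intervention from $\I_i$, so the expected performance counts partition the visits. Restricting the sum to $\I_{m_i}\subseteq\I_i$ and using $N_{(a,i)}\geq 0$ gives $\sum_{a\in\I_{m_i}} N_{(a,i)} \leq T r_i$. With this in hand, the lower bound follows by an averaging (pigeonhole) argument: set $S := \I_{m_i}\setminus\J_i = \{a\in\I_{m_i} : N_{(a,i)} > 2 T r_i/m_i\}$. If $\lvert S\rvert > m_i/2$, then $\sum_{a\in\I_{m_i}} N_{(a,i)} \geq \sum_{a\in S} N_{(a,i)} > \frac{m_i}{2}\cdot\frac{2 T r_i}{m_i} = T r_i$, contradicting the conservation bound. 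Hence $\lvert S\rvert \leq m_i/2$ and $\lvert\J_i\rvert = m_i - \lvert S\rvert \geq m_i/2$. Non-emptiness is then automatic: $m_i\geq 1$, so $\lvert\J_i\rvert\geq \lceil m_i/2\rceil \geq 1$.

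The argument is essentially routine; the only point needing care is the conservation identity, which hinges on the semantics that exactly one intervention is performed per visit to state $i$ (so the \emph{performance} counts, rather than the observation counts, sum to the number of visits). The strict-versus-nonstrict inequality bookkeeping in the pigeonhole step is the only other place to be slightly careful, but presents no real difficulty.
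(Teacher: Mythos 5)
Your proof is correct and follows essentially the same route as the paper's: the upper bound from $\J_i\subseteq\I_{m_i}$ with $\lvert\I_{m_i}\rvert=m_i$, and the lower bound from the budget constraint $\sum_{a\in\I_{m_i}}N_{(a,i)}\leq Tr_i$ combined with a pigeonhole/contradiction argument on $\I_{m_i}\setminus\J_i$. The extra care you take in justifying $\lvert\I_{m_i}\rvert=m_i$ and the conservation identity is welcome but does not change the argument.
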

    \begin{proof}
    The upper bound on the size of subset $\J_i$ follows directly from its definition: since $\J_i\subseteq I_{m_i}$ we have $\lvert \J_i \rvert \leq \lvert \I_{m_i} \rvert = m_i$. 
    
    For the lower bound on the size of $\J_i$, note that $T r_i$ is the expected number of times state $i$ is visited by the algorithm. Therefore, 
    \begin{align}
    \label{ineq:navg}
        \sum_{a \in \I_{m_i}} N_{(a,i)} \leq T r_i
    \end{align}
    
    Furthermore, by definition, for each intervention $b \in \I_{m_i} \setminus \J_i$ we have $N_{(b,i)} \geq \frac{2 T r_i}{m_i}$. Hence, assuming $\lvert \I_{m_i} \setminus \J_i \rvert > \frac{m_i}{2}$ would contradict inequality \ref{ineq:navg}.
    This observation implies that $\lvert \I_{m_i} \setminus \J_i \rvert \leq \frac{m_i}{2}$ and, hence, $\lvert \J_i \rvert \geq \frac{m_i}{2}$. This completes the proof.  
    \end{proof}
    
    Recall that $T_{(a,i)}$ denotes the number of times intervention $a$ is observed at stated $i$. The following proposition bounds $\E[T_{(a,i)}]$ for each intervention $a \in \J_i$. 
    \begin{propn}
    \label{propn: Number of interventions of elements in Ai}
    For every intervention $a \in \J_i$
    \begin{align*}
        \E_{\mathcal{P}_0}[T_{(a,i)}] \leq \frac{3Tr_i}{m_i}.
    \end{align*}
    \end{propn}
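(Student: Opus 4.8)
The plan is to bound $\E_{\mathcal{P}_0}[T_{(a,i)}]$ for a fixed intervention $a = do(X^i_{(j)}=1) \in \J_i$ by splitting the observations of $a$ at state $i$ into two disjoint sources: the rounds in which the algorithm \emph{explicitly performs} $a$, and the rounds in which $a$ is merely \emph{observed} because the variable $X^i_{(j)}$ happens to take the value $1$ under some other, non-conflicting, intervention. Since explicitly performing $a$ always yields an observation of $a$, the first source contributes exactly $N_{(a,i)}$ in expectation. The membership condition $a \in \J_i$ immediately caps this term: by the definition of $\J_i$ we have $N_{(a,i)} \le \frac{2 T r_i}{m_i}$.

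For the second source, I would exploit the parallel structure of the causal graph at state $i$: since the variables there are mutually independent, whenever the algorithm performs any intervention other than $do(X^i_{(j)}=0)$ or $do(X^i_{(j)}=1)$, the variable $X^i_{(j)}$ is drawn from its own Bernoulli distribution and equals $1$ with probability $q^i_{(j)}$. Conditioning on the (random) number of such non-conflicting rounds at state $i$ and applying linearity of expectation over visits to state $i$, the expected number of natural observations is at most $q^i_{(j)}$ times the expected number of visits to state $i$, i.e.\ at most $q^i_{(j)} \cdot T r_i$. Combining the two sources gives
\begin{align*}
\E_{\mathcal{P}_0}[T_{(a,i)}] \le N_{(a,i)} + q^i_{(j)}\, T r_i \le \frac{2 T r_i}{m_i} + q^i_{(j)}\, T r_i.
\end{align*}

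The crux of the argument, and the step I expect to require the most care, is showing that $q^i_{(j)} < 1/m_i$ for \emph{every} intervention $a = do(X^i_{(j)}=1)$ in $\I_{m_i}$, not merely for the single threshold index. This follows from the sorted order together with the definition $m_i = \max\{ j : q^i_{(j)} < 1/j \}$: the threshold index satisfies $q^i_{(m_i)} < 1/m_i$, and because the Bernoulli probabilities are arranged in increasing order, any index $j \le m_i$ obeys $q^i_{(j)} \le q^i_{(m_i)} < 1/m_i$. Since $\I_{m_i}$ consists precisely of the first $m_i$ variables in sorted order, every $a \in \I_{m_i} \supseteq \J_i$ thus satisfies $q^i_{(j)} < 1/m_i$. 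Substituting this into the displayed inequality yields $\E_{\mathcal{P}_0}[T_{(a,i)}] \le \frac{2 T r_i}{m_i} + \frac{T r_i}{m_i} = \frac{3 T r_i}{m_i}$, which is the claim. The only remaining subtlety is to confirm that the two observation sources are genuinely disjoint and jointly exhaustive, so that no observation of $a$ is double-counted or omitted; this reduces to a short case analysis on which intervention is performed in a given round at state $i$.
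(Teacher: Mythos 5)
Your proposal is correct and follows essentially the same route as the paper: decompose $T_{(a,i)}$ into explicit performances (bounded by $N_{(a,i)} \le 2Tr_i/m_i$ via the definition of $\J_i$) plus natural realizations (bounded by $Tr_i/m_i$ since each $a \in \I_{m_i}$ occurs naturally with probability at most $1/m_i$ per visit). The only difference is that you spell out the sorted-order argument for why $q^i_{(j)} < 1/m_i$ holds for \emph{every} index up to $m_i$, a step the paper asserts without elaboration.
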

    \begin{proof}
        Any intervention $a\in\J_i\subseteq \I_{m_i}$ may be observed either when it is explicitly performed by the algorithm or as a random realization (under some other intervention, including do-nothing). Since $a \in \I_{m_i}$, the probability that $a$ is observed as part of some other intervention is at most $\frac{1}{m_i}$. Therefore, the expected number of times that $a$ is observed by the algorithm---without explicitly performing it---is at most $\frac{T r_i}{m_i}$;\footnote{Here, we use the fact that the realization of $a$ is independent of the visitation of state $i$.} recall that the expected number of times state $i$ is visited is equal to $T r_i$. 
    
    For any intervention $a \in \J_i$, by definition, the expected number of times $a$ is performed $N_{(a,i)} \leq \frac{2 T r_i}{m_i}$. Therefore, the proposition follows: 
        \begin{align*}
            \E[T_{(a,i)}] \leq \frac{T r_i}{m_i} + N_{(a,i)} \leq \frac{3Tr_i}{m_i}.
        \end{align*}
    \end{proof}

    We now state two known results for KL divergence.

			    \textbf{Bretagnolle-Huber Inequality (Theorem 14.2 in \cite{LattimoreBook})}
				\label{eqn: Bretagnolle-Huber Inequality}: 
				Let $\mathcal{P}$ and $\mathcal{P}'$ be any two measures on the same measurable space. Let $\textrm{E}$ be any event in the sample space with complement $\textrm{E}^c$. Then, 
				\begin{align} 
				\prob_{\P}\{E\} + \prob_{\mathcal{P}'}\{E^c\}\geq\frac{1}{2}\exp\left(-\rm{KL}(\mathcal{P},\mathcal{P}')\right). 
				\end{align}

			    \textbf{Bound on KL-Divergence with number of observations (Adaptation of Equation 17 in Lemma B1 from \cite{AuerGamblinginaRiggedCasino})}:
				\label{eqn: Bound on KL Divergence with number of observations}
				Let $\mathcal{P}_0$ and $\mathcal{P}_{(a,i)}$ be any two measures with differing  expected rewards (for exactly the intervention $a$ at state $i$) by an amount $\beta$. Then, 
				\begin{align}
				    \rm{KL}(\mathcal{P}_0,\mathcal{P}_{(a,i)}) \leq 6\beta^2 \  \E_{\mathcal{P}_0}[T_{(a,i)}] \label{ineq:auer}
				\end{align}
	
	We prove the above inequality in \hyperref[appendix section: Proof KL Divergence inequality]{Appendix \ref{appendix section: Proof KL Divergence inequality}}.
	
	Using \hyperref[eqn: Bound on KL Divergence with number of observations]{this bound on KL divergence} and \hyperref[propn: Number of interventions of elements in Ai]{Proposition \ref{propn: Number of interventions of elements in Ai}}, we have, for all states $i\in [k]$ and interventions $a \in \J_i$:  
	\begin{align}
	    \rm{KL}(\mathcal{P}_0,\mathcal{P}_{(a,i)}) \leq 6\beta^2\cdot 3\frac{Tr_i}{m_i} = 18\frac{Tr_i\beta^2}{m_i}
	\end{align}
	
	Substituting this in the \hyperref[eqn: Bretagnolle-Huber Inequality]{Bretagnolle-Huber Inequality}, we obtain, for any event $\textrm{E}$ in the sample space along with all states $i \in [k]$ and all interventions $a \in \J_i$: 
    \begin{align}
    \label{eqn: bound on Prob E under P1 and Prob Ec under P2 using KL}
        \prob_{\P_{(a,i)}}\{E\} + \prob_{\P_0}\{E^c\} \geq\frac{1}{2}\exp\left(-18\frac{Tr_i\beta^2}{m_i}\right)
    \end{align}
    
        We now define events to lower bound the probability that Algorithm $\mathscr{A}$ returns a sub-optimal policy. In particular, write $\widehat{\pi}$ to denote the policy returned by algorithm $\mathscr{A}$. Note that $\widehat{\pi}$ is a random variable. 
    
    For any $\ell \in [k]$ and any intervention $b$, write $G_1(b, \ell)$ to denote the event that---under the returned policy $\widehat{\pi}$---intervention $b$ is \emph{not} chosen at state $\ell$, i.e., $G_1(b, \ell) := \left\{\widehat{\pi}(\ell)\neq b\right\}$. Also, let $G_2(\ell)$ denote the event that policy $\widehat{\pi}$ does not induce a transition to $\ell$ from state $0$, i.e., $G_2(\ell) := \left\{\widehat{\pi}(0)\neq \ell\right\}$. Furthermore, write $G (b, \ell) := G_1(b, \ell) \cup G_2 (\ell)$. Note that the complement $G^c(b, \ell) = G^c_1(b, \ell) \cap G^c_2(\ell) = \{ \widehat{\pi}(\ell) = b \} \cap \{ \widehat{\pi}(0) = \ell \}$.
 
    Considering measure $\mathcal{P}_0$, we note that for each state $\ell \in [k]$ there exists an intervention $\alpha_\ell \in \J_\ell$ with the property that $\prob_{\mathcal{P}_0} \left\{ G^c_1(\alpha_\ell, \ell) \right\} = \prob_{\mathcal{P}_0} \left\{ \widehat{\pi}(\ell) = \alpha_\ell \right\} \leq \frac{1}{\lvert J_\ell \rvert}$. This follows from the fact that $\sum_{a \in \J_\ell}\prob_{\mathcal{P}_0}\left\{\widehat{\pi}(\ell)= a\right\} \leq 1$. Therefore, for each state $\ell \in [k]$ there exists an intervention $\alpha_\ell$ such that $\prob_{\P_0}\{ G^c(\alpha_\ell, \ell) \} \leq \frac{1}{\lvert \J_\ell \rvert}$. 

    This bound and \hyperref[eqn: bound on Prob E under P1 and Prob Ec under P2 using KL]{inequality \ref{eqn: bound on Prob E under P1 and Prob Ec under P2 using KL}} imply that for all states $\ell \in [k]$ there exists an intervention $\alpha_\ell$ that satisfies 
    \begin{align}
    \label{eqn: forall i exists a in Ji such that prob of making mistake is bouded below using KL}
      \prob_{\P_{(\alpha_\ell,\ell)}}\{G(\alpha_\ell, \ell) \} \geq \frac{1}{2}\exp\left(-18\frac{Tr_\ell \beta^2}{m_\ell}\right) - \frac{1}{\lvert \J_\ell \rvert}
    \end{align}
    
    We will set 
    \begin{align}
    \label{eqn: Beta Value}
        \beta = \min\left\{\frac{1}{3},\sqrt{\frac{\sum_{\ell\in[k] } m_\ell }{18T}}\right\}
    \end{align}
    
    Therefore $\beta$ takes value either $\sqrt{\frac{\sum_{\ell\in[k] } m_\ell }{18T}}$ or $\frac{1}{3}$. We will address these over two separate cases.
    
    \textbf{Case 1}: 
    $\beta = \sqrt{\frac{\sum_{\ell\in[k] } m_\ell }{18T}}$.
    
    We wish to substitute this $\beta$ value in 
    \hyperref[eqn: forall i exists a in Ji such that prob of making mistake is bouded below using KL]{Equation \ref{eqn: forall i exists a in Ji such that prob of making mistake is bouded below using KL}}. Towards this, we will state a proposition.
    
    \begin{propn}
    \label{propn: min-max for s and alpha s}
    There exists a state $s\in [k]$ such that $$\sqrt{\frac{m_s}{18Tr_s}}\geq \sqrt{\frac{\sum_{\ell\in[k] } m_\ell }{18T}}$$
    \end{propn}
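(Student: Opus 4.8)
The plan is to reduce the claimed inequality to an elementary averaging statement. Squaring both sides and cancelling the common factor $\frac{1}{18T}$, the claim $\sqrt{\frac{m_s}{18Tr_s}} \geq \sqrt{\frac{\sum_{\ell\in[k]} m_\ell}{18T}}$ is equivalent to the existence of a state $s \in [k]$ with $\frac{m_s}{r_s} \geq \sum_{\ell\in[k]} m_\ell$. So the entire content of the proposition is the purely combinatorial assertion $\max_{s\in[k]} \frac{m_s}{r_s} \geq \sum_{\ell\in[k]} m_\ell$, which I would establish by a weighted-average (pigeonhole) argument.

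The crucial input is the normalization $\sum_{\ell\in[k]} r_\ell = 1$, which I would justify from the structure of the two-stage MDP: in each of the $T$ rounds the algorithm transitions from state $0$ to exactly one intermediate state $\ell \in [k]$ (the rows of $P$ are probability vectors), so the total number of intermediate-state visits in any run is exactly $T$. Taking expectations under $\mathcal{P}_0$ gives $\sum_{\ell\in[k]} Tr_\ell = T$, i.e.\ $\sum_{\ell\in[k]} r_\ell = 1$.

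With this in hand, I would argue by contradiction. Suppose $\frac{m_s}{r_s} < \sum_{\ell\in[k]} m_\ell$ for every $s\in[k]$; this forces $r_s > 0$ for all $s$, since a state with $r_s = 0$ has $\frac{m_s}{r_s} = \infty$, and as $m_s \geq 1$ always holds, such a state would already satisfy the proposition. Rearranging gives $m_s < r_s \sum_{\ell\in[k]} m_\ell$ for each $s$, and summing over $s\in[k]$ while using $\sum_{s} r_s = 1$ yields $\sum_{s} m_s < \big(\sum_s r_s\big)\big(\sum_{\ell} m_\ell\big) = \sum_{\ell} m_\ell$, contradicting $\sum_s m_s = \sum_\ell m_\ell$. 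Hence some state $s$ satisfies $\frac{m_s}{r_s} \geq \sum_{\ell} m_\ell$, and undoing the reduction of the first paragraph (monotonicity of the square root and scaling by the positive constant $\frac{1}{18T}$) gives the proposition.

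I do not expect any genuine obstacle here: the heart of the argument is a one-line pigeonhole once the normalization $\sum_\ell r_\ell = 1$ is pinned down. The only points requiring a little care are the bookkeeping that every round contributes exactly one intermediate-state visit, so that the $r_\ell$ genuinely sum to one, and the trivial handling of states that are never visited under $\mathcal{P}_0$.
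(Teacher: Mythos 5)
Your proposal is correct and follows essentially the same route as the paper: the paper proves the identical averaging statement $\min_{\rho\in\Delta}\max_\ell m_\ell/\rho_\ell \geq \sum_\ell m_\ell$ (its Claim 2.1) by the same contradiction argument --- assuming $m_\ell/\rho_\ell < \sum_\ell m_\ell$ for all $\ell$ and summing to violate $\sum_\ell \rho_\ell = 1$ --- and then applies it with $\rho = r$. Your explicit justification that $\sum_\ell r_\ell = 1$ and your handling of states with $r_s = 0$ are small tidiness improvements over the paper's presentation, but not a different argument.
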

    \begin{proof}
    First, we note the following claim considering all vectors $\rho=\{\rho_1,\dots,\rho_k\}$ in the probability simplex $\Delta$.
    \begin{claim}
    \label{claim: inequality for sum mi-s}
    For any given set of integers $m_1, m_2, \ldots, m_k$, we have 
    \begin{align*}
        \min_{(\rho_1, \rho_2, \ldots, \rho_k) \in\Delta} \ \left( \max_{\ell\in[k]} \frac{m_\ell}{\rho_\ell} \right) \geq \sum_{\ell\in[k]} m_\ell
    \end{align*}
    \end{claim}
    \begin{proof}
        Assume, towards a contradiction, that for all $\ell\in [k]$, we have $\frac{m_\ell}{\rho_\ell} < \sum_{\ell\in[k]} m_\ell$. Then, $\rho_\ell > \frac{m_\ell}{\sum_{\ell\in[k]} m_\ell}$, for all $\ell \in [k]$. Therefore, $\sum_{\ell\in[k]} \rho_\ell > \sum_{\ell\in[k]}\frac{m_\ell}{\sum_{\ell\in[k]} m_\ell} = 1$. However, this is a contradiction as $\sum_{\ell\in[k]} \rho_\ell = 1$.
    \end{proof}
    
    An immediate consequence of \hyperref[claim: inequality for sum mi-s]{Claim \ref{claim: inequality for sum mi-s}} is that $$\min_{(r_1, r_2, \ldots, r_k) \in\Delta} \left( \max_{\ell\in[k]}  \sqrt{\frac{m_\ell}{18Tr_\ell}}\right) \geq \sqrt{\frac{\sum_{\ell \in [k]} m_\ell}{18T}}$$.

    Therefore, irrespective of how $r_i$s are chosen, there always exists a state $s\in[k]$ such that $\sqrt{\frac{m_s}{18Tr_s}}\geq \sqrt{\frac{\sum_{\ell \in [k]} m_\ell}{18T}}$. 
    \end{proof}
    
    For such a state $s\in[k]$ that satisfies \hyperref[propn: min-max for s and alpha s]{Proposition \ref{propn: min-max for s and alpha s}}, we note that, $\frac{m_s}{18Tr_s}\geq \beta^2$ or $\frac{18Tr_s\beta^2}{m_s}\leq 1$.
    
    Let us now restate \hyperref[eqn: forall i exists a in Ji such that prob of making mistake is bouded below using KL]{Equation \ref{eqn: forall i exists a in Ji such that prob of making mistake is bouded below using KL}} for such a state $s$.
    There exists a state $s \in [k]$ and an intervention $\alpha_s$ that satisfies 
    \begin{align}
    \label{eqn: exists s exists a-s in Ji such that prob of making mistake is bouded below using KL}
      \prob_{\P_{(\alpha_s,s)}}\{G(\alpha_s, s) \} \geq \frac{1}{2}\exp\left(-18\frac{Tr_s \beta^2}{m_s}\right) - \frac{1}{\lvert \J_s \rvert} \geq \frac{1}{2e} - \frac{1}{\lvert \J_s \rvert}
    \end{align}
    
    Note that the last inequality lower bounds the to probability of selecting a non-optimal policy when the algorithm $\mathscr{A}$ is executed on instance $\mathcal{F}_{\alpha_s, s}$.
    Furthermore, in instance $\mathcal{F}_{\alpha_s, s}$, for any non-optimal policy $\widehat{\pi}$ we have $\varepsilon(\widehat{\pi}) = \left( \frac{1}{2} + \beta \right) - \frac{1}{2} = \beta$. Therefore, we can lower bound $\mathscr{A}$'s regret over instance $\mathcal{F}_{\alpha_s, s}$ as follows: 
    \begin{align}
        \Regret_T &= \E[\varepsilon(\widehat{\pi})]\\
        &= \prob_{\P_{(\alpha_s,s)}}\{G (\alpha_s, s) \} \cdot \E[\Regret \mid G(\alpha_s, s)]\enspace +\enspace \prob_{\P_{(\alpha_s,s)}}\{G^c(\alpha_s, s)\} \cdot \E[\Regret \mid G^c(\alpha_s, s)]\nonumber\\
        &\geq \left[\frac{1}{2e} - \frac{1}{\lvert \J_s \rvert}\right] \beta +\enspace \prob_{\P_{(\alpha_s, s)}}\{G^c(\alpha_s, s)\}\cdot 0\nonumber\\
        &= \left[\frac{1}{2e} - \frac{1}{\lvert \J_s \rvert}\right] \beta
        \label{eqn: regret in terms of e Js and beta}
    \end{align}
    Note that we can construct the instances to ensure that $m_\ell \geq 8$, for all states $\ell$, and, hence, $\left(\frac{1}{2e} - \frac{1}{\lvert \J_i \rvert} \right) = \Omega(1)$ (see Proposition \ref{propn: Size of Ai}). Therefore \hyperref[eqn: regret in terms of e Js and beta]{Equation \ref{eqn: regret in terms of e Js and beta}} gives us:
    \begin{align}
        \Regret_T&= \Omega (\beta) = \Omega \left(\sqrt{\frac{\sum_{\ell\in[k]}m_\ell}{T}}\right) \label{ineq: regret case 1}
    \end{align}

    \textbf{Case 2} We now consider the case when $\beta = \frac{1}{3}$. In such a case, $\sqrt{\frac{\sum_{\ell\in[k] } m_\ell }{18T}}> \frac{1}{3}$. 
    
    We showed in \hyperref[propn: min-max for s and alpha s]{Proposition \ref{propn: min-max for s and alpha s}} that there exists a state $s\in [k]$ such that $\sqrt{\frac{m_s}{18Tr_s}}\geq \sqrt{\frac{\sum_{\ell\in[k] } m_\ell }{18T}}$. Combining the two statements, there exists a state $s$ such that $\sqrt{\frac{m_s}{18Tr_s}}\geq \frac{1}{3}$. 
    We now restate Inequality \ref{eqn: forall i exists a in Ji such that prob of making mistake is bouded below using KL} for such a state $s\in[k]$: $$\prob_{\P_{(\alpha_s,s)}}\{G(\alpha_s, s) \} \geq \frac{1}{2}\exp\left(-9 \beta^2\right) - \frac{1}{\lvert \J_s \rvert} = \frac{1}{2e} - \frac{1}{\lvert \J_s \rvert}$$
    
    Following the exact same procedure as in Case 1, we can derive that $\Regret_T \geq \left[\frac{1}{2e} - \frac{1}{\lvert \J_s \rvert}\right] \beta$. We saw in Case 1 that it is possible to construct instances such that $\left[\frac{1}{2e} - \frac{1}{\lvert \J_s \rvert}\right] = \Omega(1)$. Therefore the following holds for Case 2 also: 
    \begin{align}
        \Regret_T&= \Omega (\beta) = \Omega \left(\sqrt{\frac{\sum_{\ell\in[k]}m_\ell}{T}}\right) \label{ineq: regret case 2}
    \end{align}
    
    \comment{
    Therefore we have in both cases that there exists a state $s\in [k]$ and an intervention $\alpha_s \in \I_s$ such that 
    $$\prob_{\P_{(\alpha_s,s)}}\{G(\alpha_s, s) \} \geq \frac{1}{2e} - \frac{1}{\lvert \J_s \rvert}$$.

    \begin{align}
        \Regret_T = \E[\varepsilon(\widehat{\pi})]&= \prob_{\P_{(\alpha_s,s)}}\{G (\alpha_s, s) \} \cdot \E[\Regret \mid G(\alpha_s, s)]\enspace +\enspace \prob_{\P_{(\alpha_s,s)}}\{G^c(\alpha_s, s)\} \cdot \E[\Regret \mid G^c(\alpha_s, s)]\nonumber\\
        &\geq \left[\frac{1}{2e} - \frac{1}{\lvert \J_s \rvert}\right] \beta +\enspace \prob_{\P_{(\alpha_s, s)}}\{G^c(\alpha_s, s)\}\cdot 0\nonumber\\
        &= \left[\frac{1}{2e} - \frac{1}{\lvert \J_s \rvert}\right] \beta
        \label{eqn: regret case 2in terms of e Js and beta}
    \end{align}

    \begin{align}
        \Regret_T = \left(\frac{1}{2e} - \frac{1}{\lvert \J_\ell \rvert}\right)\beta_\ell= \Omega\left(\beta_\ell\right) = \Omega\left(\sqrt{\frac{m_\ell}{18Tr_\ell}}\right)
    \end{align}
    Therefore, in both the cases, $\Regret_T= \Omega\left(\sqrt{\frac{m_\ell}{18Tr_\ell}}\right)$

Amortizing over the states, we will show that there exists a state $s$ for which this lower bound is sufficiently large. Towards this, we establish the following proposition. 

Note that, by definition, $r_i$s constitute a vector in the standard simplex $\Delta$, i.e., $\sum_{i=1}^k r_i = 1$ and $r_i \in [0,1]$ for all $i \in [k]$. We now note the following proposition considering all vectors in the simplex $\Delta$. 
    \begin{propn}
    \label{propn: inequality for sum mi-s}
    For any given set of integers $m_1, m_2, \ldots, m_k$, we have 
    \begin{align*}
        \min_{(\rho_1, \rho_2, \ldots, \rho_k) \in\Delta} \ \left( \max_{i\in[k]} \frac{m_i}{\rho_i} \right) \geq \sum_{i\in[k]} m_i
    \end{align*}
    \end{propn}
    \begin{proof}
        Assume, towards a contradiction, that for all $i\in [k]$, we have $\frac{m_i}{\rho_i} < \sum_{i\in[k]} m_i$. Then, $\rho_i > \frac{m_i}{\sum_{i\in[k]} m_i}$, for all $i \in [k]$. Therefore, $\sum_{i\in[k]} \rho_i > \sum_{i\in[k]}\frac{m_i}{\sum_{i\in[k]} m_i} = 1$. However, this is a contradiction as $\sum_{i\in[k]} \rho_i = 1$.
    \end{proof}
    
    Therefore, \hyperref[eqn: Rt forall i exists a in lower bound terms]{Equation \ref{eqn: Rt forall i exists a in lower bound terms}} and \hyperref[propn: inequality for sum mi-s]{Proposition \ref{propn: inequality for sum mi-s}} 
    }

    Inequalities \ref{ineq: regret case 1} and \ref{ineq: regret case 2} imply that there exists a state $s$ and an intervention $\alpha_{s}$ such that, under instance  $\mathcal{F}_{( \alpha_s,s)}$, algorithm $\mathscr{A}$'s regret satisfies  
    \begin{align}
    \label{eqn: Regret in terms of sum mi}
        \Regret_T = \Omega \left(\sqrt{\frac{\sum_{\ell\in[k]}m_\ell}{T}}\right)
    \end{align}
    
    We complete the proof of Theorem \ref{theorem: Lower Bound for our algorithm} by showing that in the current context $\lambda = \sum_{\ell\in[k]} m_\ell$. 
    
    \begin{propn}
    \label{propn: Value of Lambda for chosen transition probability matrix}
    For the chosen transition matrix  
    $$\lambda := \min_{\text{fq.~vector} f} \ \left\lVert PM^{1/2}  \left(P^\tr  f \right)^{\circ-\frac{1}{2}} \right\rVert_\infty^2  = \sum_{\ell\in[k]}m_\ell$$
    \end{propn}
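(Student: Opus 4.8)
The plan is to reduce the $\ell_\infty$ minimization defining $\lambda$ to the simplex min--max quantity already controlled by Claim \ref{claim: inequality for sum mi-s}. The first step is to record the explicit form of $P$ for this instance. By construction, the intervention $do(X^0_i=1)$ deterministically sends the MDP to state $i$ for each $i\in[k-1]$, while every one of the remaining $k$ interventions in $\I_0$ (namely $do()$ and $do(X^0_j=0)$ for $j\in[k-1]$) deterministically sends it to state $k$. Hence each row of $P$ is a standard basis vector: the row indexed by $do(X^0_i=1)$ equals $e_i$ for $i\in[k-1]$, and all other rows equal $e_k$.

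Next I would reparametrize the optimization by the destination-mass vector $g:=P^\tr f\in\R^k$. Reading off coordinates, $g_i=f_{do(X^0_i=1)}$ for $i\in[k-1]$, and $g_k$ equals the total frequency placed on the $k$ interventions pointing at state $k$. Since $f$ is a frequency vector, $g_i\ge 0$ and $\sum_{i\in[k]} g_i=\sum_{a\in\I_0} f_a=1$, so $g$ lies in the simplex $\Delta$; conversely any $g\in\Delta$ is realized by setting $f_{do(X^0_i=1)}=g_i$ and loading the mass $g_k$ onto $do()$. Thus, as $f$ ranges over all frequency vectors, $g=P^\tr f$ ranges over all of $\Delta$.

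With this in hand I would evaluate the objective. The Hadamard exponent gives $(P^\tr f)^{\circ-\frac12}$ the coordinates $g_i^{-1/2}$; premultiplying by the diagonal $M^{1/2}$ produces the vector with coordinates $\sqrt{m_i/g_i}$; and applying $P$ simply copies coordinate $i$ into every intervention that points at state $i$. Therefore the vector $\nu=PM^{1/2}(P^\tr f)^{\circ-\frac12}$ has entries $\sqrt{m_i/g_i}$, with the value $\sqrt{m_k/g_k}$ repeated across the $k$ interventions leading to state $k$, so that
$$\norm{\nu}_\infty^2=\max_{i\in[k]}\frac{m_i}{g_i}.$$
Combining this with the reparametrization yields
$$\lambda=\min_{g\in\Delta}\ \max_{i\in[k]}\frac{m_i}{g_i}.$$
Claim \ref{claim: inequality for sum mi-s} gives the lower bound $\min_{g\in\Delta}\max_{i}m_i/g_i\ge\sum_{\ell\in[k]} m_\ell$, and the choice $g_i=m_i/\sum_{j}m_j$ makes every ratio $m_i/g_i$ equal to $\sum_j m_j$, attaining the bound; hence $\lambda=\sum_{\ell\in[k]} m_\ell$.

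The only genuinely delicate point is the reparametrization: one must verify both directions of the correspondence, i.e.\ that every attainable $g=P^\tr f$ lies in $\Delta$ and that every point of $\Delta$ is attainable, so that minimizing over frequency vectors coincides exactly with minimizing over $\Delta$. Once that equivalence is established, the Hadamard bookkeeping and the appeal to Claim \ref{claim: inequality for sum mi-s} are routine.
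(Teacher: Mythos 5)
Your proposal is correct and follows essentially the same route as the paper: write out the deterministic $P$, observe that $PM^{1/2}(P^\tr f)^{\circ-\frac{1}{2}}$ has entries $\sqrt{m_\ell/\rho_\ell}$ for an induced simplex vector $\rho$, reduce $\lambda$ to $\min_{\rho\in\Delta}\max_\ell m_\ell/\rho_\ell$, and invoke Claim \ref{claim: inequality for sum mi-s}. If anything, you are slightly more careful than the paper at the two points it glosses over, namely the surjectivity of $f\mapsto P^\tr f$ onto $\Delta$ and the explicit attaining choice $g_i=m_i/\sum_j m_j$ that upgrades the lower bound to equality.
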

    \begin{proof}
    Recall that all the instances, $\mathcal{F}_0$ and $\mathcal{F}_{(a,i)}$s, have the same (deterministic) transition matrix $P$. Also, parameter $\lambda$  is computed via  \hyperref[eqn:lambda]{Equation \ref{eqn:lambda}}. 
    
    Consider any frequency vector $f$ over the interventions $\{1,\dots,N\}$. From the chosen transition matrix, we have the following:

    \begin{align*}
    P = 
    \begin{bmatrix}
    1 & 0 & \dots & 0 \\
    0 & 1 & \dots & 0 \\
    &&\dots\\
    0 & 0 & \dots & 1 \\
    &&\dots\\
    0 & 0 & \dots & 1 \\
    \end{bmatrix}& \qquad
    PM^{\frac{1}{2}} = 
    \begin{bmatrix}
    \sqrt{m_1} & 0 & \dots & 0 \\
    0 & \sqrt{m_2} & \dots & 0 \\
    &&\dots\\
    0 & 0 & \dots & \sqrt{m_k} \\
    &&\dots\\
    0 & 0 & \dots & \sqrt{m_k} \\
    \end{bmatrix}& \qquad
    P^\tr f = 
    \begin{bmatrix}
    f_1\\ f_2\\ \dots\\ f_{k-1}\\ f_k + \ldots + f_N\\
    \end{bmatrix}
    \end{align*}    
    
    From here, we can compute the following:
    \begin{align*}
      PM^{1/2}  \left(P^\tr  f \right)^{\circ-\frac{1}{2}} =
      \left[
      \sqrt{\frac{m_1}{f_1}},  \dots,  \sqrt{\frac{m_{k-1}}{f_{k-1}}},  \sqrt{\frac{m_k}{f_k + \ldots + f_N}},  \dots,  \sqrt{\frac{m_k}{f_k + \ldots + f_N}}
      \right]^\tr
    \end{align*}
    
    That is, for all $\ell \in [k-1]$, the $\ell$th component of the vector $PM^{1/2}  \left(P^\tr  f \right)^{\circ-\frac{1}{2}}$ is equal to $\sqrt{\frac{m_i}{f_i}}$. All the remaining components are $\sqrt{\frac{m_k}{f_k + \ldots + f_N}}$. 
    
    Write $\rho_\ell := f_\ell$ for all $\ell \in [k-1]$ and $\rho_k = \sum_{j=k}^N f_j$. Since $f$ is a frequency vector, $(\rho_1, \ldots \rho_k) \in \Delta$. In addition,  
    $$PM^{1/2}  \left(P^\tr  f \right)^{\circ-\frac{1}{2}} =\left[\sqrt{\frac{m_1}{\rho_1}},\dots ,\sqrt{\frac{m_{k-1}}{\rho_{k-1}}},\sqrt{\frac{m_k}{\rho_k}},\dots,\sqrt{\frac{m_k}{\rho_k}}\right]^\tr$$
    Therefore, by definition, $\lambda =\min_{(\rho_1, \ldots, \rho_k) \in\Delta} \left( \max_{\ell\in[k]} {\frac{m_\ell}{\rho_\ell}} \right)$. Now, using a complementary form of \hyperref[claim: inequality for sum mi-s]{Claim \ref{claim: inequality for sum mi-s}} we obtain $\lambda = {\sum_{\ell\in[k]}m_\ell}$. The proposition stands proved. 
    \end{proof}
    
    Finally, substituting \hyperref[propn: Value of Lambda for chosen transition probability matrix]{Proposition \ref{propn: Value of Lambda for chosen transition probability matrix}} into \hyperref[eqn: Regret in terms of sum mi]{Equation \ref{eqn: Regret in terms of sum mi}}, we obtain that there exists an instance $\mathcal{F}_{(\alpha_s, s)}$ for which algorithm $\mathscr{A}$'s regret is lower bounded as follows 
    \begin{align}
        \Regret_T = \Omega \left(\sqrt{\frac{\lambda}{T}}\right).
    \end{align}
    This completes the proof of Theorem \ref{theorem: Lower Bound for our algorithm}.

    \section{Experiments}
	\label{section: experiments}
	\vspace{-0.08in}
	We first describe \hyperref[section: UE Description]{\UE} (Uniform Exploration Algorithm), the baseline algorithm that we compare  \hyperref[alg:best policy generator]{\CE} with. This is followed by a complete description of our experimental setup. Finally, we present and discuss our main results. 
	
	\label{section: UE Description}
	\textbf{Uniform Exploration (\hyperref[section: UE Description]{\UE}):} This algorithm uniformly explores all the interventions in the instance. It first performs all the interventions $a \in \I_0$ at the start state $0$ in a round robin manner. On transitioning to any state $i\in [k]$, it performs interventions $b\in\I_i$ in a round robin manner. 

	\textbf{Setup:} We consider an MDP with a start state $0$, $k=25$ intermediate states and a terminal state. 
	At each state we have a causal graph with $n=25$ variables. 
	The number of interventions is therefore $N=2n+1 = 51$. Our reward is a Bernoulli random variable, with probability $0.5+\varepsilon$, if $X^1_1 = 1$ and $0.5$ for every other intervention (we use $\varepsilon=0.3$ in our experiments). Note that the reward function is unknown to the algorithm. Like in Lattimore et al.~\cite{Lattimore}, we set $q^i_j=0$ for $j\leq m_i$ and $0.5$ otherwise. In our setup, we set all $m_i$ values for all intermediate states $i\in[k]$ to be the same. 
	On taking action $a = do()$ at state $0$, we transition uniformly to one of the intermediate states. On taking action $do(X^0_i=1)$ (at state $0$), where $i\in [n]$, we transition with probability $\frac{2}{k}$ to state $i$ (we take $k=n$ for these experiments) and probability $\frac{1}{k} - \frac{1}{k(k-1)}$ to any other state. Recall that $q^0_i = \prob(X^0_i=1)$. Then, for all interventions $do(X^0_i=1)$, we have a transition probability vector given by $p_{(X^0_i=1)}$ which is computed from $(1-q^0_i)p_{(X^0_i=0)} + q^0_i p_{(X^0_i=1)} = p_{do()}$.

	We perform two experiments on the above model. In the first one, we run \hyperref[alg:best policy generator]{\CE} and \hyperref[section: UE Description]{\UE} for  time horizon $T\in\{1000,\ldots, 25000\}$. In the second experiment, we run \hyperref[alg:best policy generator]{\CE} and \hyperref[section: UE Description]{\UE} for a fixed time horizon $T=25000$ with $\lambda$ varying in the set $\{50, 75, \ldots, 625\}$. To vary $\lambda$, we vary $m_i$ for the intermediate states in the set $\{2,3,\ldots,25\}$. In both the experiments we average the regret over $10000$ independent runs for each setting. We use CVXPY \cite{CVXOPT} to solve the optimization problem at \hyperref[step:fstar]{Step \ref{step:fstar}} in \hyperref[alg:best policy generator]{\CE}.

    \begin{figure}[t]
    \centering
    \begin{subfigure}[t]{0.48\textwidth}
        \centering
        \includegraphics[width=\textwidth]{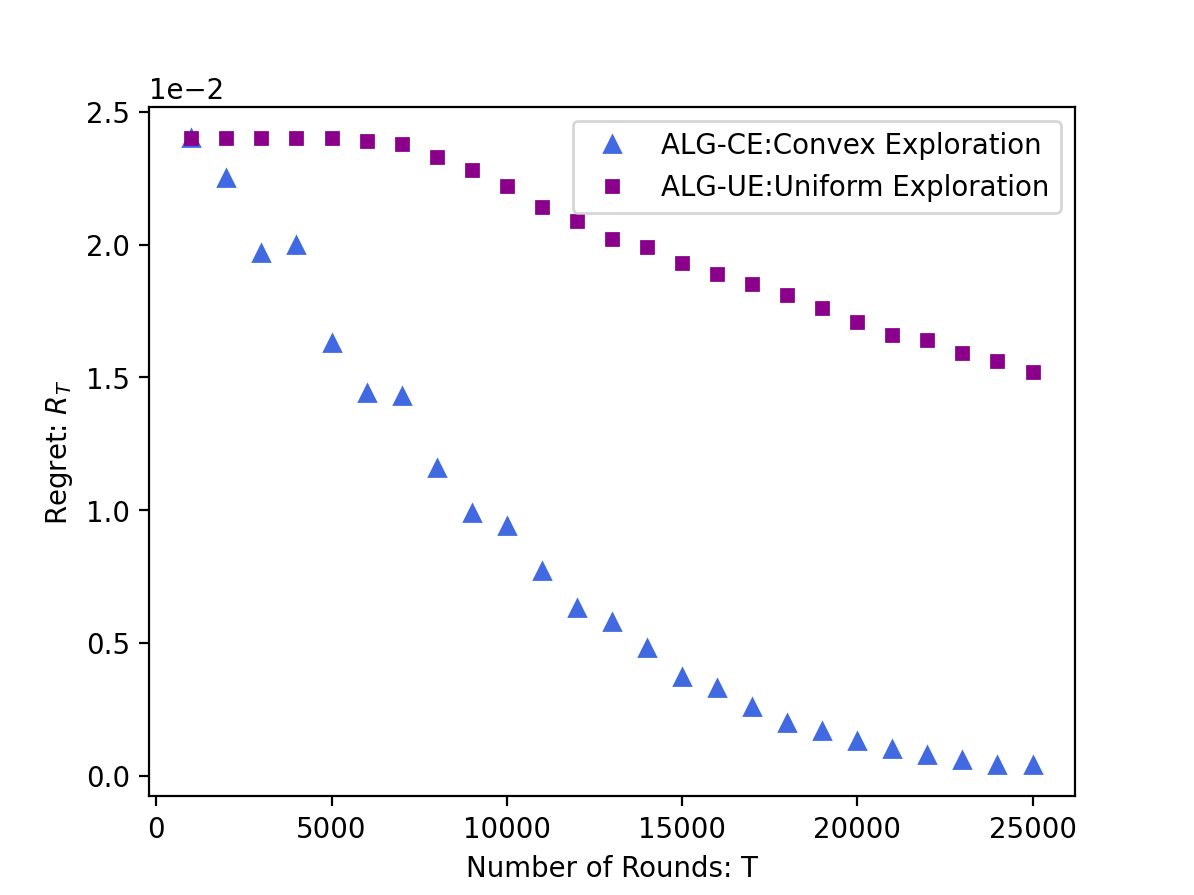}
        \caption{Expected Regret vs Time, for setup with $n=25,\ k=25,\ \lambda=50,\ \varepsilon = 0.3$ and $m=2$ for all states.}
        \label{fig:Regret with T}
    \end{subfigure}
    \hfill
    \begin{subfigure}[t]{0.48\textwidth}
        \centering
        \includegraphics[width=\textwidth]{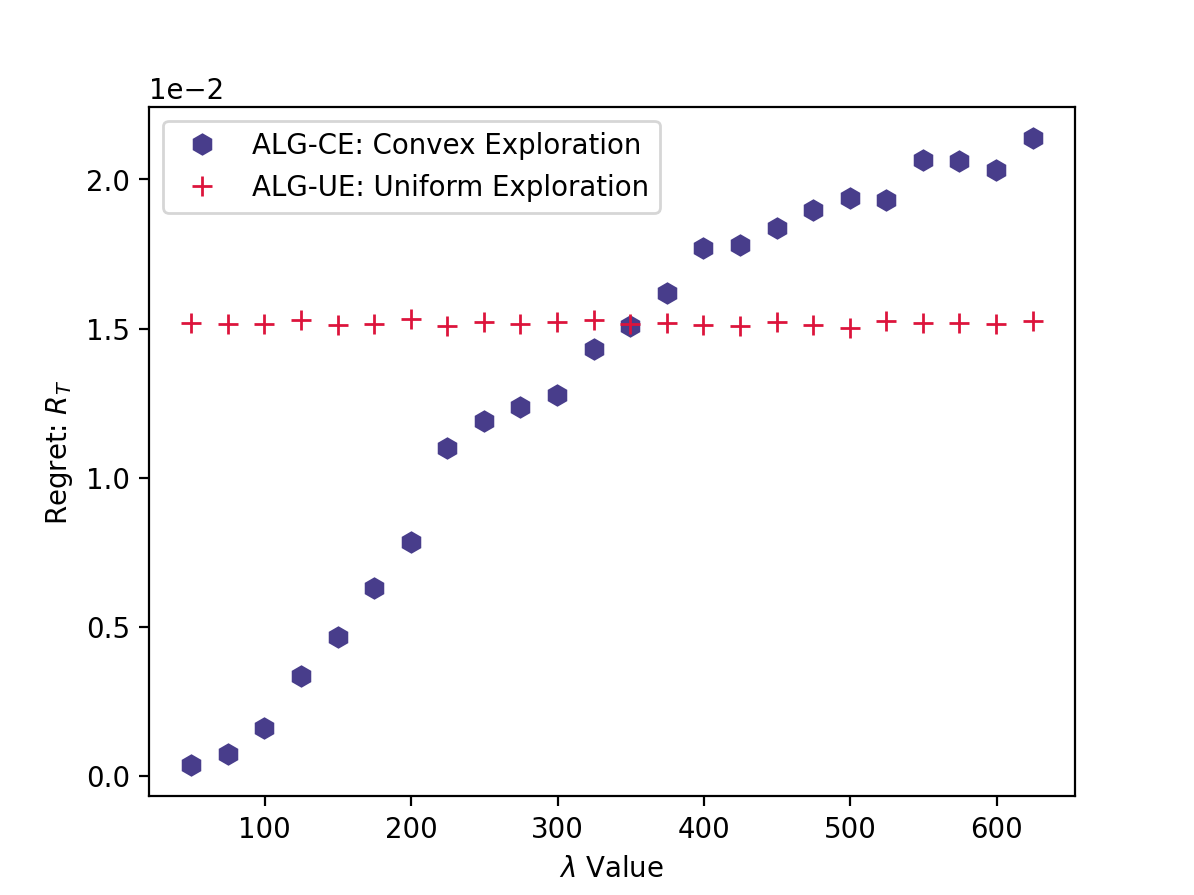}
        \caption{Expected Regret vs $\lambda$, for fixed $T=25000$, $n = 25,\ k=25,\ m_0 = 2$ and $\varepsilon = 0.3$.}
        \label{fig:Regret with Lambda}
    \end{subfigure}
    \caption{Experimental Results}
    \label{fig:Experimental Results}
    \end{figure}

	\textbf{Results:} In Figure \ref{fig:Regret with T}, we compare the expected simple regret of \hyperref[alg:best policy generator]{\CE} and  \hyperref[section: UE Description]{\UE} obtained in the first experiment. Our plots indicate that \hyperref[alg:best policy generator]{\CE} outperforms \hyperref[section: UE Description]{\UE} and its regret falls rapidly as $T$ increases.  In Figure \ref{fig:Regret with Lambda}, we plot the expected simple regret against $\lambda$ for \hyperref[alg:best policy generator]{\CE} and \hyperref[section: UE Description]{\UE} that was obtained in Experiment $2$, and empirically validate their relationship that was proved in Theorem \ref{theorem:main}. 
	
	Figure \ref{fig:Regret with Lambda} shows that \hyperref[alg:best policy generator]{\CE} outperforms \hyperref[section: UE Description]{\UE} for a wide range of $\lambda$s. This  highlights the applicability of \hyperref[alg:best policy generator]{\CE}, specifically in instances wherein $\lambda$ is dominated by the other instance parameters. This also substantiates the relevance of using causal information; note that, by construction, \hyperref[alg:best policy generator]{\CE} uses such information, whereas \hyperref[section: UE Description]{\UE} does not.

	\section{Conclusion and Future Work}
	We studied extensions of the causal bandits framework into causal MDPs by considering a two-stage setup. This accounted for non-trivial extensions from \cite{Lattimore} by considering multiple states as well as transitions between states. 
	We developed the Convex Exploration algorithm for minimizing simple regret in this model. We also identified an instance dependent parameter $\lambda$, and proved that the regret of this algorithm is $\tilde{\O}\left(\sqrt{\frac{1}{T}\max\{\lambda,\frac{m_0}{p_+}\}}\right)$. The current work also established that, for certain families of instances, this upper bound is essentially tight. Finally, we showed through experiments that our algorithm performs better than uniform (naive) exploration in a range of settings.
	
	Instead of addressing MDPs, in their unruly generality, this work considers a stepped extension of causal bandits. Building upon this, a natural way forward would be to consider multi-stage MDPs. This would likely entail the development of an exploration algorithm that solves optimization problems at various stages of the MDP. Furthermore, while our work relies on the parallel causal graph model of Lattimore et al.~\cite{Lattimore}, an active area of research is to address general causal graphs for causal bandits. Extending causal MDPs to such contexts is an interesting direction of future work.

\bibliography{References.bib}
\clearpage

\appendix
\appendixpage
\renewcommand{\baselinestretch}{1.1}

\section{Bounding the Probability of Bad Event}
	\label{appendixsection: Bounding bad events}
	
	Recall that the \texttt{good event} corresponds to $\bigcap_{i\in5}  \event i $ (see Definition \ref{defn: good events}). Write $F := \neg\left(\bigcap_{i\in5}  \event i \right)$ and note that, for the regret analysis, we require an upper bound on $\prob\{F\} = \prob\left\{\neg(\bigcap_{i\in5}  \event i )\right\} = \prob\left\{\bigcup_{i\in5} \neg  \event i \right\}$. Towards this, in this section we address $\prob\{\neg  \event i \}$, for each of the events $ \event 1 $-$ \event 5$, and then apply the union bound.

	\subsection{$ \event 1 $: Probability Bound}
	\label{appendixsection: Bounding neg E1}
	
	The next lemma upper bounds the probability of $\neg  \event 1 $.
	\begin{lem}
		\label{lem: Bound on E1}
		In each of Algorithms \ref{alg:estimateTransitionProbabilities}, \ref{alg:estimateCausalParameters} and \ref{alg:estimateRewards} and for all interventions $a \in \I_0$, we have 
		\begin{align*}
		    \prob\{\neg  \event 1 \} = \prob\left\{\sum\limits_{i=1}^k \lvert \widehat{P}_{(a,i)} - P_{(a,i)} \rvert >\frac{ p_+ }{3}\right\}<\frac{k}{T}
		\end{align*}
		whenever $T\geq \max\left\{\frac{1620 N }{ p_+ ^3},\frac{2025 N }{ p_+ ^2} \log\left(\frac{9NT}{k}\right)\right\}$.
	\end{lem}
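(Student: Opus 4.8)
The plan is to treat each of the three sampling phases and each intervention $a \in \I_0$ separately, reduce the statement to a single concentration fact about an empirical distribution, and then recombine by a union bound. Fix one of Algorithms \ref{alg:estimateTransitionProbabilities}, \ref{alg:estimateCausalParameters}, \ref{alg:estimateRewards} and an intervention $a$. The row $(\widehat{P}_{(a,i)})_{i\in[k]}$ is exactly the empirical distribution over the intermediate states produced by the $n_a$ times the transition out of state $0$ is sampled under $a$ (if $P_{(a,i)}=0$ then state $i$ is never reached, so $\widehat{P}_{(a,i)}=0$ and that coordinate contributes nothing). The one structural fact I would exploit is that the nonzero entries of row $a$ are each at least $p_+$ and sum to $1$, so the number of reachable states $s := \lvert\{i : P_{(a,i)}>0\}\rvert$ satisfies $s \leq \lceil 1/p_+\rceil$. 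Hence $\sum_{i=1}^k \lvert \widehat{P}_{(a,i)} - P_{(a,i)}\rvert$ is the $\ell_1$ error of an empirical distribution on an alphabet of size at most $1/p_+$, \emph{independent of $k$}; this is what prevents $k$ from entering the sample complexity and what ultimately produces the $1/p_+^3$ term.

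Next I would lower bound, uniformly over the three algorithms, the count $n_a$ feeding each row estimate. In Algorithm \ref{alg:estimateTransitionProbabilities}, interventions $a \in \I_{m_0}$ are performed explicitly $\tfrac{T'}{2\lvert\I_{m_0}\rvert} \geq \tfrac{T}{6n}$ times, while the rows for $a \in \I_{m_0}^c$ are read off the $T'/2$ do-nothing rounds; here I would use that the relevant configuration ($X_j^0=1$ with $q_j^0 \geq 1/j$, or $X_j^0=0$) is realized $\Omega(T/N)$ times in expectation and confirm this via a Chernoff lower bound so it holds with high probability. In Algorithms \ref{alg:estimateCausalParameters} and \ref{alg:estimateRewards} the frequency vector is averaged with the uniform vector $\tfrac{1}{\lvert\I_0\rvert}\mathbf{1}$, which guarantees $f(a) \geq \tfrac{1}{3\lvert\I_0\rvert}$ and hence $n_a \geq \tfrac{T'}{3N} = \Omega(T/N)$. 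The output of this step is a uniform bound $n_a \geq c\,T/N$ for an absolute constant $c$.

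With these two ingredients I would invoke the standard method-of-types $\ell_1$-deviation inequality,
\begin{align*}
\prob\left\{\sum_{i=1}^k \left\lvert \widehat{P}_{(a,i)} - P_{(a,i)}\right\rvert \geq \epsilon\right\} \ \leq \ 2^{s}\exp\left(-\frac{n_a\,\epsilon^2}{2}\right),
\end{align*}
with $\epsilon = p_+/3$ and $s \leq 1/p_+$, so that $\log(2^{s}) = \tfrac{\log 2}{p_+}$ and the exponent is $n_a p_+^2/18$. Taking logarithms, requiring the right-hand side to fall below $\tfrac{k}{3NT}$, and substituting $n_a \geq c\,T/N$ turns the condition into $T \gtrsim \tfrac{N}{p_+^3} + \tfrac{N}{p_+^2}\log\!\big(\tfrac{NT}{k}\big)$: the first term comes from the $\tfrac{\log 2}{p_+}$ contribution of $2^{s}$ scaled by $\tfrac{18}{p_+^2}$, and the second from the $\log(NT/k)$ deficit. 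A final union bound over the at most $3N$ (algorithm, intervention) pairs upgrades the per-pair bound $\tfrac{k}{3NT}$ to $\prob\{\neg\event1\} < \tfrac{k}{T}$, and the two terms collapse into the stated threshold $\max\{\tfrac{1620N}{p_+^3}, \tfrac{2025N}{p_+^2}\log(\tfrac{9NT}{k})\}$ after fixing constants (the $9N$ inside the log absorbing the union-bound and $T'=T/3$ slack).

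I expect the main obstacle to be the sample-count bookkeeping of the second step rather than the concentration inequality, which is essentially black-box. The delicate case is Algorithm \ref{alg:estimateTransitionProbabilities}, where the rows for $\I_{m_0}^c$ are inferred from do-nothing observations: the number of \emph{useful} observations is itself a random variable, so one must first establish a high-probability lower bound on it (and verify that the event of too few useful samples is also $O(k/T)$, to be folded into the same threshold) before the empirical distribution over reachable states can be controlled. Ensuring the resulting $n_a \geq c\,T/N$ is genuinely uniform across both intervention classes and all three phases, with a single clean constant, is where the care lies.
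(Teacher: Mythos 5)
Your proposal is correct and follows essentially the same route as the paper: per-(algorithm, intervention) $\ell_1$-concentration of the multinomial empirical row, the observation that the support size is at most $1/p_+$ (which is what injects the $N/p_+^3$ term), a uniform $\Omega(T/N)$ lower bound on the per-intervention count, and a union bound over the $3N$ pairs. The only difference is cosmetic — you invoke the method-of-types bound with its $2^{s}$ prefactor where the paper uses Devroye's multinomial inequality (prefactor $3$, but with the precondition $T_a \geq 20s/\eta^2$, which is where its $1/p_+^3$ term arises instead) — and your flagged concern about the randomness of the observation counts for $\I_{m_0}^c$ in Algorithm \ref{alg:estimateTransitionProbabilities} is, if anything, handled more carefully than in the paper, which simply asserts the $T/(9N)$ count.
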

	\begin{proof}
		On performing any intervention $a\in\I_0$ at state $0$, the intermediate state that we visit follows a multinomial distribution. 	Hence, we can apply Devroye's inequality (for multinomial distributions) to obtain a concentration guarantee; we state the inequality next in our notation.

	\begin{lem}[{Restatement of Lemma 3 in \cite{Devroye1983}}]
	\label{eqn: equivalent condition to luc devroye}
    Let $T_a$ be the number of times intervention $a\in\I_0$ is performed in state $0$. Then, for any $\eta >0$ and any $T_a\geq \frac{20s}{\eta^2}$, we have 
    \begin{align*}
        \prob\left\{\sum\limits_{i=1}^k \lvert \widehat{P}_{(a,i)} - P_{(a,i)} \rvert > \eta\right\}\leq 3\exp\left(-\frac{T_a\eta^2}{25}\right)
    \end{align*}
    Here, $s$ is the support of the distribution (i.e., the number of states that can be reached from $a$ with a nonzero probability).
	\end{lem}

		Note that each intervention $a\in\I_0$ is performed at least $T_a = \frac{T}{9N}$ times across Algorithms \ref{alg:estimateTransitionProbabilities}, \ref{alg:estimateCausalParameters} and \ref{alg:estimateRewards}. 
		Setting $\eta = \frac{ p_+ }{3}$ and $T_a = \frac{T}{9N}$ above, we get that for each intervention $a\in\I_0$, in each subroutine:
		\begin{align*}
		    \prob\left\{\sum_{i=1}^k \lvert P_{(a,i)} - \widehat{P}_{(a,i)} \rvert > \frac{ p_+ }{3}\right\} \leq 3\exp\left(-\frac{T p_+ ^2}{9N \cdot  9\cdot 25}\right)=3\exp\left(-\frac{T p_+ ^2}{2025 N }\right)
		\end{align*}
		
		Note that to apply the inequality, we require $\frac{T}{9N}\geq \frac{180s}{ p_+ ^2}$, i.e., $T\geq \frac{1620 sN}{ p_+ ^2}$. In the current context, the support size $s$ is at most $\frac{1}{p_+}$; this follows from the fact that on performing any intervention $a\in\I_0$, at most $\frac{1}{p_+}$ states can have $P_{(a,i)}\geq p_+$. Hence, the requirement reduces to $T\geq \frac{1620 N}{ p_+ ^3}$.
		
		Next, we union bound the probability over the $N$ interventions (at state $0$) and the three subroutines, to obtain that, for any intervention $a \in \I_0$ and in any subroutine,  
		\begin{align*}
		    \prob\left\{\sum_{i=1}^k \lvert P_{(a,i)} - \widehat{P}_{(a,i)} \rvert > \frac{ p_+ }{3}\right\} \leq 3N \cdot 3\exp\left(-\frac{T p_+ ^2}{2025N}\right) = 9N\exp\left(-\frac{T p_+ ^2}{2025N}\right)
		\end{align*}

		Note that $9N\exp\left(-\frac{T p_+ ^2}{2025N}\right) \leq \frac{k}{T}$, for any $T \geq \frac{2025N}{ p_+ ^2} \log\left(\frac{9NT}{k}\right)$. 
		Hence, for any $T\geq \max\left\{\frac{1620 N }{ p_+ ^3}\frac{2025 N }{ p_+ ^2} \log\left(\frac{9NT}{k}\right)\right\}$, we have:
		\begin{align*}
		    \prob[\neg  \event 1 ] \leq 9N\exp\left(-\frac{T p_+ ^2}{2025N}\right) \leq \frac{k}{T}
		\end{align*}
		This completes the proof of the lemma. 
	\end{proof}

	\subsection{$ \event 2 $: Probability Bound}
	\label{appendixsection: Bounding neg E2 and neg E3}
	
	In this section, we bound the probabilities that our estimated $\widehat{m}_i$s are far away from the true causal parameters $m_i$s. 
	
	\begin{lem}
		\label{lem: Bound on E2}
		For any $T \geq 144m_0 \log\left(\frac{TN}{k}\right)$, in \hyperref[alg:estimateTransitionProbabilities]{Algorithm \ref{alg:estimateTransitionProbabilities}}, 
		\begin{align*}
		   \prob[\neg  \event 2 ] = \prob\left\{\widehat{m}_0 \notin [\frac{2}{3}m_0,2m_0] \right\}\leq \frac{k}{T} 
		\end{align*}
	\end{lem}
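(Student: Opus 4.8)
The plan is to analyze the estimates $\widehat{q}^0_j$ produced in the first loop of \hyperref[alg:estimateTransitionProbabilities]{Algorithm \ref{alg:estimateTransitionProbabilities}}, where each variable is sampled $n_s = T/6$ times under $do()$, and to translate concentration of these estimates into control of $\widehat{m}_0$. The key structural fact I would use is the \emph{crossing} property: since the sorted probabilities $q^0_{(1)} \le \cdots \le q^0_{(n)}$ are increasing while $1/j$ is decreasing, the quantity $q^0_{(j)} - 1/j$ is increasing in $j$, so $\{j : q^0_{(j)} < 1/j\}$ is a prefix $\{1, \ldots, m_0\}$. Consequently the $m_0$ ``small'' variables satisfy $q^0_{(j)} \le q^0_{(m_0)} < 1/m_0$, whereas every ``large'' variable ($j \ge m_0+1$) satisfies $q^0_{(j)} \ge q^0_{(m_0+1)} \ge 1/(m_0+1) \ge 1/(2m_0)$. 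This lower bound on the large variables is what rules out pathological configurations and drives the whole argument; the same monotonicity applies to the empirically sorted values $\widehat{q}^0_{(j)}$.

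For the lower direction ($\widehat{m}_0 \ge \frac23 m_0$), I would show it suffices that every one of the $m_0$ small variables has $\widehat{q}^0_j < \frac{3}{2m_0}$: in that case at least $m_0 \ge \frac23 m_0$ empirical values lie below $\frac{3}{2m_0}$, so $\widehat{q}^0_{(\frac23 m_0)} < \frac{3}{2m_0} = 1/(\frac23 m_0)$ and the defining condition holds at index $\frac23 m_0$. Since a small variable has true mean $< 1/m_0$ while the threshold count is $n_s \cdot \frac{3}{2m_0}$, a multiplicative Chernoff upper-tail bound gives per-variable failure probability $\le \exp(-\Omega(n_s/m_0)) = \exp(-\Omega(T/m_0))$, and a union bound over the $\le N$ variables controls this direction.

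For the upper direction ($\widehat{m}_0 \le 2m_0$), I would use monotonicity in reverse: $\widehat{m}_0 > 2m_0$ forces $\widehat{q}^0_{(2m_0)} < \frac1{2m_0}$, i.e., at least $2m_0$ variables with $\widehat{q}^0_j < \frac1{2m_0}$. It therefore suffices to show that no large variable crosses below $\frac1{2m_0}$, since then at most the $m_0$ small variables lie below that threshold, which forces $\widehat{q}^0_{(m_0+1)} \ge \frac1{2m_0}$ and hence $\widehat{m}_0 < 2m_0$. A large variable has true mean $\ge \frac1{m_0+1}$, at least a factor $\approx 2$ above the threshold $\frac1{2m_0}$, so a multiplicative Chernoff lower-tail bound again gives per-variable failure probability $\exp(-\Omega(T/m_0))$, and a union bound finishes this direction.

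Combining the two directions by a union bound, the total failure probability is $\le N \exp(-\Omega(T/m_0))$, and choosing $T \ge 144 m_0 \log(TN/k)$ makes this at most $k/T$ (the explicit constant coming from the Chernoff exponents together with the factor $n_s = T/6$). The main obstacle I anticipate is not the concentration itself but setting up the thresholds with enough multiplicative slack so that (a) the empirical re-sorting cannot push $\widehat{m}_0$ outside the $[\frac23, 2]$ window, and (b) the sample requirement stays \emph{linear} in $m_0$; both hinge on exploiting the $q^0_{(j)} \ge 1/(2m_0)$ lower bound for large variables through multiplicative (rather than additive) tail bounds, since an additive Hoeffding estimate would only yield a quadratic $m_0^2$ dependence.
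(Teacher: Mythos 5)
Your proposal is essentially correct, but it takes a much longer route than the paper: the paper's entire proof is a two-line appeal to Lemma~8 of Lattimore et al.~\cite{Lattimore}, which states exactly that with $T/3 \geq 48\, m_0 \log(N/\delta)$ samples of the parallel graph one has $\widehat{m}_0 \in [\frac{2}{3}m_0, 2m_0]$ with probability at least $1-\delta$; setting $\delta = k/T$ gives the claimed bound and the threshold $T \geq 144\, m_0\log(TN/k)$. What you have written is, in effect, a self-contained re-derivation of that cited lemma: the prefix/crossing structure of $\{j : q^0_{(j)} < 1/j\}$, the order-free counting argument for the empirically re-sorted values, and multiplicative Chernoff bounds in both tails are precisely the ingredients of the Lattimore--Reid--Szepesv\'ari proof. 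Your version buys transparency (and correctly identifies why multiplicative rather than additive concentration is needed to keep the sample requirement linear in $m_0$), at the cost of having to nail down constants and one edge case you currently gloss over: in the upper direction, a ``large'' variable has true mean at least $1/(m_0+1)$ while your threshold is $1/(2m_0)$, and the multiplicative gap $2m_0/(m_0+1)$ degenerates to $1$ at $m_0=1$, so the Chernoff lower-tail bound gives nothing there. The fix is to use the sharper consequence of $\widehat{m}_0 > 2m_0$, namely $\widehat{q}^0_{(2m_0+1)} < 1/(2m_0+1)$, which restores a gap of at least $3/2$ for all $m_0 \geq 1$. With that repair, and a verification that the resulting exponents are consistent with the constant $144$, your argument is a valid standalone substitute for the citation.
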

	
	\begin{proof}
		We allocate time $\frac{T}{3}$ to \hyperref[alg:estimateTransitionProbabilities]{Algorithm \ref{alg:estimateTransitionProbabilities}}. Lemma 8 in \cite{Lattimore} ensures that, for any $\delta \in (0,1)$ and $\frac{T}{3} \geq 48 m_0 \log(\frac{N}{\delta})$, we have $\widehat{m}_0 \in [\frac{2}{3}m_0,2m_0]$, with probability at least $(1-\delta)$.    
		Setting $\delta = \frac{k}{T}$, we get the required probability bound.
	\end{proof}
	
	\subsection{$ \event 3 $: Probability Bound}
	Next, we address $\prob\{\neg  \event 3   \mid   \event 1 \}$. 
	\begin{lem}
		\label{lem: Bound on E3}
		For any $T\geq \frac{648\max(m_i)N}{p_+} \log\left(2NT\right)$, in each of Algorithms \ref{alg:estimateCausalParameters} and \ref{alg:estimateRewards}, we have
		\begin{align*}
		    \prob\left\{\exists i\in[k],\quad \widehat{m}_i \notin [\frac{2}{3}m_i,2m_i] \ \big \vert  \  \event 1  \right\}\leq \frac{k}{T}
		\end{align*}
	\end{lem}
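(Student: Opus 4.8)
The plan is to reduce the claim to Lemma~8 of Lattimore et al.~\cite{Lattimore}, which guarantees $\widehat m_i \in [\tfrac23 m_i, 2m_i]$ (except with probability $\delta$) once state $i$ has been visited---and $do()$ performed there---at least $48 m_i \log(N/\delta)$ times. Since both Algorithm~\ref{alg:estimateCausalParameters} and Algorithm~\ref{alg:estimateRewards} estimate the $q^i_j$s from the $do()$ observations collected at each visit, the entire task is to show that, conditioned on $\event1$, every state $i \in [k]$ is visited enough times with high probability, and then to combine this with Lemma~8 and a union bound.

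First I would lower bound the per-state visitation. In both subroutines the frequency vector $f$ is formed by mixing a computed vector with the uniform vector $\frac{1}{N}\mathbf 1$, so $f(a) \geq \frac{1}{3N}$ for every $a \in \I_0$ (and $f(a)\geq\frac{1}{2N}$ in Algorithm~\ref{alg:estimateCausalParameters}). Because every state $i$ is reachable, there is an intervention $a_i$ with $P_{(a_i,i)} \geq p_+$, whence $(P^\tr f)_i \geq f(a_i) P_{(a_i,i)} \geq \frac{p_+}{3N}$. Conditioning on $\event1$ is used here to pass between $\widehat P$ (against which $\tilde f$ is optimized) and the true $P$ via the multiplicative bounds of Corollary~\ref{cor: corollary to E1 as a multiplicative bound}, although the uniform-mixing bound already keeps $(P^\tr f)_i$ bounded away from $0$. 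Since each intervention $a$ is performed a \emph{deterministic} number of times proportional to $f(a)$ out of a budget $T'=T/3$ (with a further factor $\tfrac12$ in Algorithm~\ref{alg:estimateRewards}), the expected number of visits to state $i$ is $\Theta\!\left(\frac{T p_+}{N}\right)$.

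Next I would turn this expectation into a high-probability guarantee. Writing $V_i$ for the number of visits to state $i$, $V_i$ is a sum of independent indicators, so a multiplicative Chernoff bound gives $\prob\{V_i \leq \tfrac12 \E[V_i]\} \leq \exp(-\E[V_i]/8)$, which is at most $\frac{1}{2T}$ for $T$ as large as in the statement. Thus, except on a low-probability event, $V_i$ exceeds a constant times $\frac{T p_+}{N}$. Conditioned on $\{V_i \geq 48 m_i \log(N/\delta)\}$, Lemma~8 fails with probability at most $\delta$; choosing $\delta$ of order $\frac{1}{NT}$ and $T \geq \frac{648\max_i m_i\, N}{p_+}\log(2NT)$ simultaneously forces $\tfrac12\E[V_i] \geq 48 \max_i m_i \log(N/\delta)$ and drives both failure probabilities down. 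A union bound over the $k$ states (and over the two subroutines) then yields the stated bound $\frac{k}{T}$.

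The main obstacle is the interaction between the \emph{random} visitation count $V_i$ and Lemma~8, which is stated for a fixed number of samples. I would handle this cleanly by conditioning on $\{V_i \geq 48 m_i \log(N/\delta)\}$ and bounding its complement separately with the Chernoff estimate above, rather than substituting a random count into Lemma~8. The remaining work is bookkeeping: tracking the different mixing constants ($\tfrac12$ in Algorithm~\ref{alg:estimateCausalParameters} versus $\tfrac13$ in Algorithm~\ref{alg:estimateRewards}) and the $T'/2$ split, so that the single threshold $\frac{648\max(m_i)N}{p_+}\log(2NT)$ suffices for both subroutines at once.
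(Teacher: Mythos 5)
Your proposal is correct in substance and shares the paper's overall skeleton (reduce to Lemma~8 of Lattimore et al., lower bound the number of visits to each state $i$, then union bound over the $k$ states and the two subroutines), but it takes a genuinely different route at the key step of lower bounding the visitation count. You bound the \emph{expected} number of visits via the uniform mixing ($f(a)\geq \frac{1}{3N}$, so $a_i$ with $P_{(a_i,i)}\geq p_+$ is performed at least $\frac{T}{9N}$ times) and then invoke a fresh multiplicative Chernoff bound to control the realized count $V_i$, paying an extra failure event per state. The paper instead gets a \emph{deterministic} lower bound on the realized count under $\event{1}$: since $\widehat{P}_{(\alpha,i)}$ in Algorithms \ref{alg:estimateCausalParameters} and \ref{alg:estimateRewards} is itself an empirical frequency computed from the at least $\frac{T}{9N}$ performances of $\alpha$, the visit count satisfies $T_i \geq \frac{T}{9N}\widehat{P}_{(\alpha,i)} \geq \frac{2Tp_+}{27N}$ directly from Corollary \ref{cor: corollary to E1 as a multiplicative bound} — no new concentration inequality and no extra union-bound term. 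Your route is more self-contained (as you note, it does not really need $\event{1}$ for the count) and sidesteps the slightly delicate observation that the estimate encodes the realized count; the paper's route is tighter in that the stated bound $\frac{k}{T}$ and the constant $648$ in the threshold come out exactly, whereas your version would need a modestly larger constant (roughly $864$ after the factor-$\frac12$ Chernoff loss) and a slightly more careful accounting to absorb the additional $\frac{k}{2T}$-type Chernoff failure into the $\frac{k}{T}$ target. Your handling of the random-count-versus-fixed-sample-size issue (condition on $\{V_i \geq 48 m_i \log(N/\delta)\}$, use independence of the transition and the causal-variable realizations at state $i$) is the right fix and is implicitly what the paper relies on as well; just make sure the Chernoff step is applied conditionally on the (random) frequency vector computed in the first phase, which is harmless since the expectation bound holds uniformly over all admissible $f$.
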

	\begin{proof}
		Fix any reachable state $i\in[k]$. Corresponding to such a state, there exists an intervention $\alpha\in\I_0$ such that $P_{(\alpha,i)} \geq p_+$. Event $ \event 1 $ (Corollary \ref{cor: corollary to E1 as a multiplicative bound}) implies that $\widehat{P}_{(\alpha,i)} \geq \frac{2}{3} P_{(\alpha,i)} \geq \frac{2}{3}p_+$.
		
		Now, write $T_i$ to denote the number of times state $i\in[k]$ is visited by the Algorithms \ref{alg:estimateCausalParameters} and \ref{alg:estimateRewards}. Recall that in the subroutines we estimate $\widehat{P}_{(\alpha,i)}$ by counting the number of times state $i$ was reached and simultaneously intervention $\alpha$ observed. Furthermore, note that we allocate to every intervention at least $\frac{T}{9N}$ time (See Steps 2 in both the subroutines). In particular, intervention $\alpha$ was necessarily observed $\frac{T}{9N}$ times. Therefore, $\widehat{P}_{(a,i)} \leq \frac{T_i}{\left(\frac{T}{9N}\right)}$. This inequality leads to a useful lower bound: $T_i \geq \frac{T}{9N} \ P_{(a,i)} \geq T\frac{2p_+}{27N}$.

		We now restate Lemma 8 from \cite{Lattimore}:
		Let $T_i$ be the number of times state $i\in[k]$ is observed. Then,
		\begin{align*}
		    \prob\left\{\widehat{m}_i\notin[\frac{2}{3}m_i,2m_i]\right\} \leq 2N\exp\left(-\frac{T_i}{48m_i}\right)
		\end{align*}

		Since $T_i \geq \frac{2Tp_+}{27N}$, this guarantee from \cite{Lattimore} corresponds to
		\begin{align*}
		    \prob\left\{\widehat{m}_i\notin[\frac{2}{3}m_i,2m_i]\ \big \vert  \  \event 1 \right\} \leq 2N\exp\left(-\frac{Tp_+}{648N m_i}\right) \leq 2N\exp\left(-\frac{Tp_+}{648N\max(m_i)}\right)
		\end{align*} 
		
		Union bounding over all states $i\in[k]$ and the two Algorithms \ref{alg:estimateCausalParameters} and \ref{alg:estimateRewards}, we obtain: 
		\begin{align*}
		    \prob\left\{\exists i \in[k] \text{ in Algorithms \ref{alg:estimateCausalParameters}, \ref{alg:estimateRewards} } \text{ with } \widehat{m}_i \notin [\frac{2}{3}m_i,2m_i] \ \big \vert  \  \event 1  \right\} \leq 2Nk\exp\left(-\frac{Tp_+}{648N\max(m_i)}\right)
		\end{align*}
		Finally, substituting the value of $T\geq \frac{648\max(m_i)N}{p_+} \log\left(2NT\right)$, gives us:
		\begin{align*}
		    \prob\{\exists i \in[k] \text{ in Algorithms \ref{alg:estimateCausalParameters}, \ref{alg:estimateRewards} } \text{ with } \widehat{m}_i &\notin [\frac{2}{3}m_i,2m_i] \ \big \vert  \  \event 1  \}\\
		    &\leq 2Nk\exp\left(-\frac{p_+}{648N\max(m_i)}\cdot \left[\frac{648\max(m_i)N}{p_+} \log\left(2NT\right)\right]\right)\\
		    &=\frac{k}{T}
		\end{align*}
		This completes the proof.
	\end{proof}

	\subsection{$ \event 4$: Probability Bound}
	\label{appendixsection: Bounding neg E4}
	
	The following lemma provides an upper bound for $\prob\{\neg  \event 4 \mid   \event 2 \}$.
	\begin{lem}
		\label{lem: Bound on E4}
		Let $\eta' := \sqrt{\frac{150m_0}{Tp_+}\log\left(\frac{3T}{k}\right)}$. Then,
		\begin{align*}
		    \prob\{\neg  \event 4 \mid   \event 2 \} =  \prob\left\{\sum\limits_{i\in[k]}\left\lvert P_{(a,i)}-\widehat{P}_{(a,i)}\right\rvert > \eta' \big \vert   \event 2  \right\} \leq \frac{k}{T}
		\end{align*}
	\end{lem}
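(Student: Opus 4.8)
The plan is to invoke Devroye's multinomial concentration inequality (restated as Lemma~\ref{eqn: equivalent condition to luc devroye}) once for each intervention $a \in \I_0$, with deviation parameter $\eta = \eta'$, and then take a union bound over the $N$ interventions. Conditioning on $\event 2$ is precisely what lets us lower bound the number of samples underlying each estimate $\widehat{P}_{(a,i)}$ produced by Algorithm~\ref{alg:estimateTransitionProbabilities}.

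First I would pin down the sample counts. Algorithm~\ref{alg:estimateTransitionProbabilities} runs with budget $T' = T/3$: it spends $T'/2 = T/6$ rounds on $do()$ and then $\tfrac{T'/2}{\widehat{m}_0} = \tfrac{T}{6\widehat{m}_0}$ rounds on each $a \in \I_{\widehat{m}_0}$. Under $\event 2$ we have $\widehat{m}_0 \le 2 m_0$, so every explicitly performed intervention $a \in \I_{\widehat{m}_0}$ is sampled $T_a \ge \tfrac{T}{12 m_0}$ times. For the remaining interventions $a \in \I_{m_0}^c$, the estimate $\widehat{P}_{(a,i)}$ is formed from the $do()$ rounds by conditioning on the observed value of the associated variable; by the defining property of $\I_{m_0}^c$ the conditioning value is seen with probability at least $\tfrac{1}{m_0+1}$ (for a $do(X^0_j = 1)$) or at least $\tfrac12$ (for a $do(X^0_j = 0)$, using the WLOG convention $q^0_j \le \tfrac12$), so the effective number of conditioned samples is again $\Omega(T/m_0)$ with high probability. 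Hence every row $\widehat{P}_{(a,\cdot)}$ rests on at least $T_a = \Omega(T/m_0)$ effective observations.

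Next I would apply Lemma~\ref{eqn: equivalent condition to luc devroye} with $\eta = \eta'$ and support $s \le 1/p_+$ (on any intervention at most $1/p_+$ states carry probability $\ge p_+$, and all nonzero transition probabilities are $\ge p_+$). Substituting $T_a \ge T/(12 m_0)$, $s \le 1/p_+$, and $\eta'^2 = \tfrac{150 m_0}{T p_+}\log(3T/k)$, the Devroye precondition $T_a \ge 20 s/\eta'^2$ reduces to $\log(3T/k) \gtrsim 1$, which holds for $T \ge T_0$. The inequality then gives, for each single intervention, $\prob\{\sum_i |\widehat{P}_{(a,i)} - P_{(a,i)}| > \eta' \mid \event 2\} \le 3\exp(-T_a \eta'^2/25)$, and the exponent cancels cleanly to $\tfrac{1}{2p_+}\log(3T/k)$, so the per-intervention tail is $3(3T/k)^{-1/(2p_+)}$.

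Finally I would union bound over all $N$ interventions in $\I_0$ and argue the total is at most $k/T$. The main obstacle is exactly this last absorption: to beat $k/(NT)$ after the union one must trade the factor $\tfrac{1}{p_+}$ sitting inside $\eta'^2$ against the $\log N$ produced by the union bound, and this is where the constants ($150$, the $1/p_+$ weighting) and the hypothesis $T \ge T_0 = \widetilde{O}(N\max_i m_i/p_+^3)$ earn their keep. For $p_+$ bounded below $\tfrac12$ the factor $\tfrac{1}{2p_+}-1$ is strictly positive, and $T \ge T_0$ forces $\log(3T/k) \gtrsim \log(N/p_+^3)$, so that $(\tfrac{1}{2p_+}-1)\log(3T/k) \ge \log N$ and the union survives; for $p_+ \ge \tfrac12$ each intervention reaches at most two states, and a direct binomial (Hoeffding) bound on the support gives an even sharper tail, so that regime is handled separately. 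A secondary technical point to be careful about is that for $a \in \I_{m_0}^c$ the estimate is a \emph{conditional} empirical frequency, so Devroye must be applied on the high-probability event that the conditioning value was observed $\Omega(T/m_0)$ times, mirroring the argument of Lattimore et al.~\cite{Lattimore}. Combining the two $p_+$ regimes yields $\prob\{\neg \event 4 \mid \event 2\} \le \tfrac{k}{T}$, as claimed.
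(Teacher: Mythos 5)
Your route is the same as the paper's: lower-bound the number of observations $T_a$ of each intervention $a\in\I_0$ in Algorithm~\ref{alg:estimateTransitionProbabilities} by $\Omega(T/m_0)$ using $\event 2$ (via $\widehat{m}_0\le 2m_0$), invoke Devroye's inequality with support size $s\le 1/p_+$, and substitute $\eta'$. Two of your refinements are genuinely more careful than the paper: you treat the $\I_{m_0}^c$ interventions as conditional empirical frequencies whose effective sample count must itself be controlled on a high-probability event (the paper simply asserts every intervention is ``observed'' at least $\tfrac{T}{3\widehat{m}_0}$ times), and you actually take the union bound over the $N$ interventions that the ``for all $a\in\I_0$'' form of $\event 4$ requires, whereas the paper proves only the per-intervention tail $3\exp\left(-\tfrac{1}{p_+}\log(3T/k)\right)\le k/T$ (using $1/p_+\ge 1$) and stops.

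The gap is in how you absorb that union bound. With your (more conservative, and correct) count $T_a\ge T/(12m_0)$, the per-intervention exponent is $\tfrac{1}{2p_+}\log(3T/k)$, and beating $k/T$ after multiplying by $N$ requires $\left(\tfrac{1}{2p_+}-1\right)\log(3T/k)\ge\log N$ up to constants. You claim this follows from $T\ge T_0$ when $p_+<\tfrac12$, but the coefficient $\tfrac{1}{2p_+}-1$ tends to $0$ as $p_+\uparrow\tfrac12$, so for $p_+$ just below $\tfrac12$ the requirement becomes $\log(3T/k)\gtrsim\log(N)/(1-2p_+)$, which is not implied by $T\ge T_0=\widetilde{O}(N\max_i m_i/p_+^3)$; your separate Hoeffding argument only covers $p_+\ge\tfrac12$. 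So the case split does not close. The honest fixes are either to put $\log(NT)$ rather than $\log(3T/k)$ inside $\eta'$ (as the paper does for $\widehat{\eta}_i$ in $\event 5$), or to settle for a bound of $Nk/T$, which is still harmless in the downstream regret computation. As written, your argument is tighter in its bookkeeping than the paper's but fails in a corner of the parameter space that the paper avoids only by silently omitting the union bound.
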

	\begin{proof}
		As in the proof of Lemma \ref{lem: Bound on E1}, we will use Devroye's inequality. Write $T_a$ to denote the number of times intervention $a\in\I_0$ is observed (in state $0$) in  Algorithm \ref{alg:estimateTransitionProbabilities}. 
		
		For any $\eta \in (0,1)$ and with $T_a\geq \frac{20s}{ \eta ^2}$, Devroye's inequality gives us 
		\begin{align*}
		    \prob\left\{\sum\limits_{i=1}^k \lvert \widehat{P}_{(a,i)} - P_{(a,i)} \rvert >  \eta \right\}\leq 3\exp\left(-\frac{T_a \eta ^2}{25}\right)
		\end{align*}
		Here, $s$ is the size of the support of the multinomial distribution.
		
		We first show that $T_a$ is sufficiently large, for each intervention $a \in \I_0$. Recall that we allocate time $\frac{T}{3}$ to \hyperref[alg:estimateTransitionProbabilities]{Algorithm \ref{alg:estimateTransitionProbabilities}}. Furthermore, we observe each intervention in state $0$, at least $\frac{T}{3\widehat{m}_0}$ times, either as part of the do-nothing intervention or explicitly in Step \ref{step:alg2step10} of Algorithm \ref{alg:estimateTransitionProbabilities}. 
		Now, event $ \event 2 $ ensures that $\widehat{m}_0 \in [\frac{2}{3}m_0,2m_0]$. Hence, each intervention $a\in\I_0$ is observed $T_a \geq \frac{T}{3\widehat{m}_0} \geq  \frac{T}{3\cdot2m_0} = \frac{T}{6m_0}$ times.
		
		Substituting this inequality for $T_a$ in the above-mentioned probability bound, when $T\geq \frac{120sm_0}{\eta^2}$, we obtain:
		\begin{align*}
		    \prob\left\{\sum\limits_{i=1}^k \lvert \widehat{P}_{(a,i)} - P_{(a,i)} \rvert > \eta\right\}\leq 3\exp\left(-\frac{T\eta^2}{150m_0}\right)
		\end{align*}
		As observed in Lemma \ref{lem: Bound on E1}, the support size $s$ is at most $\frac{1}{p_+}$. Therefore, the requirement on $T$ reduces to $T\geq \frac{120m_0}{\eta^2 p_+}$. 		
		
		Setting $\eta = \sqrt{\frac{150m_0}{Tp_+}\log\left(\frac{3T}{k}\right)}$ gives us   \begin{align*}\prob\left\{\sum\limits_{i=1}^k \lvert \widehat{P}_{(a,i)} - P_{(a,i)} \rvert > \sqrt{\frac{150m_0}{Tp_+}\log\left(\frac{3T}{k}\right)}\right\} \leq 3\exp\left(-\frac{T}{150m_0}\left[\sqrt{\frac{150m_0}{Tp_+}\log\left(\frac{3T}{k}\right)}\right]^2\right) \leq \frac{k}{T}.
		\end{align*} 
		Therefore 
		\begin{align*}
		    \prob\left\{\sum\limits_{i=1}^k \lvert \widehat{P}_{(a,i)} - P_{(a,i)} \rvert > \eta\right\}\leq \frac{k}{T}
		\end{align*}
		This probability bound requires $T\geq \frac{120m_0}{\eta^2 p_+}$. That is, $\eta \geq \sqrt{\frac{120m_0}{Tp_+}}$. This inequality is satisfied by our choice of $\eta$. Hence, the lemma stands proved. 
	\end{proof}
	
	\subsection{$ \event 5$: Probability Bound}
	\label{appendixsection: Bounding neg E5}
	The next lemma bounds $\prob\{\neg  \event 5 \mid  \event 1 , \event 3 \}$.
	\begin{lem}
		\label{lem: Bound on E5}
		Let $\widehat{\eta}_i = \sqrt{\frac{27 \widehat{m}_i}{T(\widehat{P}^\tr \widehat{f}^*)_i}\log\left(2TN \right)}$. Then, 
		\begin{align*}
		    \prob\left\{\exists i\in[k] \text{ and } a \in \I_i \text{ such that } \left \lvert  \E\left[R_i  \mid  a\right]-\widehat{\mathcal{R}}_{(a,i)} \right \rvert > \widehat{\eta}_i \ \mid \  \event 3 ,  \event 1 \right\} \leq \frac{k}{T}
		\end{align*}
		Equivalently, 
		\begin{align*}
		    \prob\{\neg  \event 5  \mid   \event 3 , \event 1 \} \leq \frac{k}{T}
		\end{align*}
	\end{lem}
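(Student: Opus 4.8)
The plan is to condition throughout on the events $\event{1}$ and $\event{3}$ and to show that, for each state $i\in[k]$ and each intervention $a\in\I_i$, the estimate $\widehat{\mathcal{R}}_{(a,i)}$ is built from enough reward samples that a Hoeffding bound forces $|\E[R_i\mid a]-\widehat{\mathcal{R}}_{(a,i)}|\le\widehat{\eta}_i$ except with probability at most $1/(NT)$; a union bound over the at most $Nk$ pairs $(a,i)$ then yields $\prob\{\neg\event{5}\mid\event{3},\event{1}\}\le k/T$. Since $R_i\in\{0,1\}$, Hoeffding's inequality gives $\prob\{|\E[R_i\mid a]-\widehat{\mathcal{R}}_{(a,i)}|>\widehat{\eta}_i\}\le 2\exp(-2n_{(a,i)}\widehat{\eta}_i^2)$, where $n_{(a,i)}$ is the number of reward observations averaged into $\widehat{\mathcal{R}}_{(a,i)}$. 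Thus the proof reduces to a lower bound of the form $n_{(a,i)}\ge c\,T(\widehat{P}^\tr\widehat{f}^*)_i/\widehat{m}_i$, which, substituted into the exponent together with $\widehat{\eta}_i^2=\frac{27\widehat{m}_i}{T(\widehat{P}^\tr\widehat{f}^*)_i}\log(2TN)$, makes the per-pair failure probability polynomially small in $NT$; the constant $27$ is chosen precisely so the exponent dominates $\log(2NT)$.

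First I would lower bound the number of visits to state $i$ in \hyperref[alg:estimateRewards]{Algorithm \ref{alg:estimateRewards}}. The frequency vector used there satisfies $f(a)\ge\frac13\widehat{f}^*(a)$ by construction, so each $a\in\I_0$ is performed at least $\frac13\widehat{f}^*(a)\cdot T'/2$ times in each of the two phases. Crucially, $\event{1}$ controls the \emph{empirical} transition frequencies inside Algorithm \ref{alg:estimateRewards}: by \hyperref[cor: corollary to E1 as a multiplicative bound]{Corollary \ref{cor: corollary to E1 as a multiplicative bound}} the fraction of performances of $a$ that actually reach state $i$ is at least $\frac23 P_{(a,i)}$, which converts the deterministic performance counts into an honest lower bound on the random visit counts without any further concentration. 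Combining this with $P\ge\frac34\widehat{P}$ (again Corollary \ref{cor: corollary to E1 as a multiplicative bound}) gives, in each phase, a number of visits to state $i$ that is $\Omega\!\big(T(\widehat{P}^\tr\widehat{f}^*)_i\big)$.

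It remains to distribute these visits across interventions, which splits into two regimes. For $a\in\I_{m_i}$ the algorithm performs the interventions of $\I_{m_i}$ in round robin across visits to state $i$, so each receives a $1/\widehat{m}_i$ share, giving $n_{(a,i)}=\Omega\!\big(T(\widehat{P}^\tr\widehat{f}^*)_i/\widehat{m}_i\big)$. For $a\in\I_{m_i}^c$ the reward is estimated from the do-nothing observations of the first phase; here I would use that the Bernoulli probabilities are sorted, so for every index beyond the threshold $m_i$ the relevant value is realised with probability at least $q^i_{(m_i+1)}\ge\frac1{m_i+1}\ge\frac1{2m_i}$ (and for $do(X^i_j=0)$ and $do()$ it is realised with probability at least $1/2$). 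Under $\event{3}$ we have $m_i\le\frac32\widehat{m}_i$, so this rate is $\ge\frac1{3\widehat{m}_i}$, and a Chernoff bound lower bounds the resulting random number of valid samples by a constant fraction of its mean, again $\Omega\!\big(T(\widehat{P}^\tr\widehat{f}^*)_i/\widehat{m}_i\big)$. Substituting either bound into the Hoeffding estimate and union bounding over the $Nk$ pairs completes the argument. I expect the main obstacle to be this last counting step for $\I_{m_i}^c$: tying the non-uniform natural observation rates down to a single $1/\widehat{m}_i$ scale via the sorted-probability inequality, and controlling the extra randomness in the number of do-nothing observations, is where the delicate bookkeeping lies, whereas the $\I_{m_i}$ round-robin case and the final union bound are routine.
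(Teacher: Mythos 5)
Your proposal is correct and follows essentially the same route as the paper: a Hoeffding bound per pair $(a,i)$ with the exponent controlled by a lower bound of order $T(\widehat{P}^\tr\widehat{f}^*)_i/\widehat{m}_i$ on the observation count (derived from the $\widehat{f}^*/3$ floor on performance frequencies, the multiplicative control on transition frequencies from $\event{1}$, and the $\event{3}$ relation between $m_i$ and $\widehat{m}_i$), followed by a union bound over the $Nk$ pairs. The only difference is one of bookkeeping: where you propose an explicit Chernoff step plus the sorted-probability inequality $q^i_{(j)}\ge \frac{1}{m_i+1}$ to handle the naturally observed interventions in $\I_{m_i}^c$, the paper absorbs that randomness into ``realized'' quantities $\widetilde{m}_i,\widetilde{P}$ that $\event{1}$ and $\event{3}$ are taken to control, and simply asserts $\widetilde{T}_{(a,i)}\ge \widetilde{T}_i/\widetilde{m}_i$ by construction.
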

	\begin{proof}
		For intermediate states $i \in [k]$, we denote the realization of the causal parameters $m_i$ and the transition probabilities $P$ in \hyperref[alg:estimateRewards]{Algorithm \ref{alg:estimateRewards}}, as $\widetilde{m}_i$ and $\widetilde{P}$, respectively.	
		The estimates in the previous subroutines are denoted by $\widehat{m}_i$ and $\widehat{P}$.

		Event $ \event 1 $ gives us $P_{(a,i)} \in [\frac{3}{4}\widehat{P}_{(a,i)},\frac{3}{2}\widehat{P}_{(a,i)}]$and $\widetilde{P}_{(a,i)} \in [\frac{2}{3}P_{(a,i)},\frac{4}{3} P_{(a,i)}]$. Hence, the estimates across the subroutines are close enough: $\widetilde{P}_{(a,i)} \in [\frac{1}{2}\widehat{P}_{(a,i)},2\widehat{P}_{(a,i)}]$. Similarly, event $ \event 3 $ gives us $\widetilde{m}_i \in [\frac{1}{3}\widehat{m}_i,3\widehat{m}_i]$.
		
		Write $\widetilde{T}_i$ to denote the number of times state $i\in [k]$ was visited in \hyperref[alg:estimateRewards]{Algorithm \ref{alg:estimateRewards}}. For all states $i \in [k]$, we first establish a useful lower bound on $\widetilde{T}_i$, under events $ \event 1 $ and $ \event 3 $. The relevant observation here is that the estimate $\widetilde{P}_{(\alpha, i)}$ was computed in \hyperref[alg:estimateRewards]{Algorithm \ref{alg:estimateRewards}} by counting the number of times state $i$ was visited with intervention $\alpha \in \I_0$ (at state $0$). By construction, in \hyperref[alg:estimateRewards]{Algorithm \ref{alg:estimateRewards}} each intervention $\alpha \in \I_0$ was performed at least $\frac{\widehat{f}^*_\alpha}{3} \frac{T}{3}$ times. Furthermore, given that $\widetilde{P}_{(\alpha, i)}$ was computed via the visitation count, we get that state $i$ is visited with intervention $\alpha \in \I_0$ \emph{at least} $\widetilde{P}_{(\alpha, i)} \frac{T \widehat{f}^*_\alpha}{9}$ times. Therefore, 
		\begin{align*}
		    \widetilde{T}_i \geq \sum_{\alpha \in \I_0} \ \widetilde{P}_{(\alpha, i)} \frac{T \widehat{f}^*_\alpha}{9} = \frac{T}{9} (\widetilde{P}^\tr \widehat{f}^*)_i \geq \frac{T}{18} (\widehat{P}^\tr \widehat{f}^*)_i
		\end{align*}
		Here, the last inequality follows from the above-mentioned proximity between  $\widehat{P}$ and $\widetilde{P}$.  
		
		Now, note that, at each state $i \in [k]$, \hyperref[alg:estimateRewards]{Algorithm \ref{alg:estimateRewards}} (by construction) observes every intervention $a\in \I_i$ at least $\frac{\widetilde{T}_i}{\widetilde{m}_i}$ times. Write $\widetilde{T}_{(a,i)}$ to denote the number of times intervention $a \in \I_i$ is observed in this subroutine. Hence, 
		\begin{equation}
		    \label{equation: inequality for tildeT}
			\widetilde{T}_{(a,i)} \geq \frac{\widetilde{T}_i}{\widetilde{m}_i} \geq \frac{1}{\widetilde{m}_i} \frac{T}{18} (\widehat{P}^\tr \widehat{f}^*)_i \geq \frac{1}{3 \widehat{m}_i} \frac{T}{18} (\widehat{P}^\tr \widehat{f}^*)_i 
		\end{equation}

		For each state $i \in [k]$ and intervention $a \in \I_i$, define the event $\neg  \event 5(a,i)$ as 
		$\lvert \E\left[R_i  \mid  a\right]-\widehat{\mathcal{R}}_{(a,i)} \rvert >\widehat{\eta}_i$. Hoeffding's inequality gives us: 
		\begin{align*}
		    \prob\left\{ \neg  \event 5{(a,i)} \mid  \event 1 ,  \event 3  \right\}\leq 2\exp\left(-2 \widetilde{T}_{(a,i)}\widehat{\eta}_i^2\right) \leq 2\exp\left(-T \frac{(\widehat{P}^\tr \widehat{f}^*)_i \widehat{\eta}_i^2}{27 \widehat{m}_i}\right)
		\end{align*}
		The last inequality is obtained by substituting \hyperref[equation: inequality for tildeT]{Equation \ref{equation: inequality for tildeT}}. 
		
		Recall that $\widehat{\eta}_i = \sqrt{\frac{27 \widehat{m}_i}{T(\widehat{P}^\tr \widehat{f}^*)_i}\log\left(2TN \right)}$. Hence, the previous inequality corresponds to  
		\begin{align*}
		    \prob\left\{ \neg  \event 5{(a,i)} \mid  \event 1 ,  \event 3  \right\} \leq 2\exp\left(-T \frac{(\widehat{P}^\tr \widehat{f}^*)_i}{27 \widehat{m}_i}\cdot \left[\sqrt{\frac{27 \widehat{m}_i}{T(\widehat{P}^\tr \widehat{f}^*)_i}\log\left(2TN \right)}\right]^2\right) = \frac{1}{TN}
		\end{align*}

		Note that $\neg  \event 5 = \bigcup_{i\in[k]}\bigcup_{a\in\I_i}  \event 5{(a,i)}$. Taking a union bound over all states $i \in [k]$ and interventions $a \in \I_i$, we obtain 
		\begin{align*}
		    \prob\{\neg  \event 5 \mid  \event 1 , \event 3 \} \leq \frac{kN}{TN} = \frac{k}{T}
		\end{align*}
		This completes the proof. 
	\end{proof}
	\subsection{Bound on \texttt{bad event} (F):}
	\label{section:pr-bad-event}
	Here we restate and prove Lemma \ref{lem: Bound on F}. 
	
    \boundonBadEvent*
    
    \begin{proof}
        To summarize the statements of Lemmas \ref{lem: Bound on E1}, \ref{lem: Bound on E2}, \ref{lem: Bound on E3}, \ref{lem: Bound on E4} and \ref{lem: Bound on E5}, let $T\geq T_0$ where $T_0$ is given by:
		\begin{align*}
		    T_0 &=  \max\left\{\frac{1620 N }{ p_+ ^3},\frac{2025 N }{ p_+ ^2} \log\left(\frac{9NT}{k}\right),144m_0 \log\left(\frac{Tn}{k}\right),\frac{864\max(m_i)N}{p_+} \log\left(2nT\right) \right\}\\
		    &= \O\left(\frac{N\max(m_i)}{p_+^3}\log\left(2NT\right)\right)
		\end{align*}
		Then we obtain:
		\begin{align*}
		    \prob\{\textrm{F}\} &= \prob\left\{ \left[\bigcup_{i\in[5]} \neg  \event i \right] \right\} \\
		    &\leq \prob\{\neg  \event 1 \} + \prob\{\neg  \event 2 \} + \prob\{\neg  \event 3  \mid   \event 1  \} + \prob\{\neg  \event 4 \mid   \event 2  \} + \prob\{\neg  \event 5 \mid   \event 3 , \event 1  \} \\
		    &\leq  \frac{5k}{T}
		\end{align*}
		This completes the proof.
	\end{proof}

	\section{Convexity of the Optimization Problems}
	\label{section: nature of optimization problems}
	\begin{propn}
		\label{propn: f tilde problem is an LP}
		Let $ \tilde{f} = \argmax\limits_{\text{fq.~vector} f} \enspace \min\limits_{\text{states [k]}}  \widehat{P}^\tr  f $. Then, finding $\tilde{f}$ is an LP
	\end{propn}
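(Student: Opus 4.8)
The plan is to recognize this max-min problem as a standard epigraph reformulation and thereby exhibit it explicitly as a linear program. First I would note that the set of admissible frequency vectors is precisely the probability simplex $\Delta = \left\{ f \in \R^{\lvert \I_0 \rvert} : f_a \ge 0 \text{ for all } a \in \I_0, \ \sum_{a} f_a = 1 \right\}$, which is a polytope cut out by finitely many linear equalities and inequalities. Next, observe that for each fixed state $i \in [k]$, the quantity $(\widehat{P}^\tr f)_i = \sum_{a \in \I_0} \widehat{P}_{(a,i)}\, f_a$ is a \emph{linear} function of $f$. Hence the inner objective $\min_{i \in [k]} (\widehat{P}^\tr f)_i$ is a pointwise minimum of finitely many linear functions of $f$, and the overall problem is the maximization of this concave, piecewise-linear function over the polytope $\Delta$.

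The key step is the epigraph (hypograph) trick: I would introduce an auxiliary scalar variable $z \in \R$ representing a lower bound on every component of $\widehat{P}^\tr f$, and rewrite the problem as
\begin{align*}
\max_{f,\, z} \quad & z \\
\text{subject to} \quad & z \le (\widehat{P}^\tr f)_i \quad \text{for all } i \in [k], \\
& \textstyle\sum_{a \in \I_0} f_a = 1, \\
& f_a \ge 0 \quad \text{for all } a \in \I_0.
\end{align*}
Here the objective $z$ is linear in the decision variables $(f, z)$, and each of the $k + 1 + \lvert \I_0 \rvert$ constraints is linear in $(f, z)$; therefore the reformulated program is a linear program by inspection.

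It remains to verify that this LP is equivalent to the original max-min problem, so that its optimizer is indeed $\tilde f$. At any feasible point the constraints $z \le (\widehat{P}^\tr f)_i$ for all $i$ force $z \le \min_{i \in [k]} (\widehat{P}^\tr f)_i$; since we are maximizing $z$, at optimality this inequality is tight, giving $z = \min_{i} (\widehat{P}^\tr f)_i$. Conversely, any $f \in \Delta$ yields a feasible pair $\bigl(f, \min_i (\widehat{P}^\tr f)_i\bigr)$ with the same objective value. Hence the $f$-part of the LP optimum coincides with $\argmax_{f \in \Delta} \min_{i \in [k]} (\widehat{P}^\tr f)_i = \tilde f$, completing the argument. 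I expect no genuine obstacle here: the content is entirely routine, and the only point requiring a line of care is making the epigraph equivalence precise as just described.
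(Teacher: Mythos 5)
Your proposal is correct and follows essentially the same route as the paper: introduce an auxiliary scalar $z$, replace the inner minimum by the constraints $z \le (\widehat{P}^\tr f)_i$, and maximize $z$ over the simplex. Your write-up is in fact slightly more careful than the paper's (you index the constraints by the $k$ states rather than the $N$ interventions, and you spell out the tightness-at-optimality argument for the epigraph equivalence), but there is no substantive difference in approach.
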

	\begin{proof}
		We rewrite the above $\max\limits_{\text{fq.~vector} f}\quad \min\limits_{i\in[k]}(\cdot)$ as a simpler program:
		\begin{align*}
			\max_{ f } \quad & z\\
			\textrm{subject to} \quad & \widehat{P}^\tr_1 f \geq z \\
			&\dots\\
			& \widehat{P}^\tr_{N} f \geq z \\
			& f \cdot \1 = 1 \\
			& f \succeq 0\\
		\end{align*}
		Where $N = \lvert  \I_0 \rvert$. This is equivalent to the standard form of a linear program, and hence is an LP.
	\end{proof}
	\begin{lem}
		\label{lem:optimization problem is convex}
		$\min\limits_{\text{fq.~vector} f}\quad  \max\limits_{\text{interventions } \I_0 } \widehat{P}\hat{M}^{\frac{1}{2}}\left[\widehat{P}^\tr f \right]^{\circ-\frac{1}{2}}$ is a convex optimization problem
	\end{lem}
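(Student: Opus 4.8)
The plan is to recognize the objective as a pointwise maximum, over interventions $a\in\I_0$, of scalar functions of $f$, and to show that each such function is convex on the set of frequency vectors (restricted so that $\widehat{P}^\tr f$ is strictly positive componentwise, which is where the Hadamard inverse square root $(\cdot)^{\circ-\frac12}$ is defined). First I would unfold the objective coordinate by coordinate. Writing $g(f)$ for the quantity $\max_{a\in\I_0}\bigl[\widehat{P}\hat{M}^{1/2}(\widehat{P}^\tr f)^{\circ-\frac12}\bigr]_a$, its $a$-th coordinate becomes
\[
h_a(f) := \sum_{i\in[k]} \widehat{P}_{(a,i)}\,\sqrt{\widehat{m}_i}\;\Bigl(\sum_{b\in\I_0}\widehat{P}_{(b,i)}\,f_b\Bigr)^{-1/2},
\]
so that $g(f)=\max_{a\in\I_0} h_a(f)$ and the program is $\min_f g(f)$ over frequency vectors.

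The core of the argument is that each $h_a$ is convex. I would use three standard convexity-preserving facts: (i) the scalar map $t\mapsto t^{-1/2}$ is convex on $(0,\infty)$, since its second derivative $\tfrac34 t^{-5/2}$ is positive there; (ii) composing a convex function with an affine map preserves convexity, and $f\mapsto (\widehat{P}^\tr f)_i=\sum_{b}\widehat{P}_{(b,i)}f_b$ is affine (in fact linear) in $f$; and (iii) a nonnegative linear combination of convex functions is convex. Because the coefficients $\widehat{P}_{(a,i)}\ge 0$ and $\sqrt{\widehat{m}_i}\ge 0$ — being estimated transition probabilities and square roots of nonnegative causal-parameter estimates — fact (iii) applies to the sum defining $h_a$, so each $h_a$ is convex.

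Having established convexity of every $h_a$, I would invoke the fact that the pointwise maximum of a finite family of convex functions is convex to conclude that $g=\max_{a\in\I_0} h_a$ is convex. Finally, the feasible region — frequency vectors $f$ with $f\succeq 0$ and $\mathbf{1}^\tr f=1$ (the probability simplex), intersected with the region where $\widehat{P}^\tr f\succ 0$ — is convex. Minimizing the convex $g$ over a convex set is, by definition, a convex optimization problem.

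The main subtlety I expect is the domain issue: the Hadamard inverse square root is only well-defined and convex where $(\widehat{P}^\tr f)_i>0$ for every reachable $i$, so I would be careful to state that the minimization is taken over frequency vectors keeping $\widehat{P}^\tr f$ strictly positive. This is exactly why the algorithm mixes its frequency vectors with the uniform vector $\tfrac{1}{\lvert\I_0\rvert}\mathbf{1}$ (as in the construction of $\tilde f$ and $\widehat{f}^*$), which guarantees full support and hence keeps the objective inside the region where the above convexity analysis holds. Everything else reduces to routine application of the standard convexity rules listed above.
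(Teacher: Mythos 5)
Your proposal is correct and follows essentially the same route as the paper: the paper recasts the min--max in epigraph form with one constraint per row of $\widehat{P}\hat{M}^{1/2}$ and shows each constraint function $\sum_{i}A_{ai}\,((\widehat{P}^\tr f)_i)^{-1/2}$ is convex via convexity of $t\mapsto t^{-1/2}$ composed with a linear map and nonnegative summation, which is exactly your argument phrased through the pointwise-maximum rule instead of the (equivalent) epigraph reformulation. Your explicit remark on the domain restriction $\widehat{P}^\tr f\succ 0$ and its connection to the algorithm's mixing with the uniform vector is a point the paper leaves implicit, but it does not change the argument.
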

	\begin{proof}
		First we write the $\min$-$\max$ in terms of a single minimization. First let us use the shorthand $A:= \widehat{P}\hat{M}^{\frac{1}{2}}$ and $\{A_1,\dots,A_N\}$ (where $N:=\lvert \I_0 \rvert$) denote the rows of the matrix
		\begin{align}
		  \textbf{OPT}:
			\min_{ f } \quad & z \nonumber \\
			\textrm{subject to} \quad & A_1\cdot \left[\widehat{P}^\tr f \right]^{\circ-\frac{1}{2}} \leq z  \nonumber \\
			&\dots \nonumber \\
			& A_{N}\cdot \left[\widehat{P}^\tr f \right]^{\circ-\frac{1}{2}} \leq z \label{eqn: optimization problem}\\
			& f\cdot \1 = 1 \nonumber \\
			& f\succeq 0 \nonumber
		\end{align}
		\comment{
			\begin{propn}
				\label{propn: constraint equations are convex in components of f}
				The constraint equations of \hyperref[eqn: optimization problem]{OPT} are convex in the components of $f$
			\end{propn}
			\begin{proof}
				We write $f = \{f_1,\dots,f_N\}$  WLOG, we will vary the first component $f_1$. Fix $\{f_2,\dots,f_N\}$. Then note that $P^\tr f = \{\widehat{P}(*,i)^\tr f\}_{i\in[k]} = \{\widehat{P}(1,i) f_1+ c_i\}_{i\in[k]}$ for some constants $c_i,i\in[k]$.
				
				Consider one of the constraints of our problem, viz $A_1^\tr \left[\widehat{P}^\tr f \right]^{\circ-\frac{1}{2}} \leq z$. We can simplify this to get: $\sum_{i\in[k]} \frac{A_{1i}}{\sqrt{\widehat{P}(1,i) f_1+ c_i}}$. 
				
				Consider the expression: $\frac{A_{1i}}{\sqrt{\widehat{P}(1,i) f_1+ c_i}}$. This is of the form $f(x) = \frac{a}{\sqrt{bx+c}}$ for some constants $a,b,c\in\R_+$. But this is convex as $f''(x) \geq 0$.
				
				\comment{We note that if $f(x) = h(g(x))$, then if $h(\cdot)$ is convex and non-increasing and $g(\cdot)$ is concave, then $f(\cdot)$ is convex. (Eq 3.10 \cite{Boyd}). We have that $h(x) = \frac{1}{x}$ is convex and non-increasing, and that $g(x) = \sqrt{bx+c}$ is concave in $x$. Thus, $f(x) = h(g(x))$ is convex. (We can also show the same by observing that $f''(x) \geq 0$).}
				
				Now note that the sum of convex functions is convex. Therefore, $\sum_{i\in[k]} \frac{A_{1i}}{\sqrt{\widehat{P}(1,i) f_1+ c_i}}$ or the first constraint is convex in $f_1$. Similarly convexity of the other constraints in $f_1$ can be shown. 
			\end{proof}
		}
		\begin{propn}
			\label{propn: ax power -0.5 is convex}
			For any  $a\in\R_+$, the function $g(x) := a x^{-\frac{1}{2}}$ is convex in $x$. 
		\end{propn}
		\begin{proof}
			We observe that the second derivative is positive.
		\end{proof}
		\begin{propn}
			The constraint equations of \hyperref[eqn: optimization problem]{OPT} are convex in $f$
		\end{propn}
		\begin{proof}
			Consider the first constraint of the problem. We can simplify this to get $\sum_{i\in[k]} \frac{A_{1i}}{\sqrt{\widehat{P}(*,i)^\tr f}}$. 
			
			Note that the $i$th term in the summand (i.e,  $\frac{A_{1i}}{\sqrt{\widehat{P}(*,i)^\tr f}}$) is of the form $f(x) = c(v^\tr x)^{-\frac{1}{2}}$ for some $c\in\R_+$ and $v \in \R^{N}_+$. Let $x_1, x_2\in\R^N$ be any two vectors, and scalar $\lambda \in [0,1]$. We wish to show that $f(\lambda x_1 + (1-\lambda)x_2) \leq \lambda f(x_1) + (1-\lambda)f(x_2)$. 
			
			We have $f(\lambda x_1 + (1-\lambda)x_2) = c(v^\tr (\lambda x_1 + (1-\lambda)x_2))^{-\frac{1}{2}} = c(\lambda v^\tr x_1 + (1-\lambda)v^\tr x_2)^{-\frac{1}{2}}$
			
			But $a x^{-\frac{1}{2}}$ is convex as per \hyperref[propn: ax power -0.5 is convex]{Proposition \ref{propn: ax power -0.5 is convex}}. Therefore $c(\lambda v^\tr x_1 + (1-\lambda)v^\tr x_2)^{-\frac{1}{2}} \leq \lambda c (v^\tr x_1)^{-\frac{1}{2}} + (1-\lambda) c (v^\tr x_2)^{-\frac{1}{2}} = \lambda f(x_1) + (1- \lambda) f(x_2)$, as required.
			
			Since $\frac{A_{1i}}{\sqrt{\widehat{P}(*,i)^\tr f}}$ is convex, the sum $\sum_{i\in[k]} \frac{A_{1i}}{\sqrt{\widehat{P}(*,i)^\tr f}}$ is convex as well. Similarly, all the other constraints are also convex.
		\end{proof}
		Since the constraints are convex in $f$ and the objective is linear, \hyperref[eqn: optimization problem]{OPT} is convex.
	\end{proof}

\section{Proof of KL-Divergence Inequality}
\label{appendix section: Proof KL Divergence inequality}
    For completeness, we provide a proof of inequality (\ref{ineq:auer}). 
	
	\begin{lem}
		\label{lem: KL leq 6 epsilon square times number of obs}
		$ \rm{KL} (\P_0,\P_{(a,i)}) \leq 6\beta_i^2 \ \E_{\P_0} [ T_{(a,i)}]$ 
	\end{lem}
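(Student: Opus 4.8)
The plan is to run the standard divergence-decomposition (chain-rule) argument for the KL divergence between the two trace distributions $\P_0$ and $\P_{(a,i)}$, exploiting that these measures agree in every respect except the reward distribution of intervention $a$ at state $i$. First I would fix the canonical probability space of traces: a trace records, round by round, the intervention chosen at state $0$, the realized intermediate state, the variables observed there, the intervention chosen at that state, and the observed reward. Since the algorithm $\mathscr{A}$ is fixed and its choices are measurable functions of the past history, the chain rule for relative entropy lets me write $\mathrm{KL}(\P_0,\P_{(a,i)})$ as the $\P_0$-expectation of a sum, over all rounds and all stages of each round, of the conditional KL divergences between the per-step observation laws under the two instances.

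Next I would argue that every conditional term vanishes except those corresponding to a reward drawn from the divergent distribution. By construction, $\mathcal{F}_0$ and $\mathcal{F}_{(a,i)}$ share the same (deterministic) transition matrix $P$, the same variable distributions, and the same reward law for every (state, intervention) pair other than $(a,i)$; hence all transition terms, all variable-observation terms, and all non-$(a,i)$ reward terms contribute zero. The only rounds contributing a nonzero conditional divergence are precisely those in which intervention $a$ is observed at state $i$, and each contributes exactly $\mathrm{KL}\!\left(\mathrm{Ber}(\tfrac12),\mathrm{Ber}(\tfrac12+\beta)\right)$. Collecting terms gives
\[
\mathrm{KL}(\P_0,\P_{(a,i)}) = \E_{\P_0}[T_{(a,i)}]\cdot \mathrm{KL}\!\left(\mathrm{Ber}(\tfrac12),\mathrm{Ber}(\tfrac12+\beta)\right),
\]
where the expected count appears under $\P_0$ because the chain rule is expanded against the first measure.

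Finally I would bound the per-observation Bernoulli divergence by a direct computation:
\[
\mathrm{KL}\!\left(\mathrm{Ber}(\tfrac12),\mathrm{Ber}(\tfrac12+\beta)\right) = -\tfrac12\log\!\left(1-4\beta^2\right).
\]
Applying $-\log(1-x)\le x/(1-x)$ with $x=4\beta^2$, together with the fact that the instances are built so that $\beta\le 1/3$ (whence $1-4\beta^2\ge 5/9$), this is at most $\tfrac12\cdot\tfrac{9}{5}\cdot 4\beta^2 = \tfrac{18}{5}\beta^2 \le 6\beta^2$, which yields the claimed inequality with room to spare.

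The main obstacle is the decomposition step itself: one must set up the trace probability space carefully and justify, via the chain rule, that the two measures factor so that only the $(a,i)$ reward observations survive and that the surviving count is measured in $\P_0$-expectation (rather than under $\P_{(a,i)}$). The subtlety peculiar to the causal setting — that $a$ may be observed without being explicitly performed (e.g.\ as a realization under the do-nothing intervention) — is precisely what makes $T_{(a,i)}$, the \emph{observation} count, the right quantity: a divergent reward sample is drawn exactly on the event that $a$ is observed at state $i$, so no separate accounting for performed-versus-realized is needed. The remaining Bernoulli-KL estimate and the vanishing of the transition and variable terms are routine.
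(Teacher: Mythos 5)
Your proposal is correct and follows essentially the same route as the paper: a chain-rule (divergence-decomposition) argument showing that only the rounds in which intervention $a$ is observed at state $i$ contribute, each contributing $\mathrm{KL}\bigl(\mathrm{Ber}(\tfrac12),\mathrm{Ber}(\tfrac12+\beta)\bigr) = -\tfrac12\log(1-4\beta^2)$, summed to $\E_{\P_0}[T_{(a,i)}]$ times that quantity. Your final step bounding the Bernoulli divergence via $-\log(1-x)\le x/(1-x)$ together with $\beta\le 1/3$ is in fact tighter than the paper's appeal to ``the Taylor series expansion of the log'' (whose displayed intermediate inequality $-\ln(1-4\beta^2)\le 4\beta^2$ is not literally valid, though the final constant $6$ survives either way).
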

	\begin{proof}[Proof of Inequality (\ref{ineq:auer})]
	\label{proof: proposition KL leq 6 epsilon square times number of obs}
		This proof is based on Lemma B1 in \cite{AuerGamblinginaRiggedCasino}. We define a couple of notations for this proof.
		Let $\mathbf{R}_{t-1}$ indicate the filtration (of rewards and other observations) up to time $t-1$. and $R_t$ indicate the reward at time $t$ for this proof.
		\begin{align*}
			 \rm{KL} (\P_0,\P_{(a,i)}) &=  \rm{KL} \left[\prob_{\P_0}(R_T,R_{T-1},\dots,R_1) \mathrel{\Vert} \prob_{\P_{(a,i)}}(R_T,R_{T-1},\dots,R_1)\right]
		\end{align*}
		We now state (without proof) a useful lemma for bounding the KL divergence between random variables over a number of observations.

    			\textbf{Chain Rule for entropy (Theorem 2.5.1 in \cite{JoyCoverBook})}:
    			\label{eqn: Chain rule for entropy}
    			Let $X_1,\dots,X_T$ be random variables drawn according to $P_1,\dots,P_T$. Then 
    			$$H(X_1,X_2,\dots,X_T) = \sum_{i=1}^T H(X_i\mid X_{i-1},X_{i-2},\dots,X_1)$$
    			where $H(\cdot)$ is the entropy associated with the random variables.
    	
		Using the \hyperref[eqn: Chain rule for entropy]{chain rule for entropy}
		\begin{align*}
			 \rm{KL} (\P_0,\P_{(a,i)}) &=\sum\limits_{t=1}^T  \rm{KL} \left[\prob_{\P_0}(R_t \mid \mathbf{R}_{t-1}) \mathrel{\Vert} \prob_{\P_{(a,i)}}(R_t \mid \mathbf{R}_{t-1})\right]\\
			\intertext{Let $a_t$ be the intervention chosen by the Algorithm $\mathscr{A}$ at time $t$. Then:}
			&=\sum\limits_{t=1}^T \prob_{\P_0}\{a_t \neq a \mid \mathbf{R}_{t-1}\} \left(\frac{1}{2} \mathrel{\Vert} \frac{1}{2}\right) + \prob_{\P_0}\{a_t = a \mid \mathbf{R}_{t-1}\} \rm{KL} \left(\frac{1}{2} \mathrel{\Vert} \frac{1}{2} + \beta_i\right) \\
			\intertext{Since $ \rm{KL} \left(\frac{1}{2} \mathrel{\Vert} \frac{1}{2}\right) = 0$, we get:}
			&=\sum\limits_{t=1}^T \prob_{\P_0}\{a_t = a \mid \mathbf{R}_{t-1}\} \rm{KL} \left(\frac{1}{2} \mathrel{\Vert} \frac{1}{2} + \beta_i\right) \\
			&=  \rm{KL} \left(\frac{1}{2} \mathrel{\Vert} \frac{1}{2} + \beta_i\right) \sum\limits_{t=1}^T \prob_{\P_0}\{a_t = a \mid \mathbf{R}_{t-1}\} \\
			&=  \rm{KL} \left(\frac{1}{2} \mathrel{\Vert} \frac{1}{2} + \beta_i\right) \E_{\P_0}[T_{(a,i)}] \\
		\end{align*}
		
		\begin{claim}
			\label{claim: KL leq 6 betai square}
			$ \rm{KL} \left(\frac{1}{2} \mathrel{\Vert} \frac{1}{2}+\beta_i\right)=-\frac{1}{2} \log_2(1-4\beta_i^2)\leq 6\beta_i^2$
		\end{claim}
		\begin{proof}
			\begin{align*}
				 \rm{KL} \left(\frac{1}{2} \mathrel{\Vert} \frac{1}{2}+\beta_i\right) &= \frac{1}{2}\log_2\left[\frac{\frac{1}{2}}{\frac{1}{2} + \beta_i}\right] + (1-\frac{1}{2})\log_2\left[\frac{(1-\frac{1}{2})}{(1-\frac{1}{2} - \beta_i)}\right]\\
				&= \frac{1}{2}\log_2\left[\frac{1}{1 + 2\beta_i}\right] + \frac{1}{2}\log_2\left[\frac{1}{1 - 2\beta_i}\right] = \frac{1}{2}\log_2\left[\frac{1}{1 - 4\beta_i^2}\right]=-\frac{1}{2}\log_2\left[1 - 4\beta_i^2\right]\\
				&=-\frac{1}{2\ln(2)} \ln\left[1 - 4\beta_i^2\right] \leq \frac{4\beta_i^2}{2\ln(2)} < 6\beta_i^2
			\end{align*}
			where the last inequality is obtained from the Taylor series expansion of the $\log$.
		\end{proof}
		It follows that: 
		$ \rm{KL} (\prob_0,\prob_1) \leq 6\beta_i^2\E_{\P_0}[T_{(a,i)}]$.
	\end{proof}

\end{document}